\newtheorem{theorem}{Theorem}[section]
\newtheorem{remark}{Remark}[section]
\newtheorem{corollary}{Corollary}[section]
\newtheorem{lemma}{Lemma}[section]
\newtheorem{definition}{Definition}[section]
\newcommand\blfootnote[1]{%
	\begingroup
	\renewcommand\thefootnote{}\footnote{#1}%
	\addtocounter{footnote}{-1}%
	\endgroup
}
\newcommand\bbR{\ensuremath{\mathbb{R}}} % Real numbers
\newcommand\bbN{\ensuremath{\mathbb{N}}} % Real numbers
\newcommand\bbI{\ensuremath{\mathbb{I}}}
\newcommand\mY{\ensuremath{\mathcal{Y}}}
\renewcommand{\b}[1]{\ensuremath{\overline{#1}}}
\renewcommand{\v}[1]{\ensuremath{\boldsymbol{#1}}}
\newcommand{\floor}[1]{\lfloor #1 \rfloor}
\newcommand{\ceil}[1]{\lceil #1 \rceil}
\newcommand{\ul}[1]{\ensuremath{\underline{#1}}}
\title{Designing Optimal Binary Rating Systems}
\author{
	Nikhil Garg\\
	Stanford University\\
	\texttt{nkgarg@stanford.edu} \\
	\and
	Ramesh Johari\\
	Stanford University\\
	\texttt{rjohari@stanford.edu} \\
}
\begin{document}

\maketitle

\begin{abstract}
Modern online platforms rely on effective rating systems to learn about items.  We consider the optimal design of rating systems that collect binary feedback after transactions.  We make three contributions.  First, we formalize the performance of a rating system as the speed with which it recovers the true underlying ranking on items (in a large deviations sense), accounting for both items' underlying match rates and the platform's preferences.  Second, we provide an efficient algorithm to compute the binary feedback system that yields the highest such performance.  Finally, we show how this theoretical perspective can be used to empirically design an implementable, approximately optimal rating system, and validate our approach using real-world experimental data collected on Amazon Mechanical Turk.

\blfootnote{We thank Michael Bernstein and participants of the Market Design workshop at EC'18. This work was funded in part by the Stanford Cyber Initiative, the National Science Foundation Graduate Research Fellowship grant DGE-114747, and the Office of Naval Research grant N00014-15-1-2786.
}
\end{abstract}

\section{Introduction}

Rating and ranking systems are everywhere, from online marketplaces (e.g., 5 star systems where buyers and sellers rate each other) to video platforms (e.g., thumbs up/down systems on YouTube and Netflix). However, they are uninformative in practice~\citep{nosko_limits_2015}. One recurring pattern is that ratings \textit{binarize} -- most raters only use the extreme choices on the rating scale, and the vast majority of ratings receive the best possible rating. For example, 75\% of reviews on Airbnb receive a perfect rating of 5 stars~\citep{fradkin_determinants_2017}. Furthermore, several platforms have adopted a binary rating system, in which a user rates her experience as either positive or negative. Given the prevalence of binary feedback (either de facto or by design), in this work we investigate the optimal \textit{design} of such binary rating systems so that the platform can learn as fast as possible about the items being rated. 

The rating pipeline often works as follows: A buyer enters a platform and \textit{matches} with an item (e.g. selects a video on Youtube, is paired with a driver on Uber, or selects a home on AirBnB). She has an experience (e.g. a view, ride, or stay). Then, the platform asks her to \textit{rate} her experience, i.e. it asks her a question. In a binary system, she indicates whether her experience was positive or negative. She then leaves. The platform uses the ratings it has received to score the quality of items, potentially showing such scores to future buyers.

By \textit{designing} such a system, we mean: the platform can influence how the buyer rates -- how likely she is to give a positive rating, conditional on the quality of her experience. It can do so by asking her different questions, e.g. ``Was this experience above average'' or ``Was this experience the worst you've ever had?''. Different questions shift the probabilities at which items of various qualities receive positive ratings.

\textbf{Our first question is}: \textit{what is the structure of optimal binary feedback?} A rating system in which every buyer gives positive ratings after each match, independent of item quality, will fail to learn anything about the items. Clearly, better items should be more likely to receive positive ratings than worse ones. But how much more likely? %We answer this question. 

Informally, suppose we have a set of items that match with buyers over time (at potentially differing rates), and we wish to rank the items by their true quality $\theta_i\in[0,1]$. The platform cannot observe $\theta_i$, however. Rather, in our model, after each match, an item with quality $\theta_i$ receives a positive rating with probability $\beta(\theta_i)$, and negative otherwise. In other words, the platform observes, for each item $i$, a sequence of ratings that are each Bernoulli$(\beta(\theta_i))$. Such ratings are the only knowledge the platform has about items. The platform ranks the items according to the percentage of its ratings (samples) that were positive. The function $\beta:[0,1]\mapsto[0,1]$ affects how quickly the platform learns the true ranking, and it prefers to maximize the learning rate. We show how to calculate an optimal $\beta$. 

As an example, consider three items with qualities $\theta_a > \theta_b > \theta_c$, and $\beta$ such that  $\beta(\theta_a) = 0.5$ and $\beta(\theta_c) = 0.1$, i.e. item $a$ gets positive ratings after $50\%$ of its matches, and item $c$ after $10\%$ of its matches. It is unclear what $\beta(\theta_b)$ should be. Trivially, $.1 < \beta(\theta_b) < .5$. Otherwise, even with infinitely many ratings the items will be mis-ranked.

But can we be more precise? If $\beta(\theta_b) = .49$, it will take many ratings of both items $a$ and $b$ to learn that $\theta_a > \theta_b$, but only a few from $c$ to learn that $\theta_c < \theta_b$. That may be good if the platform wants to identify the worst item, but not if it wants to identify the best. It may also be fine if items $a$ and $b$ match much more often with buyers than item $c$. Clearly, the optimal value for $\beta(\theta_b)$ is objective and context dependent. Of course, the problem becomes more challenging with more items $i$ for which $\beta(\theta_i)$ must be chosen. Lastly, in this example, one might intuitively think $\beta(\theta_b) = 0.3$ is optimal by symmetry when the items matter equally and matching rates are identical. This guess is incorrect. The optimal is $\beta(\theta_b) \approx 0.28$, due to the nature of binomial variance.  

In this work, we first formalize the above problem and show how to find an optimal $\beta(\theta)$, jointly for a set of items $[0,1]$. $\beta$ changes with the platform's objective and underlying item matching rates. Jumping ahead, Figure~\ref{fig:betadifferent} shows optimal $\beta$ in various settings under our model. For a platform that wants to find the worst sellers, for example, the top half of items should each get positive ratings at least $80+\%$ of the time; it is more important for the bottom half of items to be separated from one another, i.e. get positive ratings at differing percentages. 

Once we have calculated the optimal rating function $\beta$ (given context on the platform goals and matching rates), what should we do with it?% Remember our original goal of \textit{designing} a rating system. 

%-- it cannot force users to only give positive ratings to the best items , and $1-\beta(\theta)$ will give it a negative rating

\textbf{Our second question is}: \textit{How does a platform build a rating system such that buyers behave near-optimally, i.e. according to a calculated $\beta$?} The platform cannot directly control buyer rating behavior. Rather, it has to ask \textit{questions} such that, for each item quality $\theta$, a fraction $\beta(\theta)$ of raters will give the item a positive rating. For example, by asking, ``Is this the best experience you've had,'' the platform would induce behavior such that $\beta(\theta)$ is small for most $\theta$. Most platforms today ask vague questions (e.g. thumbs up/down), and items mostly get positive ratings. We show this is highly suboptimal for ranking items quickly.

Our main contributions and paper outline are:

{\bf Rating system design as information maximization.}  In Sections~\ref{sec:modelbinary}-\ref{sec:asymptotic}, we formulate the design of rating systems as an information maximization problem.  In particular, a good rating system  recovers the true ranking over items, and converge quickly in the number of ratings.

{\bf Computing an optimal rating feedback function $\beta$.}  In Section~\ref{sec:discreteopt}, we develop an efficient algorithm that calculates the optimal rating function $\beta$, which depends on matching rates and the platform objective.  The optimal $\beta$ provides quantitative insights and principled comparisons between designs.

{\bf Real-world system design}.  In Section~\ref{sec:applic_insights}, we show how a platform can use a simple experiment and existing data to empirically design a near-optimal rating system, and to audit the current system.  In Section~\ref{sec:mturk}, we demonstrate the value of this approach through an experiment on Mechanical Turk.

%\begin{enumerate}[leftmargin=*]
%\item {\bf Rating system design as information maximization.}  In Sections~\ref{sec:modelbinary}-\ref{sec:asymptotic}, we formulate the design of rating systems as an information maximization problem.  In particular, a good rating system  recovers the true ranking over items, and converge quickly in the number of ratings.
%\item {\bf Computing an optimal rating feedback function $\beta$.}  In Section~\ref{sec:discreteopt}, we develop an efficient algorithm that calculates the optimal rating feedback function $\beta$, which depends on item search rates and the platform's objective.  Computing the optimal $\beta$ enables quantitative insights and principled comparisons between designs.
%\item {\bf Real-world system design}.  In Section~\ref{sec:applic_insights}, we show how a platform can use a simple experiment and existing data to empirically design a rating system that performs near-optimally, and to audit the current system's performance.  In Section~\ref{sec:mturk}, we demonstrate the value of this approach through an experiment on Mechanical Turk.
%\end{enumerate}

\section{Related work}
\label{sec:relatedwork}

Many empirical and model-based works document and tackle challenges in existing rating systems~\citep{nosko_limits_2015,filippas_reputation_2017,zervas_first_2015,hu_overcoming_2009,cabral_dynamics_2010,tadelis_reputation_2016,cook_ubers_2015,rajaraman_five_2009,fradkin_determinants_2017,immorlica_emergence_2010,gaikwad_boomerang:_2016,bolton_engineering_2013}. To our knowledge, we are the first to formalize a rating system design problem and then show how one can use empirical data to optimize such systems. In a related paper \citep{garg_designing_2018}, we test behavioral insights using an experiment on a large online labor platform and develop a related design problem for a multiple-choice system, which proves far less tractable. %In fact, we use this work's results to approximate a solution in that setting. 

Other works also optimize platform learning rates~\citep{ifrach_bayesian_2017,acemoglu_fast_2017,che_optimal_2015,johari_matching_2017,besbes_information_2018,papanastasiou_crowdsourcing_2017}. When prescriptive, they modify \textit{which matches occur}, while we view the matching process as given and modify \textit{the rating system}. The solutions are complementary. 

Many bandits works also seek to rank items from a sequence of observations~\citep{radlinski_learning_2008,yue_interactively_2009,katariya_dcm_2016,maes_automatic_2011}. Our problem is the \textit{inverse} of the bandit setting: given an arm-pulling policy, we design each arm's feedback.\footnote{Note that a {\em rating} is not the same as a {\em reward}; buyers often give positive ratings after bad experiences.} Our specific theoretical framework is similar to that of~\citet{glynn_large_2004}, who optimize a large deviations rate to derive an arm-pulling policy for best arm identification.

%Also related are works on aggregating binary feedback or pairwise comparisons ~\cite{borda_memoire_1781,condorcet_essai_1785,elo_rating_1978,herbrich_trueskill:_2006,dangauthier_trueskill_2008,negahban_iterative_2012, negahban_rank_2016, rajkumar_statistical_2014, duchi_consistency_2010,dagan_twenty_2017,duchi_consistency_2010}. Much of this literature asks: given a set of (usually random) comparisons between items, what aggregation technique should be used to produce a ranking? Our work, on the other hand, adopts a simple aggregation mechanism (the percent of positive ratings) and can potentially be used to provide insight on \textit{which} comparisons should be sampled in a pairwise comparison system, since the comparison point influences the probability an item receives a positive rating.

The ``twenty questions'' interpretation of Shannon entropy~\citep{cover_elements_2012,dagan_twenty_2017} seeks questions that can identify an item from its distribution.~\citet{dagan_twenty_2017} show how to almost match the performance of Huffman codes with only comparison and equality questions. Our work differs in two key respects: first, we seek to rank a set of items as opposed to identifying a single item; second, we consider non-adaptive policies (i.e. the platform cannot change its rating form in response to what it knows about an item already). 
\section{Model and optimization}
\label{sec:model_algo}

We now formalize our model and show how to optimize the rating function to maximize the learning rate. We focus on finding an optimal $\beta:[0,1] \mapsto [0,1]$, a map from item quality $\theta$ to the probability it should receive a positive rating. This section requires no data: we characterize the \textit{optimal} system.

\subsection{Model and problem specification}
\label{sec:modelbinary}

Our model is constructed to emphasize the rating system's learning rate. Time is discrete ($k = 0, 1, 2, \ldots$). Informally: there is a set of items. Each time step, buyers match with the items and leave a rating according to $\beta(\theta)$. The platform records the ratings and ranks the items. Formally:

{\bf Items}. The system consists of a set $[0,1]$ of items, where each item is associated with a unique (but unknown) quality $\theta \in [0,1]$; i.e., the system consists of a {\em continuum} of a unit mass of items whose unknown qualities are uniform\footnote{Any distribution can be handled by considering $\theta$ to be the item's \textit{quantile} rather than its absolute quality.} in $[0,1]$. Below, we \textit{discretize} the continuous quality space $[0,1]$ into $M$ types, to calculate a stepwise increasing $\beta$. We will make clear why we introduce a continuum but then discretize.% into $M$ types.% 

{\bf Matching with buyers}. Items accumulate ratings over time by matching with buyers.   We assume the existence of a nondecreasing {\em match function} $g(\theta)$, where item $\theta$ receives $n_k(\theta) = \lfloor k g(\theta) \rfloor$ matches, and thus ratings, up to time $k$. In other words, item $\theta$ is matched approximately every $\frac{1}{g(\theta)}$ time steps. $g(\theta)\leq 1$ and bounded away from $0$, i.e. $\exists c>0$: $g(\theta)>c$.  This accumulation captures the feature that better items may be more likely to match. % (We discuss the role of $g$ further below.)

{\bf Ratings}. The key quantity for our subsequent analysis is the {\em probability of a positive rating} for each $\theta$, $\beta(\theta) \triangleq \text{Pr}(\text{positive rating} | \theta)$.  
Let $y_\ell(\theta)\sim\text{Bernoulli}(\beta(\theta))$ be the rating an item of quality $\theta$ receives at the $\ell$th time it matches. 

{\bf Aggregating ratings and ranking sellers}. These ratings are aggregated into a {\em reputation score}, $x_k(\theta)$, at each time $k$. The score is the fraction of positive ratings received up to time $k$: $\label{eq:aggscore_bin}
x_k(\theta) \triangleq  \frac{1}{n_k(\theta)} \sum_{\ell = 0}^{n_k(\theta)} y_\ell(\theta)$ with $x_0(\theta) \triangleq 0$ for all $\theta$. Thus, $x_k(\theta) \sim \frac{1}{n_k(\theta)}\text{Binomial}(\beta(\theta), n_k(\theta))$.

{\bf System state}. The state of the system is given by a joint distribution $\mu_k(\Theta, X)$, which gives the \textit{mass} of items of quality $\theta \in \Theta \subset [0,1]$ with aggregate score $x_k(\theta) \in X \subset[0,1]$ at time $k$.  Because our model is a continuum, the evolution of the system state $\mu_k$ follows a deterministic dynamical system.

We have described these dynamics at the level of individual items; however, such statements should be interpreted as describing the evolution of the joint distribution $\mu_k$. The state update for $\mu_k$ is determined by the \textit{mass} of items that match and the \textit{distributions} of their ratings. A formal description of the state evolution is in Appendix Section~\ref{sec:appformalspec}.

%We primarily investigate how $\beta$ influences the quality of the observed ranking.

{\bf Platform objective}.  The platform wishes to rank the items accurately. Given $\beta$ and $\theta_1 > \theta_2$, define:% $P_k(\theta_1, \theta_2 |\beta)$:
\iftoggle{singlecolumnequations}
{\begin{align}
	\label{eq:Pk}
	P_k(\theta_1, \theta_2 | \beta) =& \mu_k\big(x_k(\theta_1) > x_k(\theta_2)|\theta_1, \theta_2\big) - \mu_k\big(x_k(\theta_1) < x_k(\theta_2)|\theta_1, \theta_2\big)
	\end{align}
}%
{\begin{align}
	\label{eq:Pk}
	P_k(\theta_1, \theta_2 | \beta) =& \mu_k\big(x_k(\theta_1) > x_k(\theta_2)|\theta_1, \theta_2\big)\nonumber \\&- \mu_k\big(x_k(\theta_1) < x_k(\theta_2)|\theta_1, \theta_2\big)
\end{align}
}
This expression captures observed score ranking's accuracy. When $\theta_1 > \theta_2$ but $x_k(\theta_1) < x_k(\theta_2)$, the ranking mistakenly orders $\theta_1$ below $\theta_2$.  A good system has large $P_k(\theta_1, \theta_2 | \beta)$. Integrating across items creates the following objective for each time $k$:
\begin{equation}
\label{eq:objective}
W_k = \int_{\theta_1 > \theta_2} w(\theta_1, \theta_2) P_k(\theta_1, \theta_2 | \beta) d\theta_1 d\theta_2
\end{equation}
Weight function $w(\theta_1, \theta_2) > 0$ indicates how much the platform cares about not mistaking a quality $\theta_1$ item with a quality $\theta_2$ item. We consider scaled $w$ such that $\int_{\theta_1 > \theta_2} w(\theta_1, \theta_2) d\theta_1 d\theta_2 = 1$.

Our first question then becomes: {\em What $\beta$ yields the highest value of $W_k$?}  As discussed above, the platform influences $\beta$ through the design of its rating system.  The optimal choice of $\beta$ sets the benchmark.

{\bf Discussion}. \textit{Objective function}. The specification \eqref{eq:objective} of the objective is quite rich.   It contains scaled versions of Kendall's $\tau$ (with $w(\theta_1, \theta_2) = 1$ for all $\theta_1, \theta_2$) and  Spearman's $\rho$ (with $w(\theta_1, \theta_2) = \theta_1 - \theta_2$) rank correlations. $w$ allows the platform to encode, for example, that it cares more about correctly ranking just the very best, very worst, or items at both extremes.\footnote{We use $\theta_1\theta_2(\theta_1 - \theta_2)$, $(1 - \theta_1)(1 - \theta_2)(\theta_1 - \theta_2)$, and $(\frac{1}{2} -\theta_1)^2(\frac{1}{2} -\theta_2)^2(\theta_1 - \theta_2)$ as examples. } \citet{tarsitano_comparing_2009} and \citet{da_costa_limit_2006} discuss other well-studied examples.

\textit{Relationship between model components.} Qualitatively, $\beta$ affects $W_k$ as follows, as previewed in the introduction: when $\beta(\theta_1) \approx \beta(\theta_2)$, then $x_k(\theta_1) \approx x_k(\theta_2)$, and so $P_k(\theta_1, \theta_2|\beta)$ is small (errors are common). A good design thus would have large $\beta(\theta_1) - \beta(\theta_2)$ for $\theta_1>\theta_2$ where $w(\theta_1,\theta_2)$ is large. Matching function $g$ also affects $P_k$ and thus $W_k$: when $g(\theta)$ is large, more ratings are sampled from item of quality $\theta$, i.e. $n_k(\theta)$ is higher, and so $x_k(\theta)$ is more closely concentrated around its mean $\beta(\theta)$.  Thus, $P_k(\theta, \theta'| \beta)$ increases (for all $\theta'$) with $g(\theta)$. A good design of $\beta$ thus considers both $w$ and $g$. 

\textit{{Matching}}. As noted above, we assume items receive a non-decreasing number of ratings based on their true quality, through matching function $g(\theta)$.  This is a reasonable approximation for our analysis, where we focus on the asymptotic rate of convergence of the ranking based on  to the true ranking, as the number of ratings increases.  In practice, items will be more likely to match when they have a higher {\em observed} aggregate score. Similarly, our model makes the stylized choice that all items have the same age. In reality, items have different ages in platforms. %We discuss these choices in Section \ref{sec:mturk}.% and Appendix Section~\ref{sec:mturksimulationdesc}.

\textit{Non-response}. In practice, many buyers choose not to rate items, which our model does not capture. One possible approach is to treat non-response as a bad experience, which yields more information in the work of~\citet{nosko_limits_2015}. Solutions to non-response is an important area of work.

\begin{figure*}[t]
	\centering
	\begin{subfigure}[t]{.5\linewidth}
		\centering
		\includegraphics[width=.8\linewidth]{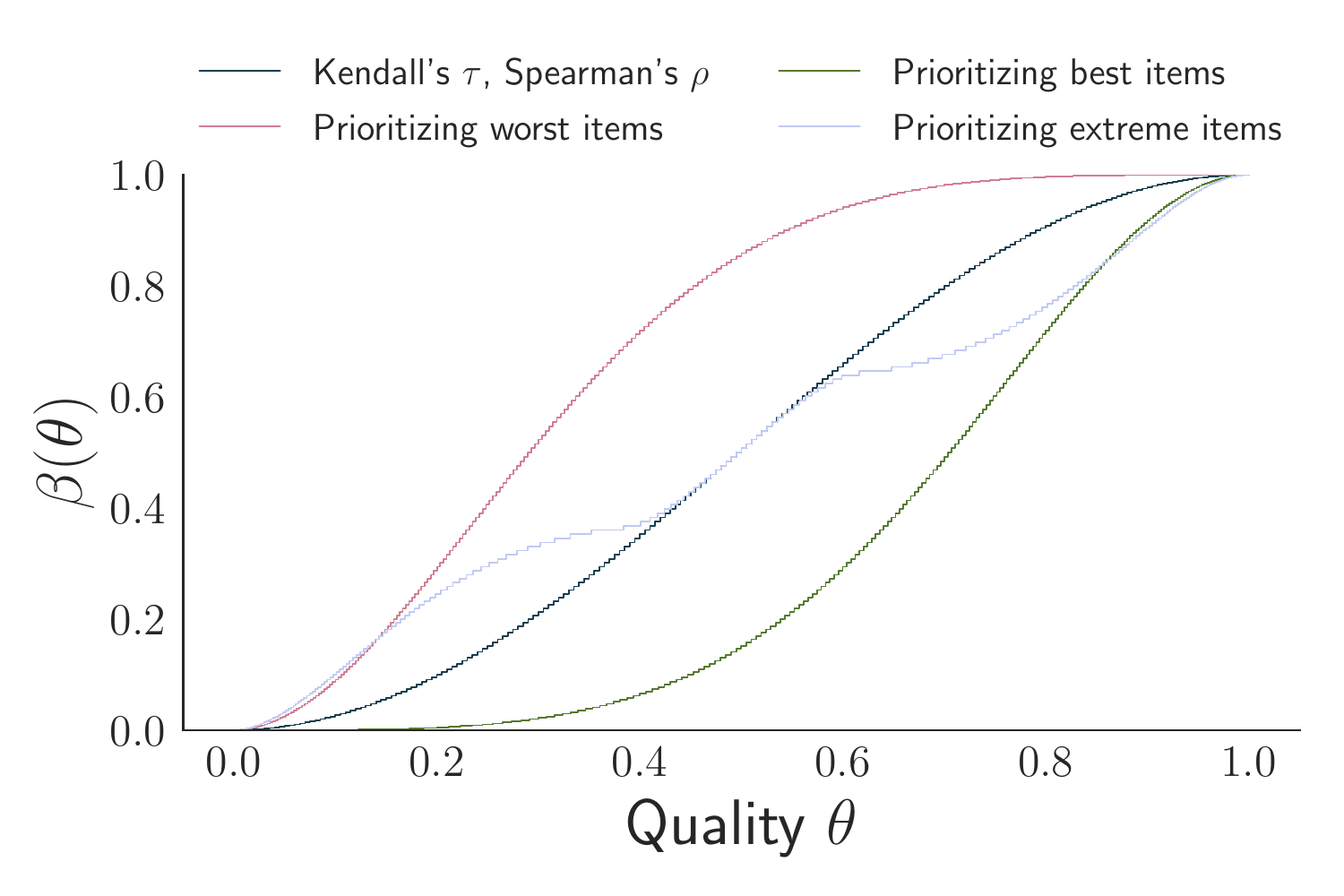}
		\caption{Fix $g=1$, with various objective function weights $w$}
		\label{fig:betadifferentweight}
	\end{subfigure}\hfill
	\begin{subfigure}[t]{.5\linewidth}
		\centering
		\includegraphics[width=.8\linewidth]{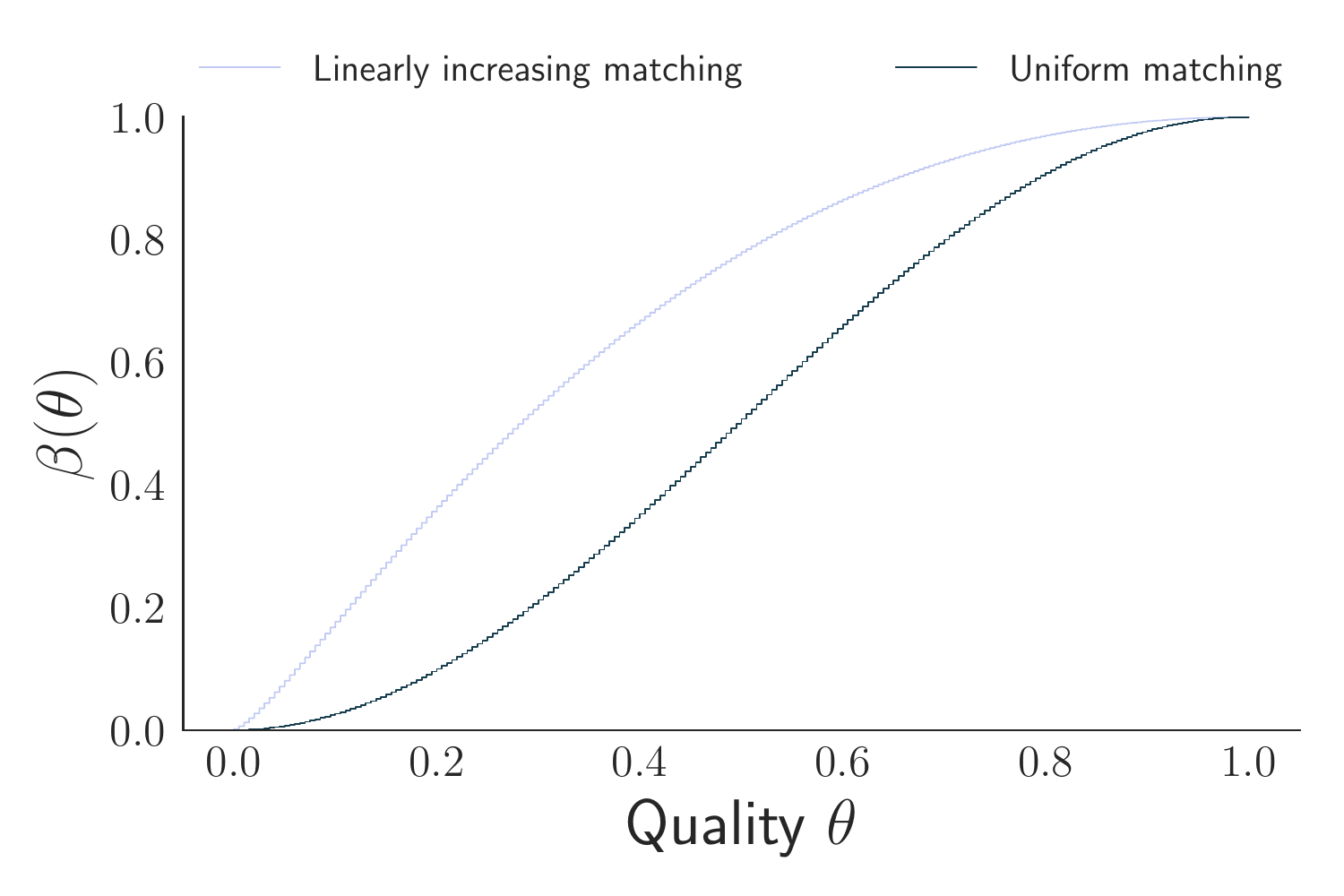}
		\caption{Fix $w = 1$. Vary matching, $g = 1$, and $g = \frac{1 + 10\theta}{11}$}
		\label{fig:betadifferentmatching}
	\end{subfigure}\hfill
	\caption{Optimal $\beta$ (with $M=200$) with various objective weight functions $w$ and matching rates $g$.} 
	\label{fig:betadifferent}
\end{figure*}

\subsection{Large deviations \& discretization}
\label{sec:asymptotic}

Recall the question: {\em What $\beta$ yields the highest value of $W_k$?}. We now refine objective $W_k$ and constrain $\beta$ to form a non-degenerate, feasible optimization task.

\textbf{Large deviation rate function}. $W_k$ is not one objective: it has a different value per time $k$, and no single $\beta$ simultaneously optimizes $W_k$ for all $k$.\footnote{For example, consider $\beta$ such that the worst half of items never receive a positive rating and the rest always do. It would perform comparatively well for a small number of ratings $k$, as it quickly distinguishes the best from the worst items. However it would never distinguish items within the same half. Some $\beta'$ may make more mistakes initially but perform better at larger $k$. }  %Further, for any $\theta_1 > \theta_2$, and {\em any} $\beta$ such that $\beta(\theta_1) > \beta(\theta_2)$, we have $\lim_{k \to \infty} P_k(\theta_1, \theta_2 | \beta) = 1$.  In other words, any such $\beta$ will eventually distinguish $\theta_1$ from $\theta_2$.
Considering \textit{asymptotic} performance is also insufficient: when $\beta$ is strictly increasing in $\theta$, $\lim_{k \to \infty} P_k(\theta_1, \theta_2 | \beta) = 1\,\,\,\forall \theta_1,\theta_2$ by the law of large numbers. Thus, $W \triangleq \lim_{k \to \infty} {W}_k = 1$, and {\em any} such $\beta$ is asymptotically optimal. %as $k \to \infty$

For this reason, we consider maximization of the {\em rate} at which ${W}_k$ converges, i.e., how \textit{fast} the estimated item ranking converges to the true item ranking.  We use a large deviations approach~\citep{dembo_large_2010} to quantify this convergence rate.  Formally, given sequence $Y_k \leq \lim_{k\to\infty} Y_k = Y$, the {\em large deviations rate of convergence} is $-\lim_{k\to\infty} \frac{1}{k} \log (Y - Y_k) = c$. If $c$ exists then $Y_k$ approaches $Y$ exponentially fast: $Y - Y_k = e^{-kc + o(k)}$.

Then, we wish to {\em choose $\beta$ to maximize $W_k$'s large deviations rate}, $r = - \lim_{k \to \infty} \frac{1}{k} \log (W- {W}_k).$

\textbf{Discretizing $\beta$}. Unfortunately, even this problem is degenerate if we consider continuous $\beta$: for any $\beta$ that is not piecewise constant, the large deviations rate of convergence is zero, i.e., convergence of $W_k$ to its limit is only polynomially fast, and characterizing the dependence of this convergence rate on $\beta$ is intractable. Thus, the rate of convergence for $W_k$ is not  a satisfactory objective with continuous $\beta$.

We make progress by {\em discretizing} $\beta$; in particular, we restrict attention to optimization over stepwise increasing $\beta$ functions.\footnote{Note that, even for purely computational reasons, calculating $\beta$ requires discretization.}  Among stepwise increasing $\beta$, the large deviations rate of $W_k$ to its limiting value $W$ can be shown to be nondegenerate, i.e. $\exists c>0 $ s.t. $W - W_k = e^{-kc + o(k)}$.  (See Lemma \ref{lem:nolargedeviationsrate} in the Appendix for further discussion.)

Notationally, we will calculate an optimal stepwise increasing $\beta$ with $M$ levels, i.e. there are $M$ intervals $S_i \subset [0,1]$ and levels $t_i$ such that when $\theta_1, \theta_2 \in S_i$, then $\beta(\theta_1) = \beta(\theta_2) \triangleq t_i$. The challenge is calculating an optimal $\v{S^*} = \{S_i\}$ and $\v{t^*} = \{t_i\}$. 

The physical interpretation is that we group the items into $M$ subsets (types) $S_i \subset [0,1]$. When items $\theta_1, \theta_2$ are in the same subset, then their asymptotic reputation scores are the same, $\lim_k x_k(\theta_1)  = \lim_k x_k(\theta_2) \triangleq t_i$. These items cannot be distinguished from one another even asymptotically.% under such $\beta$. 

Though discretization allows us to define a large deviations rate for $W_k$, it comes at a cost: $W$, the limiting value of $W_k$, is no longer one. Different discretization choices $\v{S}$ result in different $W$.

\textbf{Our optimization problem:} \textit{Within the class of stepwise increasing functions with $M$ levels, find the $\beta$ that is \textit{optimal}, i.e. is }
%\begin{enumerate}

	(1) {\em Asymptotically optimal}. It yields the highest limiting value of $W_k$. AND
	
	(2) {\em Rate optimal}. It yields the fastest large deviations rate $r$ among \textit{asymptotically optimal} $\beta$. 

A remarkable result of our paper is a $\mathcal{O}\left(M \log^2 \frac{M}{\epsilon} \right)$ procedure to find an optimal $\beta$ with $M$ levels.

%, i.e. $M$ types of items

\subsection{Solving the optimization problem} %discrete
\label{sec:discreteopt}

%We now find an optimal stepwise increasing $\beta$ with $M$ levels. 
The theorem below shows that the problem decomposes into two stages: first, find optimal discretization intervals $\v{S^*}$; then, find optimal $\v{t^*}$ given $\v{S^*}$.

\begin{restatable}{theorem}{lemCanMaximizeAsymValueAndRate}
		\label{thm:CanMaximizeAsymValueAndRate}
 The $\beta$ defined by the following choices of $\v{S}^*$ and $\v{t}^*$ is optimal:

 $\v{S}^* = \arg\max \sum_{0  \leq i < j < M} \int_{\theta_2\in S_i, \theta_1\in S_j } w(\theta_1, \theta_2)d(\theta_1,\theta_2)$
 
  $\v{t}^* = \arg\max r(\v{t})$, where\footnote{$\text{KL}(a||b) = a \log\frac{b}{a} + (1 - a) \log \frac{1-b}{1-a}$ is the Kullback-Leibler (KL) divergence between Bernoulli random variables with success probabilities $a$ and $b$ respectively.} $g_i \triangleq \inf_{\theta\in S_i} g(\theta)$ and
		\begin{align}
%		\label{eq:rWInLemma}
		r(\v{t}) 
		&\triangleq - \lim_{k \to \infty} \frac{1}{k} \log (W - {W}_k) \\
		&= \min_{0 \leq i \leq M-2} \inf_{a \in \bbR} \left \{ g_{i+1} \text{KL}(a || t_{i+1}) + g_i \text{KL}(a||t_i)\right\} \nonumber
		\end{align}	
\end{restatable}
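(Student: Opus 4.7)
The theorem decomposes into an asymptotic-value step fixing $\v S^*$ and a rate step fixing $\v t^*$, and I would prove them in that order.

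\emph{Stage 1 (asymptotic value).} For a stepwise $\beta$ with strictly increasing levels $t_0<\cdots<t_{M-1}$ and $g$ bounded away from $0$, the strong law gives $x_k(\theta)\to t_i$ for $\theta\in S_i$. Hence for $\theta_1\in S_j,\theta_2\in S_i$ with $j>i$ we have $P_k(\theta_1,\theta_2\mid\beta)\to 1$, while inside a common $S_i$ symmetry of the CLT limit forces $P_k(\theta_1,\theta_2\mid\beta)\to 0$. Dominated convergence applied to \eqref{eq:objective} then yields
\[ W \;=\; \sum_{0\le i<j<M}\int_{\theta_1\in S_j,\ \theta_2\in S_i} w(\theta_1,\theta_2)\,d\theta_1\,d\theta_2, \]
which depends only on $\v S$. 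Thus asymptotic optimality is equivalent to $\v S=\v S^*$, and we may optimize $\v t$ afterwards without loss.

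\emph{Stage 2 (rate formula).} Fix $\v S=\v S^*$ and split $W-W_k$ into a different-interval piece (integrand $w(1-P_k)$, since $P_\infty=1$) and a same-interval piece ($P_\infty=0$). For $\theta_1\in S_j,\theta_2\in S_i$ with $j>i$, independence of the two binomial samples and Cramér's theorem applied through a common threshold $a\in(t_i,t_j)$ give
\[ -\tfrac{1}{k}\log\Pr\!\bigl(x_k(\theta_1)\le x_k(\theta_2)\bigr) \;\longrightarrow\; \inf_{a\in\bbR}\bigl\{g(\theta_1)\,\text{KL}(a\|t_j)+g(\theta_2)\,\text{KL}(a\|t_i)\bigr\}. \]
Integrating $w$ against these exponentially small factors and using the standard Laplace-type fact that the rate of an integral of exponentially decaying functions is the infimum of their rates, the rate of the different-interval piece equals the infimum of the pointwise rates above over admissible $(\theta_1,\theta_2)$. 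Two monotonicity observations then pin down the infimum: (i) $\inf_a\{g\,\text{KL}(a\|t_j)+g'\,\text{KL}(a\|t_i)\}$ is increasing in both $g$ and $g'$, so the worst pair inside $S_j\times S_i$ sits at $g_j,g_i$; (ii) for fixed $t_i$, the same quantity is strictly increasing in the gap $t_j-t_i$, because $\text{KL}(a\|\cdot)$ is convex with a zero at $a$ and the optimal $a^*$ lies strictly between $t_i$ and $t_j$, so widening the gap raises the optimized cost. Together these force the binding index pair to be adjacent ($j=i+1$), giving the displayed formula for $r(\v t)$ and reducing rate optimality to $\v t^*=\arg\max r(\v t)$.

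\emph{Main obstacle.} The delicate point is the same-interval piece: within $S_i$ both items share mean $t_i$, and $P_k\to 0$ only at the polynomial CLT rate, so a naive term-by-term bound would swamp the exponential different-interval rate and destroy the large deviations principle altogether. I would handle this by exploiting exchangeability on $S_i\times S_i$: after integrating over $\{\theta_1>\theta_2\}$ with continuous $g$ and smooth $w$, the leading $k^{-1/2}$ CLT bias is odd in $\theta_1-\theta_2$ and cancels, leaving a remainder governed by the same Cramér rate as the different-interval terms. Making this cancellation quantitative (Fubini swap plus a uniform Edgeworth remainder) is the only non-routine step; the rest of the proof is bookkeeping around Cramér's theorem and the two monotonicity arguments.
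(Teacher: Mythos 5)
Your overall architecture follows the paper's proof: Stage 1 is the same bounded-convergence computation of $W=\sum_{i<j}\int_{S_i\times S_j}w$, and your cross-interval analysis in Stage 2 — pointwise Cram\'er rates $\inf_a\{g(\theta_1)\text{KL}(a\|t_j)+g(\theta_2)\text{KL}(a\|t_i)\}$ for independent binomial proportions, the ``rate of an integral of exponentially small terms is the infimum of the pointwise rates'' step (the paper proves a uniform-convergence variant of Laplace's principle precisely to license this), and the two monotonicity facts that push the binding pair to adjacent indices and the match rates to $g_i=\inf_{\theta\in S_i}g(\theta)$ — is essentially the paper's argument.

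The gap is exactly at the step you single out as the crux: the same-interval piece, and your proposed fix does not work. $P_k(\theta_1,\theta_2)$ is antisymmetric in $(\theta_1,\theta_2)$, but the objective integrates it only over the half-domain $\{\theta_1>\theta_2\}$ against a strictly positive weight $w$, so ``oddness in $\theta_1-\theta_2$'' yields no cancellation; cancellation by parity would require integrating over the full symmetric square. Moreover, for two binomial proportions with the same mean $t_i$ but sample sizes $\lfloor kg(\theta_1)\rfloor\ge\lfloor kg(\theta_2)\rfloor$, the leading $k^{-1/2}$ Edgeworth (skewness and lattice) correction to $\mu_k(x_k(\theta_1)>x_k(\theta_2))-\mu_k(x_k(\theta_1)<x_k(\theta_2))$ has a sign governed by $1-2t_i$ and by which item has more ratings; with $g$ nondecreasing this sign is the same throughout $S_i^2\cap\{\theta_1>\theta_2\}$, so the integral is genuinely of order $k^{-1/2}$ rather than exponentially small — a Fubini swap plus a uniform Edgeworth remainder cannot restore the exponential rate. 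The paper does not solve this problem; it sidesteps it: for $\theta_1,\theta_2$ with $\beta(\theta_1)=\beta(\theta_2)$ it treats the two score variables as identically distributed, so by exchangeability the two probabilities coincide and $P_k$ is exactly zero, making the same-interval contribution vanish identically (this is exact when items in the same interval accumulate the same number of ratings, e.g.\ $g$ constant on each $S_i$, and is asserted without comment in general). So either you adopt that same-interval symmetry (in which case your ``main obstacle'' disappears and no Edgeworth analysis is needed), or, if you insist on $g$ varying strictly inside an interval, the polynomial term you identified is real and your cancellation argument does not remove it — the statement then needs within-interval sampling to be equalized, not a parity argument.
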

  The proof is in the Appendix. The main hurdle is showing that the continuum of rates for $P_k(\theta_1,\theta_2)$ for each pair $\theta_1,\theta_2$ translates into a rate for $W_k$. This decomposition separates our two questions: $\v{S}^*$ maximizes the limiting value of $W_k$ \textit{given any} $\v{t}$, and depends only on $w$; Then, $\v{t}^*$ maximizes the rate at which the limiting value is reached, given $g_i$.% and $\v{S^*}$.
% (whatever that value is for that $\v{s}$). 

For Kendall's $\tau$ and Spearman's $\rho$, the optimal intervals are simply equispaced in $[0,1]$, i.e. $S^*_i = [\frac{i}{M}, \frac{i+1}{M})$, because the entire item quality distribution is equally important. For other objective weight functions $w$, the difficulty of finding the optimal subsets $\v{S}^*$ depends on the properties of $w$. Since $\v{S^*}$ is trivial for Kendall's $\tau$ and Spearman's $\rho$ -- and $w$ is just an analytic tool that formalizes a platform's goals -- we focus on finding the optimal levels $\v{t}^*$.

\textbf{Discussion}. One may naturally wonder why we introduced a continuum of quality $[0,1]$ and then discretized into $M$ subsets, instead of starting with $M$ types. As established in Theorem~\ref{thm:CanMaximizeAsymValueAndRate}, \textit{how} we discretize (i.e. solving for $\mathbf{S^*}$) allows for optimization of different objective weight functions $w$; it determines \textit{which} items are most valuable to distinguish.

Suppose we started with a set of $M$ items. Then the only remaining challenge is to equalize the rates at which each item is separated from others: the large deviations rate is unaffected by the weight function $w$ (it does not appear in the simplification of $r(\v{t})$). In other words, \textit{given} a discrete set of $M$ items (equiv, given $\v{S^*}$), calculating the optimal $\v{t}$ is equivalent to solving a \textit{maximin} problem for the rates at which each type is distinguished from each of the others. Thus, the algorithm below also solves the \textit{inverse} bandits problem in which we wish to rank the $M$ arms, and we can choose the structure of the (binary) observations at each arm. %The solution corresponds to the Kendall's $\tau$ solution in Figure~\ref{fig:betadifferent}. 

We further note that the choice of $M$ is not consequential; in the Appendix Section~\ref{sec:discretization} we show that in an appropriate sense, a sequence of optimal $\beta_M$ for each $M$ converges as $M$ gets large.%, suggesting that . 

\paragraph{Algorithm to find the optimal levels}
We now describe how to find $\v{t^*}$, the maximizer of $r(\v{t})$. %~\eqref{eq:rWInLemma}. 

The following lemma describes a system of equations to find the $\v{t}^*$ that maximizes ${r}$. It states that $\v{t^*}$ equalizes the rates at which each interval $i$ is separated from its neighbors. The proof involves manipulation of $r$ and convexity, and is in the appendix.

\begin{restatable}{lemma}{lemsystemofeq}
	\label{lem:systemofequations} The unique solution $\v{t}^*$ to the following system of equations maximizes ${r}(\v{t})$: 
	 \begin{align}
	 r(t_0, t_1) &= r(t_1, t_2) = \dots = r(t_{M-2}, t_{M-1})\label{eq:sysequationsoneline}\\
	 t_0 &= 0, t_{M-1} = 1\nonumber\\
	 \iftoggle{singlecolumnequations}
	 {r(t_{i-1}, t_i) &\triangleq -\log \bigg[\bigg(
	 	(1-t_{i-1})^{\frac{g_{i-1}}{g_{{i-1}}+ g_{i}}}(1-t_{i})^{\frac{g_i}{g_{{i-1}} + g_{i}}}
	 	+ {t_{i-1}}^{\frac{g_{i-1}}{g_{{i-1}} + g_{i}}}{t_{i}}^{\frac{g_i}{g_{{i-1}} + g_{i}}}\bigg)^{g_{{i-1}}+ g_{i}}\bigg]\nonumber 
	 }%
	 { r(t_{i-1}, t_i) \triangleq -\log \bigg[&\bigg(
	 	(1-t_{i-1})^{\frac{g_{i-1}}{g_{{i-1}}+ g_{i}}}(1-t_{i})^{\frac{g_i}{g_{{i-1}} + g_{i}}}\nonumber\\
	 	+ &{t_{i-1}}^{\frac{g_{i-1}}{g_{{i-1}} + g_{i}}}{t_{i}}^{\frac{g_i}{g_{{i-1}} + g_{i}}}\bigg)^{g_{{i-1}}+ g_{i}}\bigg]\nonumber 
	 }
	 \end{align}
\end{restatable}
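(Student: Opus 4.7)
The plan is to exploit monotonicity of the pairwise rate to reduce the max-min over $\v{t}$ to a one-dimensional search. First, I would derive the closed form for $r(t_{i-1}, t_i) \triangleq \inf_a \{g_{i-1}\text{KL}(a\|t_{i-1}) + g_i\text{KL}(a\|t_i)\}$ by setting the derivative in $a$ to zero. The first-order condition gives $a^*/(1-a^*) = [t_{i-1}/(1-t_{i-1})]^{g_{i-1}/(g_{i-1}+g_i)}[t_i/(1-t_i)]^{g_i/(g_{i-1}+g_i)}$, and substituting this $a^*$ back into the objective and collecting terms should yield the weighted-geometric-mean expression stated in the lemma.

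Next, I would establish the key monotonicity: for $t_{i-1} < t_i$, $r(t_{i-1}, t_i)$ is strictly decreasing in $t_{i-1}$ and strictly increasing in $t_i$. By the envelope theorem, $\partial r/\partial t_i = g_i(t_i - a^*)/[t_i(1-t_i)]$, which is strictly positive since convexity of $\text{KL}(\cdot\|t)$ in its first argument forces $a^* \in (t_{i-1}, t_i)$; a symmetric computation handles $\partial r/\partial t_{i-1}$.

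With monotonicity in hand, I would characterize the equation-system solution by a one-dimensional sweep. For any target rate $s > 0$, define $t^*_j(s)$ iteratively via $t^*_0(s) = 0$ and $t^*_j(s)$ solving $r(t^*_{j-1}(s), t^*_j(s)) = s$. Monotonicity ensures each $t^*_j(s)$ is continuous and strictly increasing in $s$, so a unique $r^* > 0$ satisfies $t^*_{M-1}(r^*) = 1$; this $\v{t}^*$ is then the unique solution of \eqref{eq:sysequationsoneline}. To prove optimality, I would take any admissible $\v{t}$ with $\min_j r(t_{j-1}, t_j) \geq r^*$: a forward induction from $t_0 = 0$ via monotonicity in the second argument gives $t_j \geq t^*_j(r^*)$ for all $j$, while a backward induction from $t_{M-1} = 1$ via monotonicity in the first argument gives $t_j \leq t^*_j(r^*)$. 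These two chains of inequalities force $\v{t} = \v{t}^*$, ruling out any strict improvement and confirming that $\v{t}^*$ uniquely maximizes $r(\v{t})$.

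The hardest part will be the algebraic simplification in the first step: the implicit expression for $a^*$ must be plugged back into the Bernoulli KL divergence and collapsed into the weighted-geometric-mean form displayed in the statement, which requires some careful bookkeeping of the exponents $g_{i-1}/(g_{i-1}+g_i)$ and $g_i/(g_{i-1}+g_i)$. Once that closed form is available, monotonicity drops out from the envelope theorem, and the clean forward/backward induction coupling the two boundary conditions $t_0 = 0$ and $t_{M-1} = 1$ pins everything down without needing a perturbation or KKT-based equalization argument.
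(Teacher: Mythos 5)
Your proposal is correct, and the first half (deriving the closed form by solving the first-order condition in $a$ and substituting back) is exactly the computation in the paper's proof. Where you diverge is the optimality/uniqueness argument. The paper proves that $\v{t}$ maximizes $r(\v{t})$ if and only if all terms inside the minimization are equal, via a perturbation ("shifting") contradiction: to raise the minimum one would have to raise every pairwise rate, which forces $t_1, t_2, \dots, t_{M-2}$ to all increase, contradicting that the last rate $-g_{M-2}\log t_{M-2}$ must then decrease; uniqueness is then asserted in one line from the common rate iteratively pinning down the levels. You instead make the monotonicity explicit via the envelope theorem ($\partial r/\partial t_i = g_i(t_i - a^*)/[t_i(1-t_i)] > 0$ since $a^* \in (t_{i-1}, t_i)$, and symmetrically in $t_{i-1}$), parametrize a forward sweep $t^*_j(s)$ by the common rate $s$, and sandwich any competitor $\v{t}$ with $\min_j r(t_{j-1},t_j) \geq r^*$ between a forward induction ($t_j \geq t^*_j(r^*)$) and a backward induction ($t_j \leq t^*_j(r^*)$). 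This is a genuinely different, and arguably tighter, route: it simultaneously establishes existence and uniqueness of the equalizing solution and its optimality, and it directly mirrors the logic underlying the paper's \textit{NestedBisection} algorithm, whereas the paper's argument is shorter but leaves uniqueness essentially as a remark. The one detail you should not gloss over is the existence of $r^*$ with $t^*_{M-1}(r^*) = 1$: the sweep can saturate (a required level would exceed $1$) before the last index for large $s$, so you need a small intermediate-value/limit argument showing $t^*_{M-1}(s)$ is continuous, tends to $0$ as $s \to 0$, and attains $1$ at the saturation threshold; this is routine but currently only implicit in your write-up.
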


%This system can be written as a feasibility problem with constraints that are the sum of convex and concave functions.  
We do not know of any algorithm that efficiently and provably solves such convex equality systems in general. However, we leverage some structure in our setting to develop an algorithm, \textit{NestedBisection}, with run-time and optimality guarantees. The efficiency of our algorithm results from the property that, given a rate, $t_i$ is uniquely determined by the value of either of the adjacent levels $t_{i-1}, t_{i+1}$, reducing an exponentially large search space to an almost linear one. Physically, i.e., we only need to separate each type of item from its neighboring types. %\todo{probably shorten}

%In certain cases, such systems can be solved in practice (see, e.g.,~\citep{yuille_concave-convex_2003,lipp_variations_2016}).

%(a convex maximization, and thus ostensibly difficult to solve)

Below we include pseudo-code. Akin to branch and bound, the algorithm proceeds via bisection on the optimal value of $t_{M-2}$. For each candidate value of $t_{M-2}$, the other values can be found using a sequence of bisection subroutines. These values approximately obey all the equalities in the system~\eqref{eq:sysequationsoneline} except the first. The direction of the first equality's violation reveals how to change the interval for the next outer bisection iteration.  

\begin{restatable}{theorem}{thmalgo}
	\textit{NestedBisection} finds an $\epsilon$-optimal $\v{t}$ in $\mathcal{O}\left(M \log^2 \frac{M}{\epsilon} \right)$ operations, where $\epsilon$-optimal means that ${r}(\v{t})$ is within additive constant $\epsilon$ of optimal. \label{thm:algfinds} 
\end{restatable}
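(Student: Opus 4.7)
The plan is to verify three ingredients of \textit{NestedBisection}: each inner bisection subroutine is well-posed and linearly convergent, the outer bisection on $t_{M-2}$ is well-posed and linearly convergent, and the accumulated per-call errors across the $M$ nested inner bisections translate into an $O(\epsilon)$ error in the returned objective $r(\v{t})$, all within the claimed operation count.

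First I would analyze the scalar rate $r(a,b)$ from Lemma~\ref{lem:systemofequations}. Viewed as the negative log of a Chernoff-style expression, direct differentiation shows that for fixed $b\in(0,1]$ the map $a\mapsto r(a,b)$ is continuous and strictly decreasing on $[0,b]$ with $r(b,b)=0$, and symmetrically is strictly increasing in $b$ for fixed $a$. Consequently, given any target $\rho\in(0,r(0,b))$, bisection on $[0,b]$ locates the unique preimage $a^\star(b,\rho)$ to additive precision $\delta$ in $\lceil\log_2(1/\delta)\rceil$ steps, proving each inner subroutine is correct and efficient.

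For the outer bisection, fix $t_{M-1}=1$ and, given a guess $\tau$ for $t_{M-2}$, set $\rho(\tau)=r(\tau,1)$ and inductively $t_i(\tau)=a^\star(t_{i+1}(\tau),\rho(\tau))$ for $i=M-3,\dots,1$. By the monotonicities just established, each $t_i(\tau)$ is continuous and strictly monotone in $\tau$; hence so is the residual $\Phi(\tau)\triangleq r(0,t_1(\tau))-\rho(\tau)$, which measures the violation of the \emph{first} equality in~\eqref{eq:sysequationsoneline}. Lemma~\ref{lem:systemofequations} guarantees that $\Phi$ has a unique zero at $\tau^{*}=t_{M-2}^{*}$, and the sign of $\Phi(\tau)$ reveals on which side of $\tau^{*}$ we sit; so bisection on $[0,1]$ converges to $\tau^{*}$ with precision $\delta$ in $\lceil\log_2(1/\delta)\rceil$ outer iterations.

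The main obstacle will be the error-accumulation analysis, since each of the $M$ nested inner bisections injects error $\delta$ that could a priori compound geometrically along the sweep. To bound the aggregate polynomially in $M$, I would first use Lemma~\ref{lem:systemofequations} together with the standing assumption $g(\theta)\geq c>0$ to show that $\v{t}^{*}$ lies strictly in the interior of $[0,1]^{M}$, and restrict attention to a compact sub-rectangle containing it; implicit differentiation of the identity $r(a^\star,b)=\rho$ then yields uniform bounds on $|\partial a^\star/\partial b|$ and $|\partial a^\star/\partial \rho|$ on this region. The crux is to exploit the specific Chernoff structure---either via a contraction estimate along the sweep or via an a priori bound on aggregate sensitivity---to conclude that the total propagated error is at most $\mathrm{poly}(M)\cdot\delta$. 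Once in hand, Lipschitz dependence of $r$ on its arguments converts the same bound into an $O(\epsilon)$ error in $r(\v{t})$ upon choosing $\delta=\epsilon/\mathrm{poly}(M)$. The total operation count is then $O(\log(1/\delta))=O(\log(M/\epsilon))$ outer iterations, each executing $M$ inner bisections of depth $O(\log(M/\epsilon))$, giving $\mathcal{O}\!\left(M\log^{2}(M/\epsilon)\right)$ operations as claimed.
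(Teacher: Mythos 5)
Your plan gets the easy parts right (monotonicity of the pairwise rate, well-posedness of the inner and outer bisections, and the iteration-count bookkeeping), but it leaves unproven exactly the step the whole theorem hinges on, and the route you sketch for it would not work as stated. You propose to control error propagation by noting that $\v{t}^*$ lies in the interior of $[0,1]^M$, restricting to a compact sub-rectangle, and getting uniform sensitivity bounds by implicit differentiation, hoping for a $\mathrm{poly}(M)\cdot\delta$ aggregate bound. The problem is that no compact sub-rectangle works uniformly in $M$: as $M$ grows, $t^*_1\to 0$ and $t^*_{M-2}\to 1$, so the relevant Lipschitz constant of the rate (which behaves like $g_{M-2}\log\frac{t^*_1+\delta}{t^*_1}\approx g_{M-2}\,\delta/t^*_1$) necessarily degrades with $M$, and the entire difficulty is to quantify \emph{how fast}. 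The paper does this by proving $t^*_1\geq C M^{-3}$, which requires a genuinely structural argument: an exact level-doubling lemma relating the optimal levels of $\beta_M$ to those of $\beta_{2M-1}$ under uniform matching (Lemma~\ref{lem:beta2mfromm}), a resulting lower bound $r^{2^{N+1}-1}\geq \tfrac15 r^{2^N}$ on the optimal rate, and a comparison argument (via Lemma~\ref{lem:shiftgmass} and Lemma~\ref{lem:last1m}) to extend from uniform to nondecreasing $g$. Your proposal flags this as ``the crux'' but offers only two unargued candidate strategies (``a contraction estimate'' or ``an a priori bound on aggregate sensitivity''), so the proof is incomplete precisely where it is hard.

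Two further mismatches with how the algorithm actually behaves are worth noting. First, the feared geometric compounding along the sweep does not arise and does not need a contraction estimate: every inner bisection matches its pairwise rate against the \emph{same} fixed target $r(t_{M-2},1)$ and returns the right endpoint of its final interval, so each term $r(m)$ lies in $[r(M-1)-\epsilon(\delta),\,r(M-1)]$ and the objective, being a minimum of these terms, loses only a single one-step increment $\epsilon(\delta)$ plus the loss $g_{M-2}\log\frac{t^*_{M-2}+\delta}{t^*_{M-2}}$ from the outer variable. Second, the outer bisection is initialized on $[1-\frac{1}{M-1},\,1-\delta]$, justified by $t^*_{M-2}\geq 1-\frac{1}{M-1}$ (Lemma~\ref{lem:last1m}); this same bound is also used to control $1/t^*_{M-2}$ in the final error estimate, and the claim that the sign of the first-equality violation brackets $t^*_{M-2}$ is established in the paper via explicit algorithm invariants ($\ell\leq t^*_{M-2}$ and $r(1)\geq r(M-1)$ at the returned levels) rather than by the monotonicity-in-$\tau$ assertion you rely on, which itself would need proof.
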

The proof is in the appendix. The main difficulty is finding a Lipschitz constant $\epsilon(\delta)$ for how much the rate changes with a shift  $\delta$ in a level $t_i$. This requires lower bounding $t_1$ as a function of $M$. In practice, the algorithm runs instantaneously on a modern machine (e.g. for $M = 200$). 

 \begin{algorithm}
	\DontPrintSemicolon
	\KwData{Number of intervals $M$, match function $g$}
	\KwResult{$\beta$ levels, i.e. $\{t_0 \dots t_{M-1}\} $}
	\Fn{main ($M$, $\delta$, $g$)}{
		\While{Uncertainty region for $t_{M-2}$ is bigger than error tolerance}{ 
			Calculate $r(t_{M-2}, 1)$, the rate between current guess for $t_{M-2}$, and $t_{M-1} = 1$. \;
			Fixing $t_{M-2}$, find $t_1\dots t_{M-3}$ such that $r(t_{1}, t_2) = r(t_{2}, t_3)= \dots = r(t_{M-3}, t_{M-2}) = r(t_{M-2}, 1)$, which can be done through a sequence of bisection routines.\;
			Calculate $r(0, t_1)$, the rate between current guess for $t_{1}$, and $t_0 = 0$. \;
			Compare $r(t_{M-2}, 1)$ and $r(0, t_1)$, adjust uncertainty region for $t_{M-2}$ accordingly.\;
		}
		return $\{t_i\}$\;
	}
	\caption{Nested Bisection}
	\label{alg:Fasttimealgorithm}
\end{algorithm}

\subsection{Visualization and discussion}
\label{sec:designinsights}
Figure~\ref{fig:betadifferent} shows how the optimal $\beta$ varies with weights $w$ and matching rates $g$. Higher relative weights in a region lead to a larger range of $\beta(\theta)$ there to make it easier to distinguish those items (e.g., prioritizing the best items induces a $\beta$ shifted right). Higher relative matching rates $g(\theta)$ have the opposite effect, as frequent sampling naturally increases accuracy for the best items. We formalize this shifting in Appendix Section~\ref{sec:objectivematchingeffect}. %; when $g$ is increasing, $\beta$ shifts to better distinguish the worst items

It is interesting that even the basic case, with $w = 1$ and $g = 1$, has a non-trivial $\beta$. One would expect, with weight and matching functions that treat all items the same, that $\beta$ would be linear, i.e. $\beta(\theta) = \theta$. Instead, a third factor non-trivially impacts optimal design: binomial variance is highest near $\beta(\theta) = \frac{1}{2}$. Items that receive positive ratings at such frequency have high-variance scores, and thus the optimal $\beta$ has a smaller mass of items with such scores. 

%We now have an answer to our first question outlined in the introduction -- \textit{a characterization of the optimal structure of binary feedback. }

\section{Designing approximately optimal, implementable rating systems}
\label{sec:applic_insights}

We now turn to our second question: \textit{How does a platform build and implement a real rating system such that buyers behave near-optimally, i.e. according to a calculated $\beta$?}. In this section, we give an example design procedure for how a platform would do so, and in the next section we validate our procedure through an experiment on Mechanical Turk.

Recall that $\beta(\theta)$ gives the probability at which an item of quality $\theta$ should receive a positive rating. However, the platform cannot force buyers to rate according to this function. Rather, it must ask \textit{questions} of buyers that will induce a  proportion $\beta(\theta)$ of them to give positive ratings for an item $\theta$.

Throughout the section, we assume that an optimal $\beta(\theta)$ has been calculated (for some $M$, $g$, and $w$). 

\textbf{Resources available to platform}. We suppose the platform has a set $\mY$ of possible binary questions that it could ask a buyer, e.g., ``Are you satisfied with your experience'' or ``Is this experience your best on our platform?''. Informally, at each rating opportunity (i.e., match made), the rater can be shown a single question $y\in\mY$. Let $\psi(\theta, y)$ be the probability an item quality $\theta$ would receive a positive rating when the rater is asked question $y\in\mY$.

We further suppose the platform has a set $\Theta$ of representative items for which it has access to item quality.  $\Theta$, then, is the granularity at which the platform can collect data about historical performance, or otherwise get expert labels. (We assume $M\gg|\Theta|$).

 Using known set $\Theta$, the platform can run an experiment to create an estimate $\hat\psi(\theta, y), \forall y\in\mY, \theta\in\Theta$.

\textbf{Design heuristic}. How can the platform build an effective rating system using these primitives and $\beta$? We consider the following heuristic design: the platform randomly shows a question $y\in\mY$ to each buyer. The choice of the platform is a {\em distribution} $H(y)$ over $y \in \mY$; in other words, for the platform the design of the system amounts to choosing the frequency with which each question is shown.  At each rating opportunity, $y$ is chosen from $\mY$ according to $H$, independently across opportunities.  

 Clearly, the probability that an item $\theta$ receives a positive rating is:
$\tilde{\beta}(\theta)\triangleq\sum_{y\in\mY} \psi(\theta,y) H(y).$%. 

We want a distribution $H$ such that $\tilde{\beta}(\theta) = \beta(\theta)$ for all $\theta \in [0,1]$, i.e., that the positive rating probability for each item is exactly the optimal value.  However, there may not exist {\em any} set of questions $\mY$ with associated $\psi$ and choice of $H$ such that $\tilde{\beta} = \beta$.\footnote{There are special cases where an exact solution exists.  For example, let $\mY = [0,1]$, and $\psi(\theta, y) = \bbI[\theta\geq y]$. 
}

We propose the following heuristic to address this difficulty. Choose a probability distribution $H$ to minimize the following $L_1$ distance:
\begin{equation} \min_{H: \|H\|_1 = 1}\sum_{\theta \in \Theta} |\beta(\theta) - \sum_{y\in\mY} \hat\psi(\theta,y) H(y)| \label{eqn:l1heuristic} \end{equation}
This heuristic uses the data available to the platform, ${\hat\psi(\theta,y)}$ for a set of items $\theta \in \Theta$, and designs $H$ so at least these items receive ratings close to their optimal ratings $\beta(\theta_i)$. Then, as long as $\psi$ is well-behaved, and $\Theta$ is representative of the full set, one can hope that $\tilde{\beta}(\theta)\approxeq \beta(\theta)$, for all $\theta \in [0,1]$.

\textbf{Discussion}. 
\textit{Real-world analogue \& constraints}. A special case of this system is already in place on many platforms, where the same question is always shown. Static systems can be designed by restricting $H$ to only have mass at one question $y$. More generally, constraints can be used in optimization~\eqref{eqn:l1heuristic}. 

\textit{Limitations}. Our model does not allow for $y$ to be chosen adaptively based on the platform's current knowledge of the item. In practice, this may be a reasonable restriction for implementation purposes. Our model also restricts aggregation to be binary; the platform in our model does not use information on how ``hard'' a question $y$ is.

\section{Mechanical Turk experiment}
\label{sec:mturk}
In the previous sections, we showed how to find an optimal rating function $\beta$ and we how to apply such a $\beta$ to design a binary rating system using empirical data.  In this section, we deploy an experiment on Amazon Mechanical Turk to apply these insights in practice. First, we collect data that can be used  to create a reasonable real-world example of $\psi(\theta,y)$, as a proof of concept with which we can apply our optimization approach. Then, we use this model to demonstrate some key features of optimal and heuristic designs as computed via our methodology, and show that they perform well relative to natural benchmarks. Details of experimental design, simulation methodology, and results are in Appendix~\ref{sec:appmturk}. 

\begin{figure*}[t]
	\centering
	\begin{subfigure}[t]{.34\linewidth}
		\includegraphics[width=\linewidth]{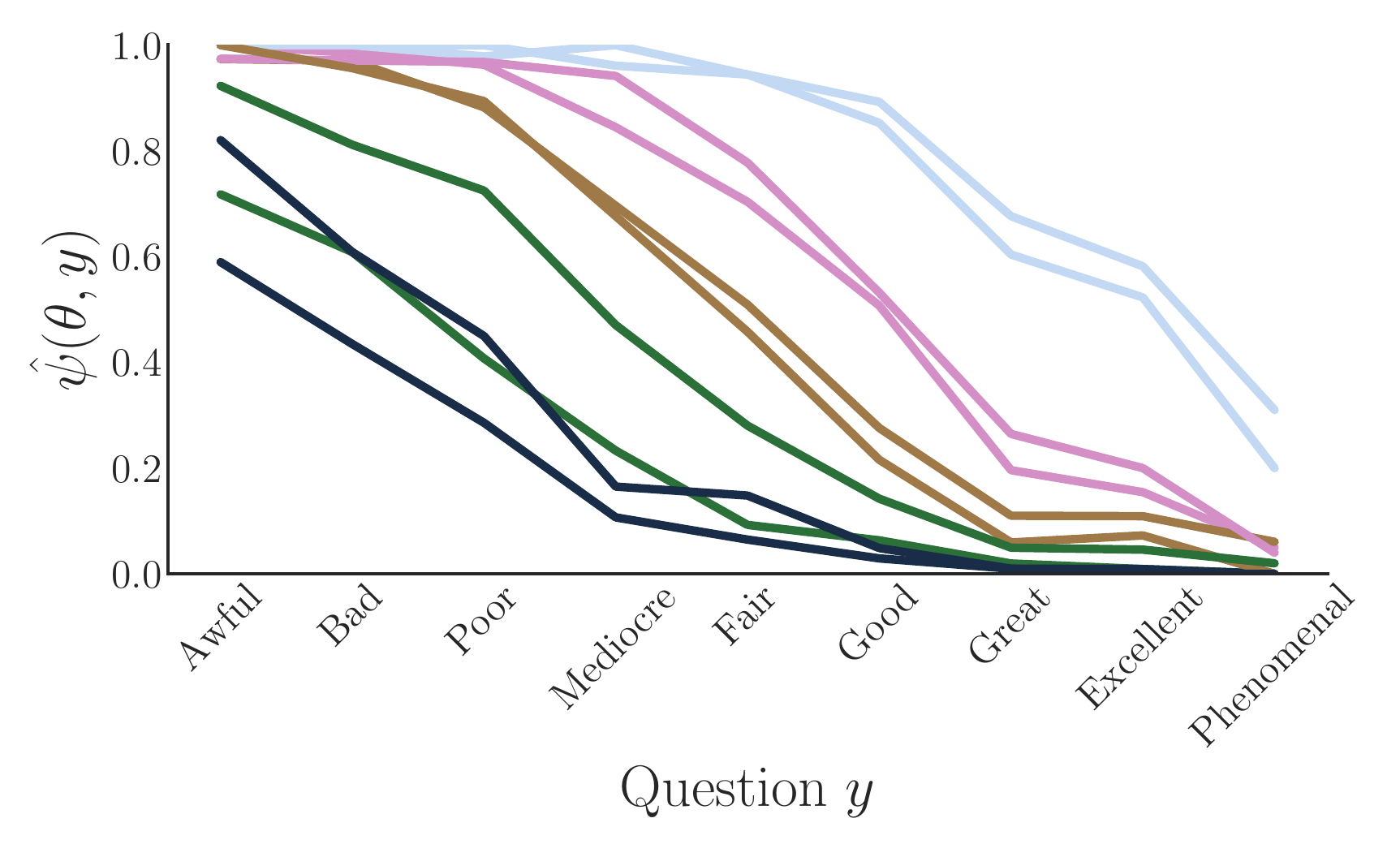}
		\caption{Experimental $\hat\psi(\theta_i, y)$. Light blue lines are best 2 paragraphs, dark blue the worst.}
		\label{fig:psi_main}
	\end{subfigure}\hfill
	\begin{subfigure}[t]{.32\linewidth}
		\includegraphics[width=\linewidth]{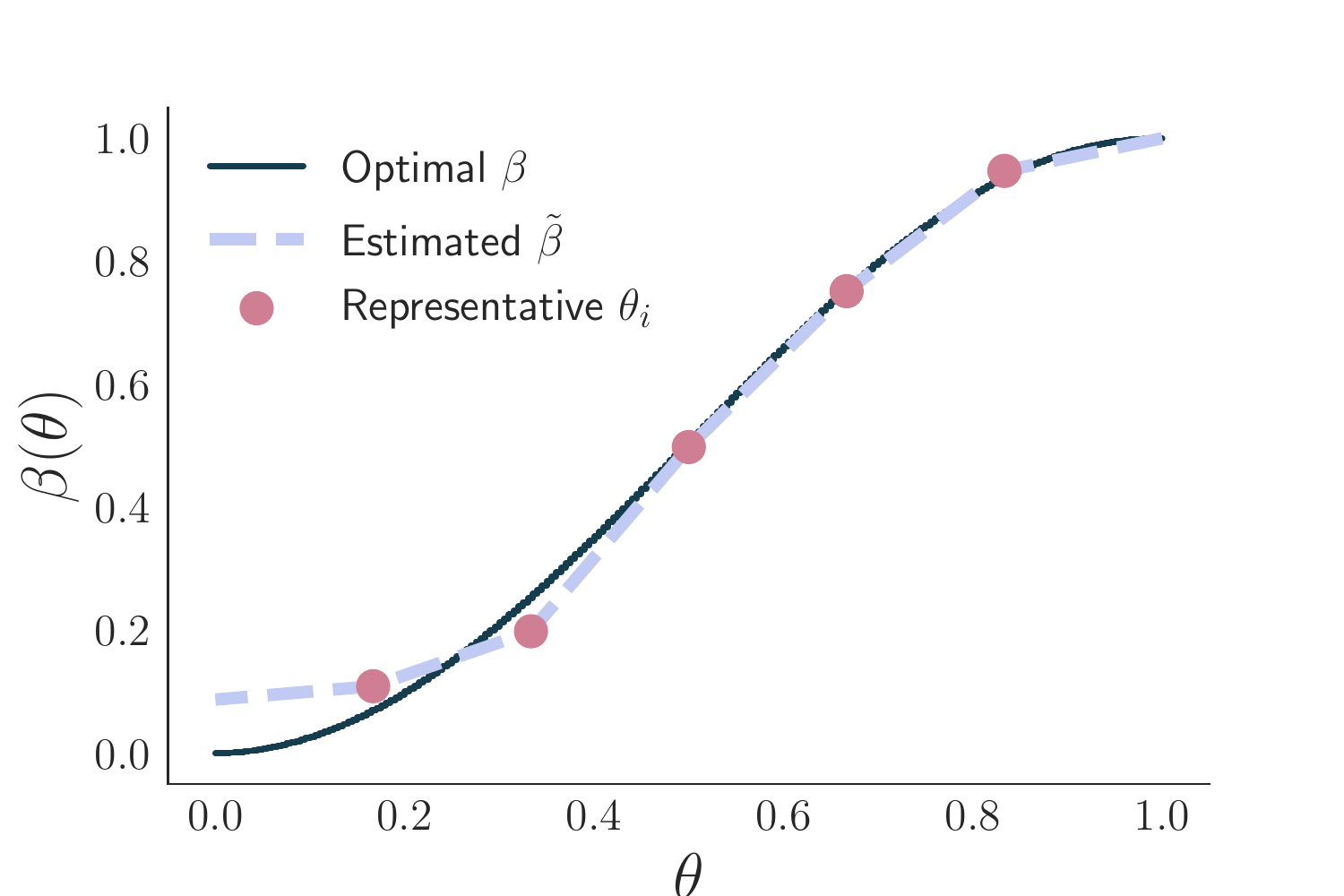}
		\caption{$\beta$ vs $\tilde{\beta}$ (using a calculated H) for $w=1, g=1$.}
		\label{fig:betavsbetatilde}
	\end{subfigure}\hfill
	\begin{subfigure}[t]{.31\linewidth}
		\includegraphics[width=\linewidth]{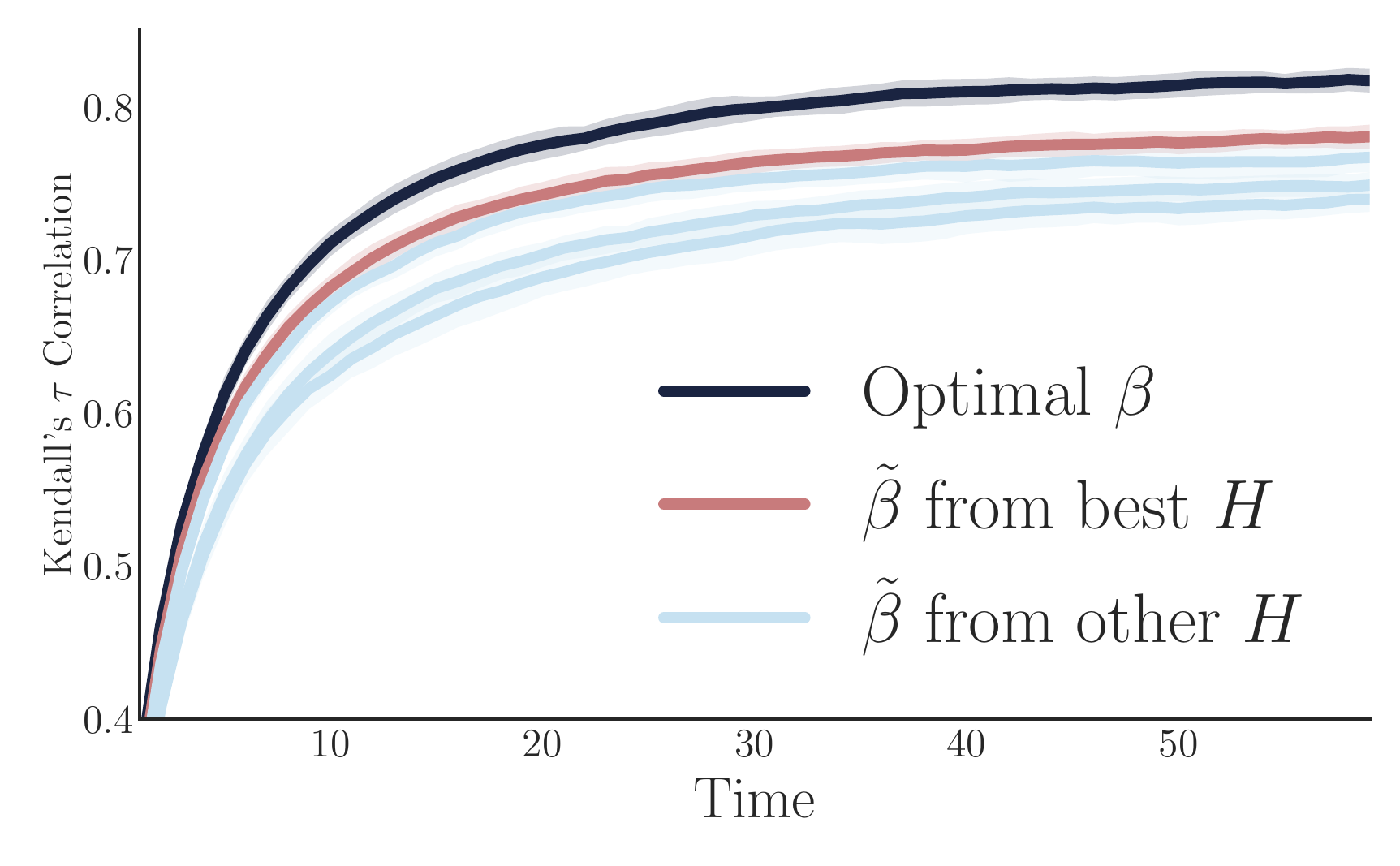}
		\caption{Simulated performance for $w=1, g=1$.}
		\label{fig:mturk_unifdeath02_main}
	\end{subfigure}\hfill	
	\caption{Experiment and simulation results} 
	\label{fig:experimentsall}
\end{figure*}

\textbf{Experiment description}. We have a set of 10 English paragraphs of various writing quality, with \textit{expert scores} $\theta$, from a TOEFL book~\citep{educational_testing_service_toefl_2005}; there were $5$ unique possible experts ratings, i.e. $\theta \in \{.1,.3,.5,.7,.9\}$. For each possible rating, we have two paragraphs who received that score from experts. 

We asked workers on Mechanical Turk to rate the writing quality of the paragraphs from a set of adjectives, $\mY$. Using this data, we estimate $\psi(\theta, y)$, i.e., the probability of positive rating when a question based on adjective $y \in \mY$ is shown for paragraph with quality $\theta\in\Theta$. (e.g., ``Would you consider this paragraph of quality \textit{[adjective]} or better?'') Figure~\ref{fig:psi_main} shows our estimated $\hat\psi$ for our 10 paragraphs.%, where the color encodes the external rating.

\textbf{Optimization.} Next, we find the optimal $\beta$ for various matching and weight functions using the methods from Section~\ref{sec:model_algo}. In particular, we have $\beta$ for all permutations of the cases $g = 1, g = \frac{1 + 10\theta}{11}$, and $w = 1, w = \theta_1\theta_2(\theta_1 - \theta_2)$, and $w = (1 - \theta_1)(1 - \theta_2)(\theta_1 - \theta_2)$. Recall that this step does not use experimental data.

Then, using $\hat\psi$ and calculated $\beta$s, we apply the heuristic from Section~\ref{sec:applic_insights} to find the distribution $H$ with which to sample the questions (adjectives) in $\mY$. Figure~\ref{fig:betavsbetatilde} shows the optimal $\beta$ (with $g = 1$, $w = 1$), and  estimated $\tilde{\beta}$ from the procedure.

\textbf{Simulation.} Finally, we study the performance of these designs via simulation in various settings. We simulate a system with 500 items and 100 buyers according to the model in Section \ref{sec:modelbinary}, except that matching is \textit{stochastic}: at each time, a random 100 items receive ratings, based on \textit{observed} scores $x_k(\theta)$ rather than true quality $\theta$. Furthermore, in some simulations, we have sellers \textit{entering and exiting} the market with some probability at each time step. We measure the performance of all the designs. For comparison, we also simulate a \textit{naive} $H = \frac{1}{|\mY|}$.

Note that our experiment only provides $\hat\psi$ associated with qualities $\theta \in \Theta$, and for simulations we construct a full $\psi(\theta, y), \forall \theta \in [0,1]$ from these points by averaging and interpolating (in order to model human behavior for the full system). Further, our \textit{calibrated} simulations only provide rough evidence for the approach: although we use real-world data, the simulations assume that our model reflects reality, except for where we deviate as described above. 

\textbf{Results and discussion}. 
Figure~\ref{fig:mturk_unifdeath02_main} shows the simulated performance (as measured by Kendall's $\tau$ correlation) of the various designs over time, when $g = 1$. Further plots are in the Appendix Figure~\ref{fig:mturksimulationsbinary}, showing performance under various weight functions $w$ and matching functions $g$, and with items entering and exiting the market. We find that:%, with more detail in Section~\ref{sec:mturksimulationdesc}.
%\begin{enumerate}[leftmargin=*]
%	\item 

	First, the optimal $\beta$ for each setting outperforms other possible functions, as expected. The designs are robust to (some) assumptions in the model being broken, especially regarding matching.%, even when items have heterogeneous age. 
	
	Second, the $H$ from our procedure outperforms other designs, but is worse than the optimal system $\beta$. In general, the simulated gap between an implemented system and optimal design $\beta$ provides the platform quantitative insight on the system's sub-optimality. %(e.g. by improving questions $\mY$ or representative items $\Theta$).
	
%	\item 
	Third, comparing $\tilde{\beta}$ to $\beta$ gives qualitative insight on \textit{how} to improve the system. For example, in Figure~\ref{fig:betavsbetatilde}, $\tilde{\beta}$ is especially inaccurate for $\theta\in[0,.4]$. The platform must thus find better questions for items of such quality. Figure~\ref{fig:psi_main} corroborates: our questions cannot separate two low quality paragraphs rated differently by experts (in dark blue and green). 
	
%	\item  
%\end{enumerate}

A wide range of recent empirical work has documented that real-world rating systems experience substantial inflation; almost all items receive positive ratings almost every match~\citep{filippas_reputation_2017,fradkin_determinants_2017,tadelis_reputation_2016}. Our formulation helps understand how -- and whether -- such inflation is suboptimal, and provides guidance to platform designers. In particular, rating inflation can be interpreted as a current $\beta(\theta)$ that is very high for almost all item qualities $\theta$.  This system is well-performing if the platform objective is to separate the worst items from the rest, or if high quality items receive many more ratings than low quality ones; it is clearly sub-optimal in other cases.  Our paper provides a template for how a platform might address such a situation.

\FloatBarrier
\begin{spacing}{0.9}
\bibliographystyle{plainnat}
%\bibliography{bib}

\end{spacing}

\newpage
\pagebreak
\begin{appendices}
\section{Mechanical Turk experiment, simulations, and results}
\label{sec:appmturk}
\FloatBarrier

In this section, we expand upon the results discussed in Section~\ref{sec:mturk}. We design and run an experiment that a real platform may run to design a rating system. We follow the general framework in Section~\ref{sec:applic_insights}. We first run an experiment to estimate a $\psi(\theta, y)$, the probability at which each item with quality $\theta$ receives a positive answer under different questions $y$. Then, we design $H(y)$, using our optimal $\beta$ for various settings (different objectives $w$ and matching rates $g$). Then, we simulate several markets (using the various matching rates $g$) and measure the performance of the different rating system designs $H$, as measured by various objective functions~\eqref{eq:objective}. 

\subsection{Experiment description}
\label{sec:appmturkdescription}

We now describe our Mechanical Turk experiment. We ask subjects to rate the English proficiency of ten paragraphs. These paragraphs are modified TOEFL (Test of English as a Foreign Language) essays with known scores as determined by experts~\citep{educational_testing_service_toefl_2005}. Subjects were given six answer choices, drawn randomly from the following list: \textit{Abysmal, Awful, Bad, Poor, Mediocre, Fair, Good, Great, Excellent, Phenomenal}, following the recommendation of~\citet{hicks_choosing_2000}. \textit{Poor} and \textit{Good} are always chosen, and the other four are sampled uniformly at random for each worker.  One paragraph is shown per page; returning to modify a previous answer is not allowed; and paragraphs are presented in a random order. This data is used to calibrate a model of $\psi$ for optimization, i.e. to simulate a system with a set of questions $\mY$, where each question $y$ corresponds to a adjective, ``Would you characterize the performance of this item as [adjective] or better?''.\footnote{The data from the experiment is also used for a separate paper, \cite{garg_designing_2018}. In that work, we analyze the full multi-option question directly, but the main focus is reporting the results of a separate, live trial on a large online labor platform.}

Different experiment trials are described below. Pilots were primarily used to garner feedback regarding the experiment from workers (fair pay, time needed to complete, website/UI comments, etc). All trials yield qualitatively similar results in terms of both paragraph ratings and feedback rating distributions for various scales. 
\begin{description}
	\item [Pilot 1] 30 workers. Similar conditions as final experiment (6 words sampled for paragraph ratings, all uniformly at random, 5 point scale feedback rating), with identical question phrasing, ``How does the following rate on English proficiency and argument coherence?''.
	\item [Pilot 2] 30 workers. 7 words sampled for paragraph ratings, 6 point scale feedback rating, with the following question phrasing: ``How does the following person rate on English proficiency and argument coherence?''.
	\item [Experiment] 200 workers. 6 words sampled for paragraph ratings, with 2 fixed as described above, 5 point scale feedback rating. Question phrasing, ``How does the following rate on English proficiency and argument coherence?''.
\end{description}

\begin{figure}
	\centering
	\begin{subfigure}[t]{.33\linewidth}
		\includegraphics[width=\linewidth]{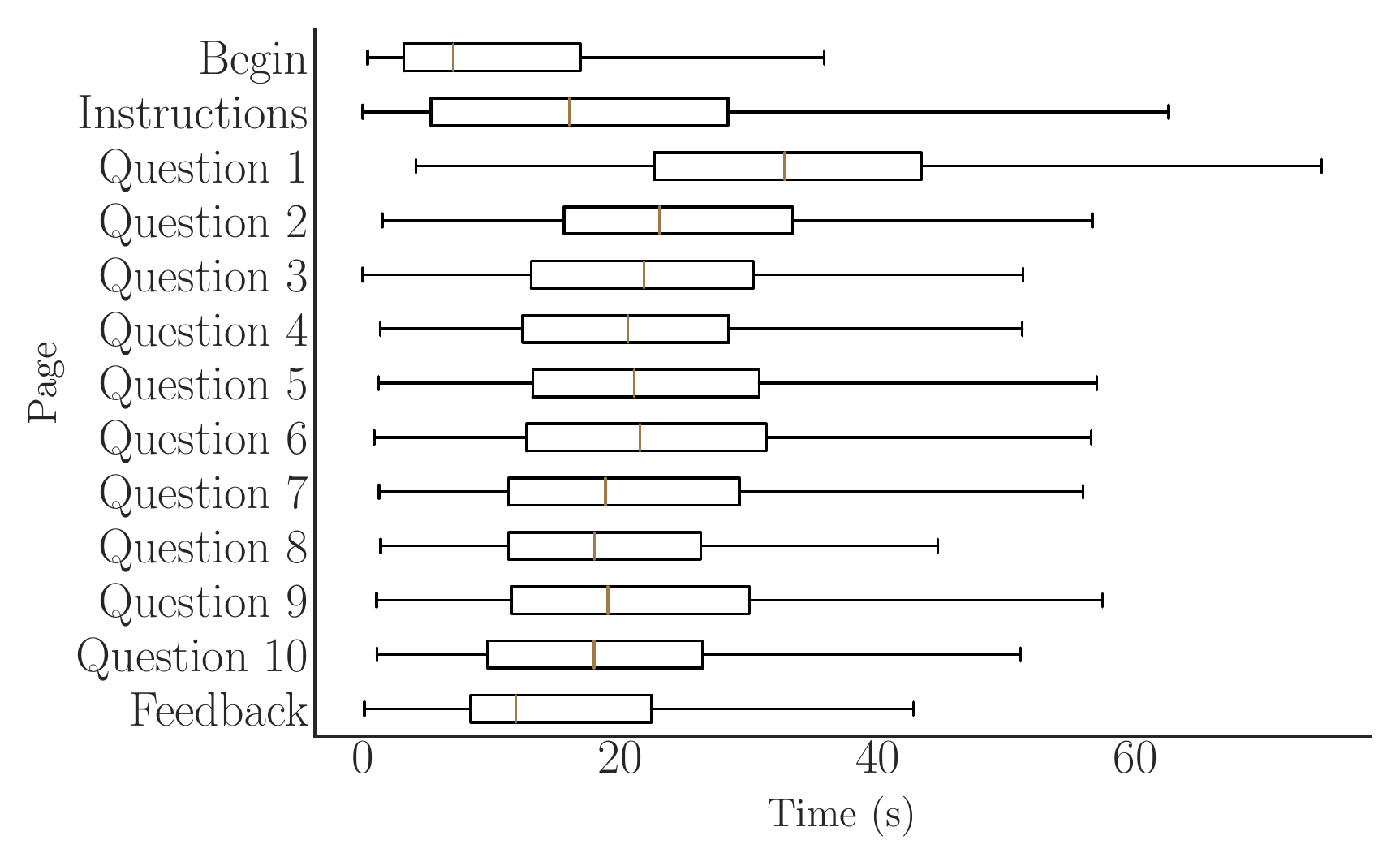}
		\caption{Time spent per page.}
		\label{fig:mturktimeperpage}
	\end{subfigure}\hfill
	\begin{subfigure}[t]{.33\linewidth}
		\includegraphics[width=\linewidth]{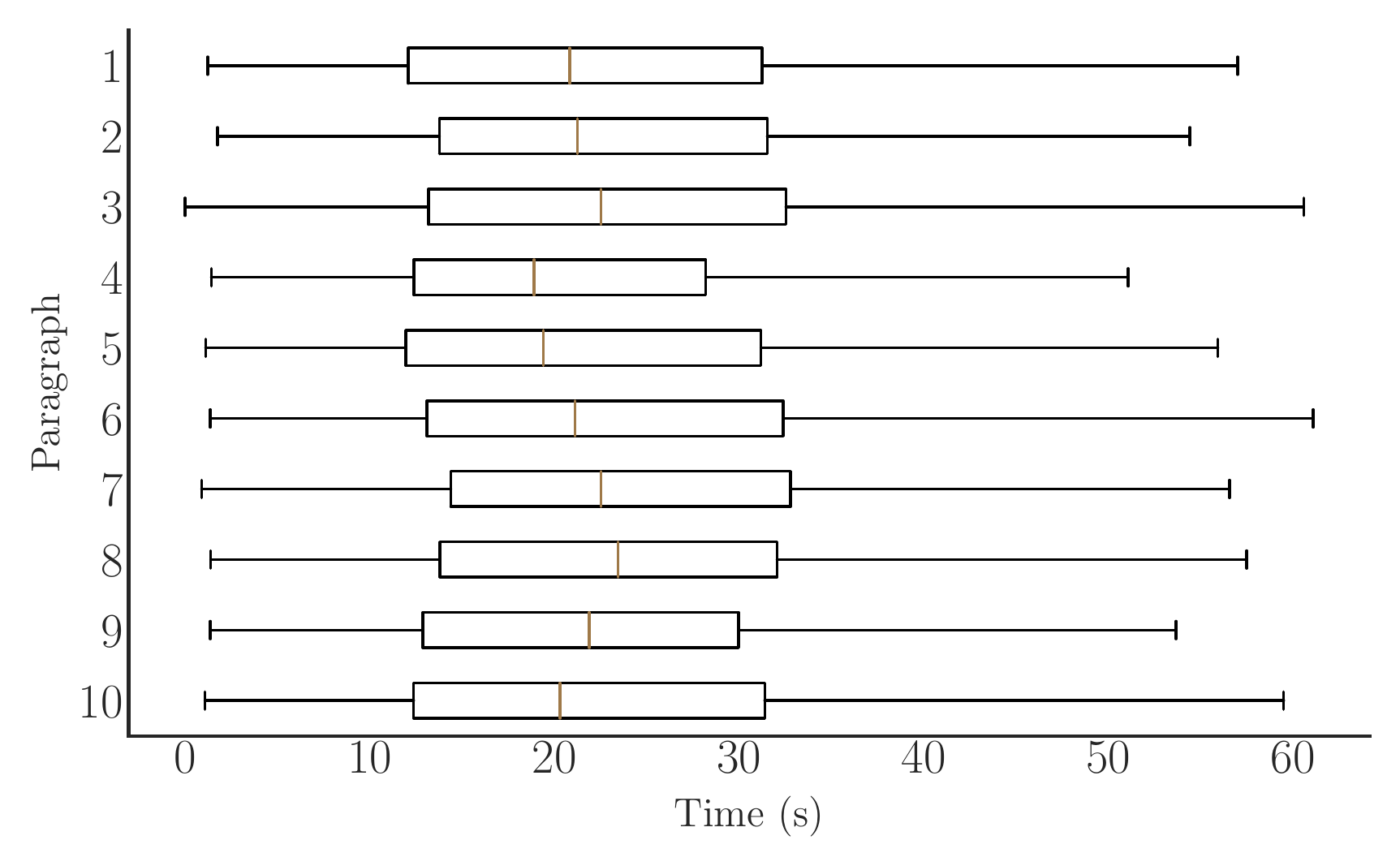}
		\caption{Time spent per paragraph.}
		\label{fig:mturktimeperpar}
	\end{subfigure}\hfill
	\begin{subfigure}[t]{.33\linewidth}
		\includegraphics[width=\linewidth]{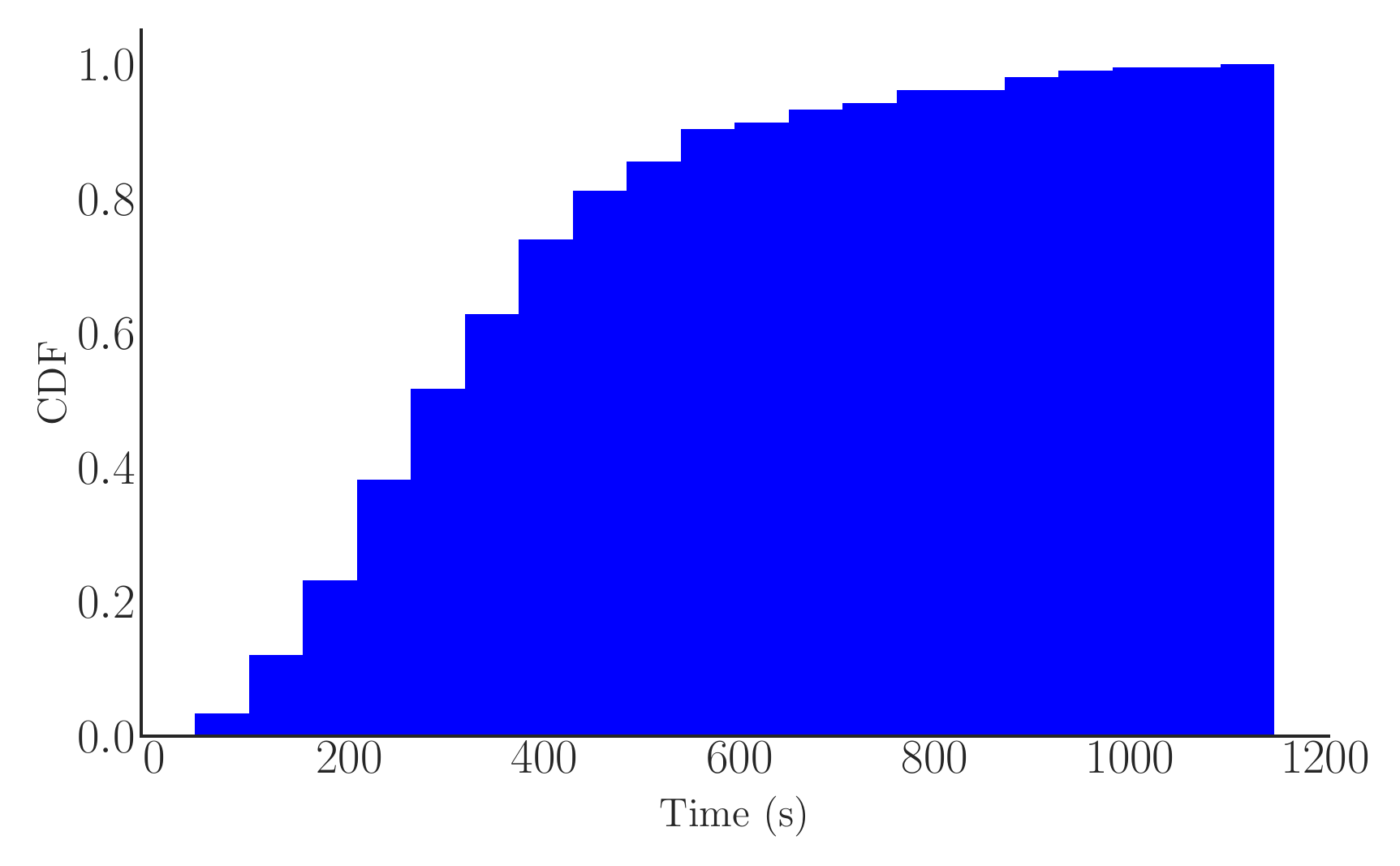}
		\caption{Distribution (CDF) of total time spent.}
		\label{fig:mturktimecdf}
	\end{subfigure}\hfill
	\caption{Additional information for MTurk experiment}
	\label{fig:mturkappendix}
\end{figure}

We use paragraphs modified from a set published by the Educational Testing Service~\citep{educational_testing_service_toefl_2005}. There are 10 paragraphs, 5 each on 2 different topics. For each topic, the paragraphs have 5 distinct expert scores. Paragraphs are shortened to just a few sentences, and the top rated paragraphs are improved and the worst ones are made worse, preserving the ranking according to the expert scores.

Figure~\ref{fig:mturktimeperpage} shows time spent on each page of the experiment, Figure~\ref{fig:mturktimeperpar} shows the time spent per paragraph, and Figure~\ref{fig:mturktimecdf} shows the cumulative density function for time spent by workers. The paragraphs are presented to workers in a random order. No workers are excluded in our data and all workers were paid $\$1.00$, including the ones that spent 2-3 seconds per page. $7/60$ workers in the pilots received a bonus of $\$0.20$ for providing feedback. The instructions advised workers to spend no more than a minute per question, though this was not enforced.

The instructions for the main experiment were as follows: ``Please rate on English proficiency (grammar, spelling, sentence structure) and coherence of the argument, but not on whether you agree with the substance of the text.'' No additional context was provided. 

\subsection{Calculating optimal $\beta$ and $H$}
\begin{figure}
	\centering
	\includegraphics[width=.5\linewidth]{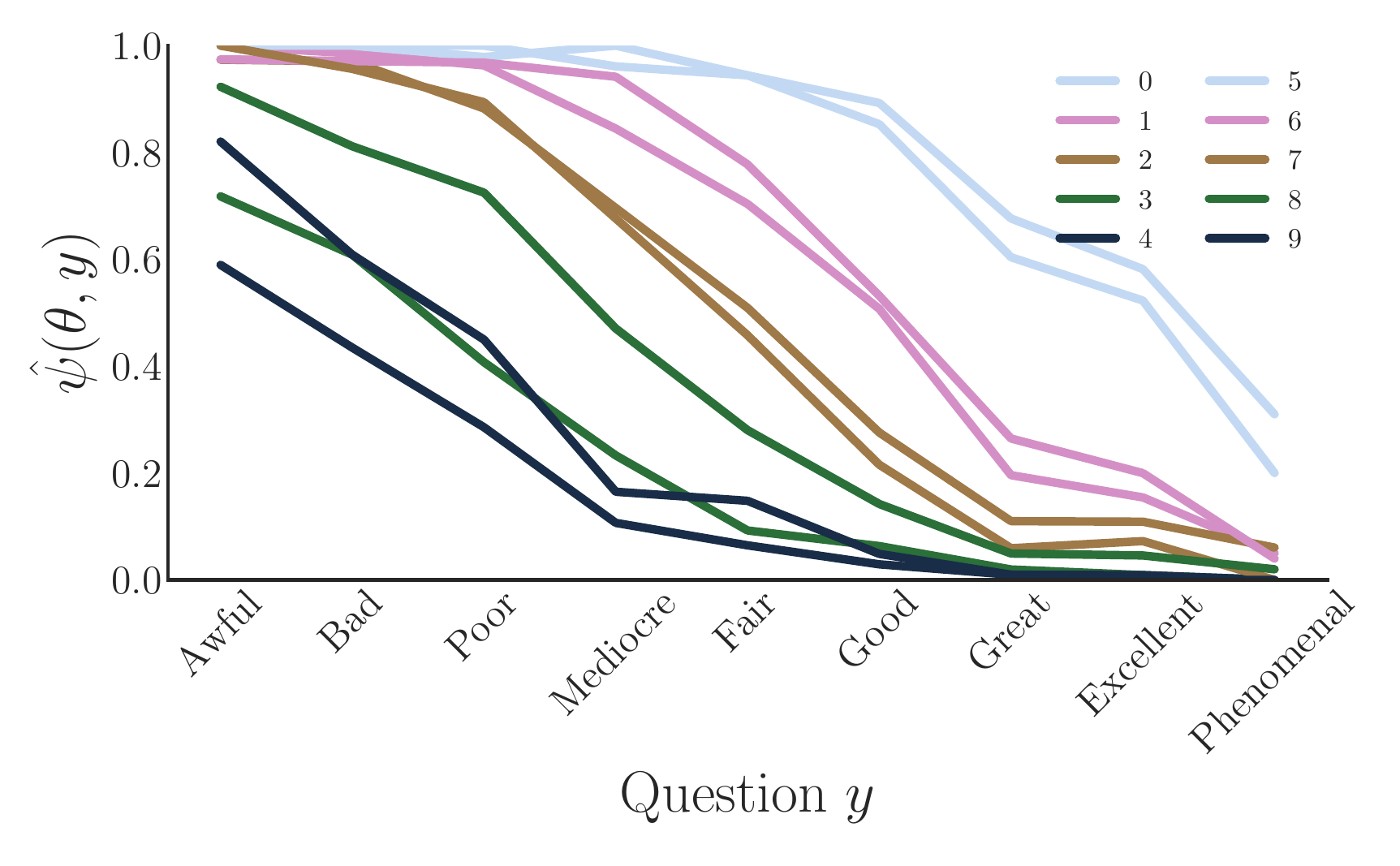}
	\caption{Paragraph rating distribution -- for paragraph $\theta$ and rating word $y$, the empirical $\hat\psi (\theta, y)$ is shown. Colors encode the true quality as rated by experts (light blue is best quality, dark blue is worst). }
	\label{fig:paragraphdist}
\end{figure}

Figure~\ref{fig:paragraphdist} shows the empirical $\hat\psi(\theta, y)$ as measured through our experiment. The colors encode the true quality as rated by experts (light blue is best quality, dark blue is worst); recall there are 10 paragraphs with 5 distinct expert ratings (paragraphs 0 and 5 are rated the best, paragraphs 4 and 9 are rated the worst). 

With the $\beta$ calculated and visualized using the methods in Section~\ref{sec:model_algo}, we now find the optimal $H$ for various settings using the methods in Section~\ref{sec:applic_insights}. We view our set of paragraphs as \textit{representative items} $\Theta$ from a larger universe of paragraphs. In particular, we view our worst quality paragraphs as in the $10$th percentile of paragraphs, and our best items as in the $90$th percentile. In other words, from the empirical $\hat\psi$, we carry out the methods in Section~\ref{sec:applic_insights} using a $\psi$ s.t. $\psi(.1, y) = (\hat\psi(4, y)+ \hat\psi(9, y)/2$ (and similarly for $\psi(.3, y),\psi(.5, y),\psi(.7, y),\psi(.9, y)$, where e.g. $\hat\psi(4, y)$ is the empirical rate at which paragraph 4 received a positive rating on question $y$. 

Then, we solve the optimization problem for $H$ stated in Section~\ref{sec:applic_insights}. From the above discussion, we want to find an $H$ such that the worst rated paragraphs in our experiment have a probability of receiving a positive rating that is approximately $\beta(.1)$.

\begin{figure}
	\centering
	\begin{subfigure}[t]{.48\linewidth}
		\includegraphics[width=\linewidth]{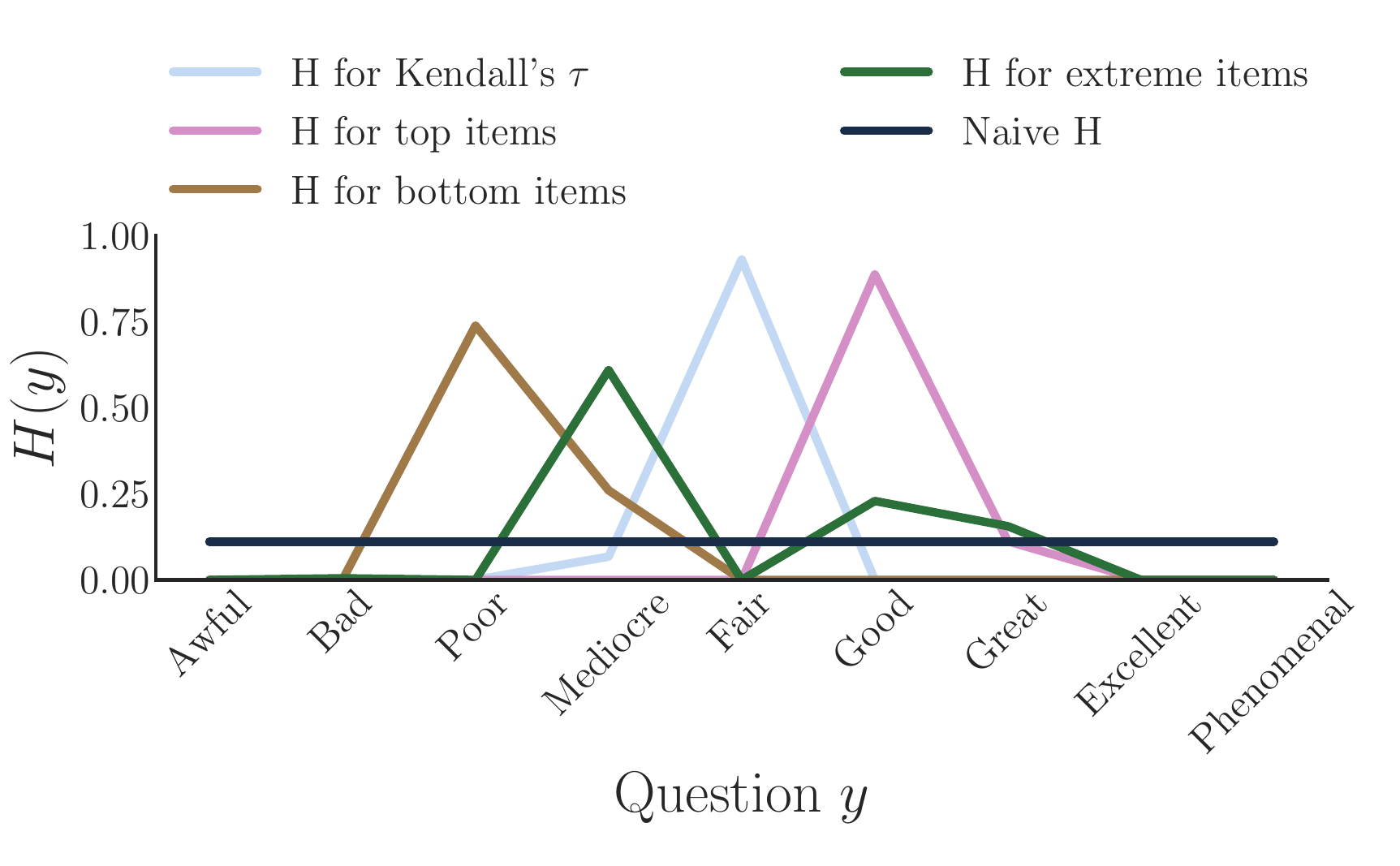}
		\caption{Uniform search, $g(\theta) = 1$.}
		\label{fig:stvaryw}
	\end{subfigure}\hfill
	\begin{subfigure}[t]{.48\linewidth}
		\includegraphics[width=\linewidth]{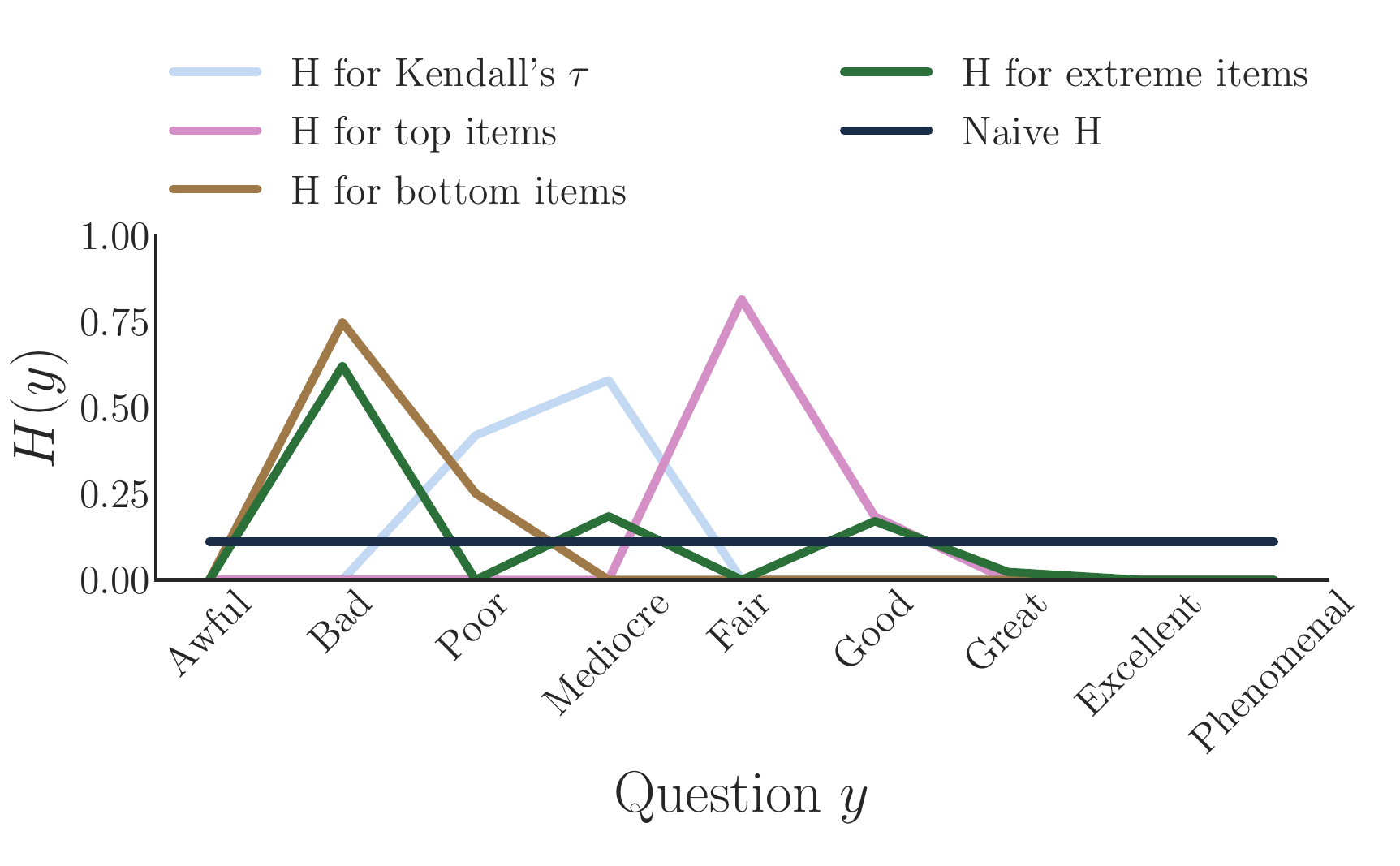}
		\caption{Linearly increasing search, $g(\theta) = \frac{1 + 10\theta}{11}$.}
		\label{fig:stlinearsearch}
	\end{subfigure}\hfill	
	\caption{Optimal $H(y)$ varying by $w(\theta_1, \theta_2)$ using Mechanical Turk data}
	\label{fig:mturkresults}
\end{figure}

Figure~\ref{fig:mturkresults} shows the optimal $H$ calculated for various platform settings. These distributions illustrate how often certain binary questions should be asked as it depends on the matching rates and platform objective. For example, as Figure~\ref{fig:stvaryw} shows, when there is uniform matching and the platform cares about the entire ranking (i.e. has Kendall's $\tau$ or Spearman's $\rho$ objective), it should ask most buyers to answer the question, ``Would you rate this item as having `Fair' quality or better?''. 

Several qualitative insights can be drawn from the optimal $H$. Most importantly, note that the optimal designs vary significantly with the platform objective and matching rates. In other words, given the same empirical data $\hat{\psi}$, the platform's design changes substantially based on its goals and how skewed matches are on the platform. Further, note that the differences in $H$ follow from the differences in $\beta$ that are illustrated in Figure~\ref{fig:betadifferent}: when the platform wants to accurately rank the best items, the questions that distinguish amongst the best (e.g., ``Would you rate this item as having `Good' quality?'') are drawn more often.  

\subsection{Simulation description}
\label{sec:mturksimulationdesc}

\begin{figure}[t]
	\centering
	\begin{subfigure}[t]{.5\linewidth}
		\includegraphics[width=\linewidth]{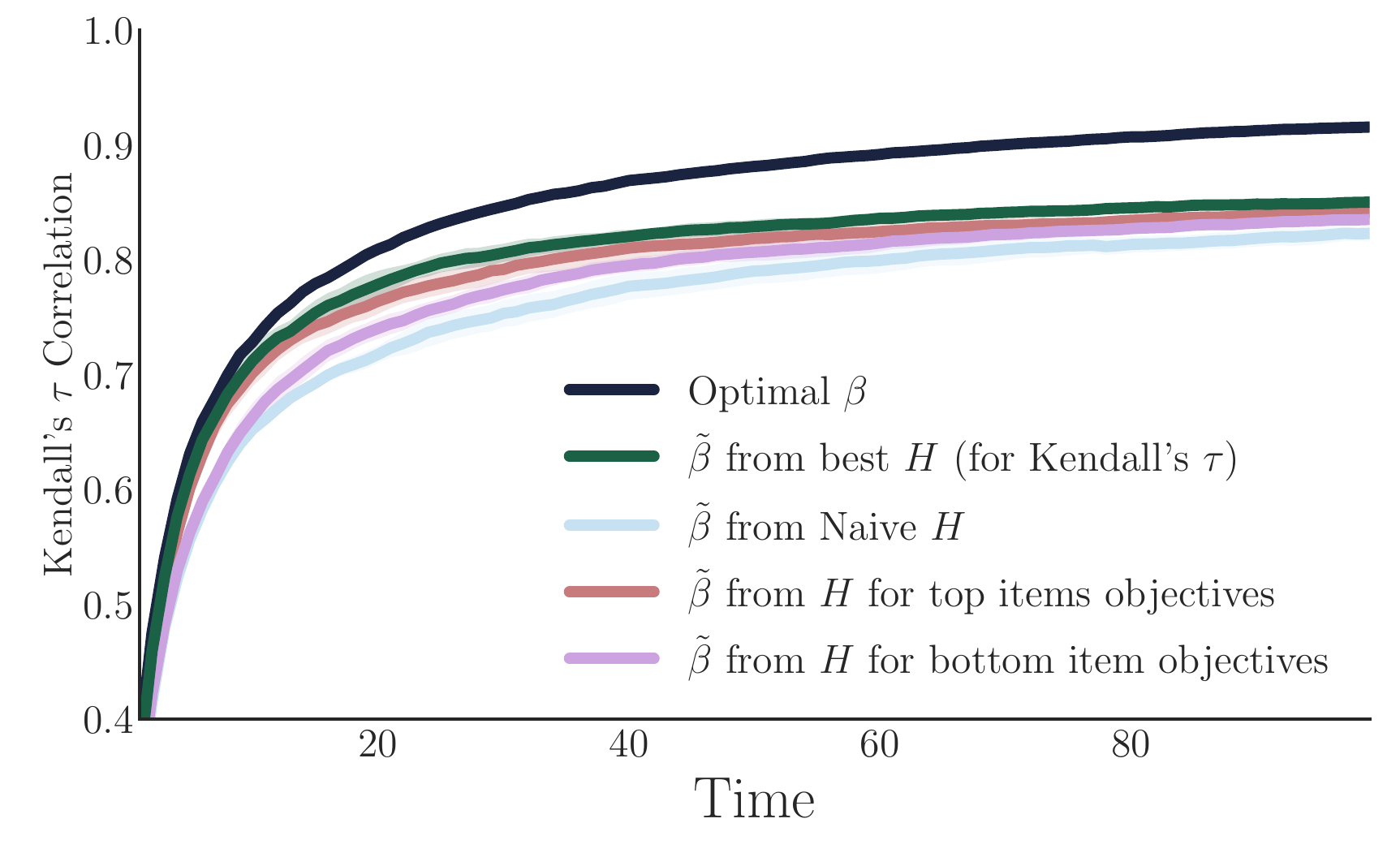}
		\caption{Uniform matching, no birth/death}
		\label{fig:mturk_unifdeath0}
	\end{subfigure}\hfill
	\begin{subfigure}[t]{.5\linewidth}
		\includegraphics[width=\linewidth]{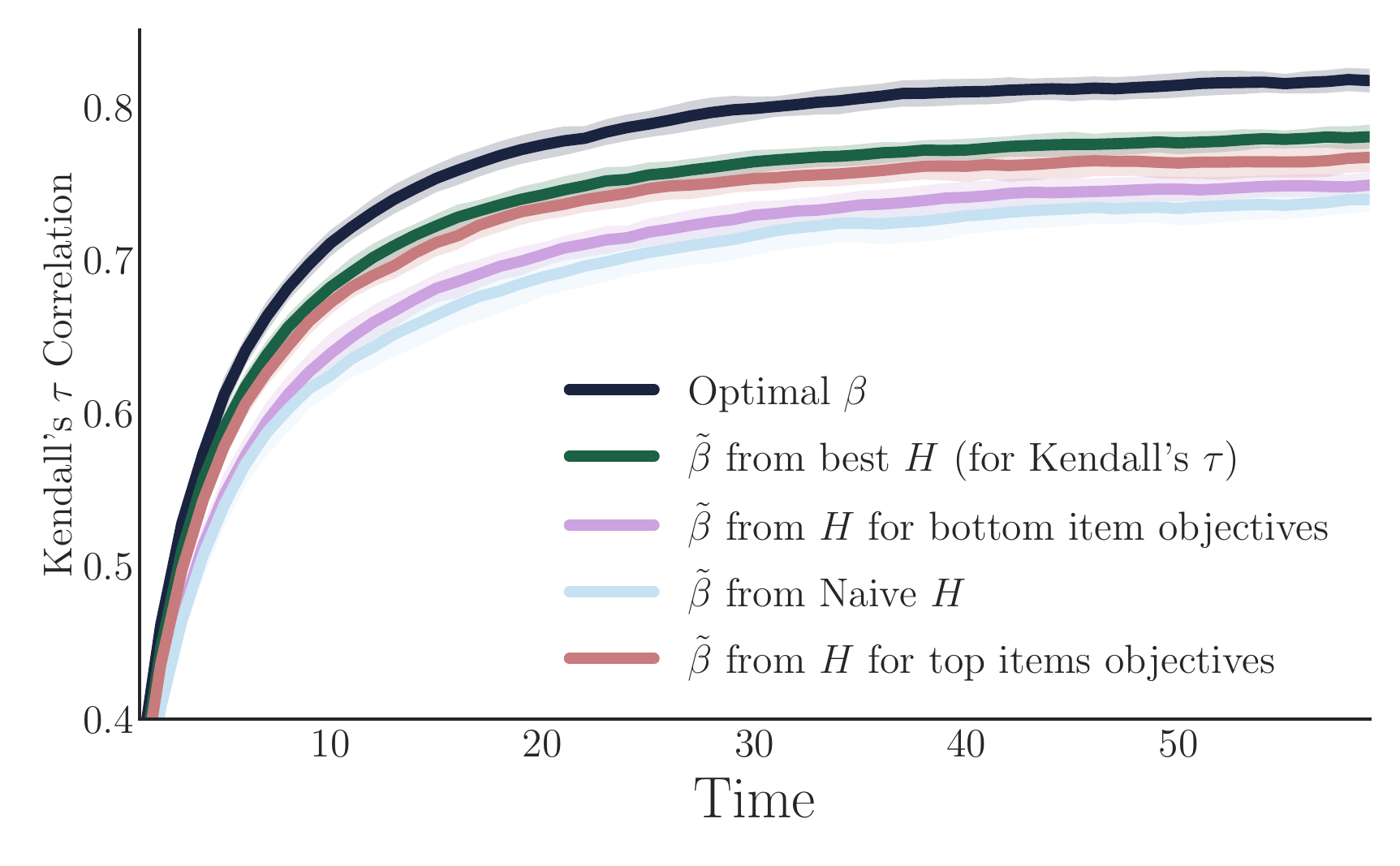}
		\caption{Uniform matching, 2\% probability of death per time}
		\label{fig:mturk_unifdeath02}
	\end{subfigure}\hfill
	
	\begin{subfigure}[t]{.5\linewidth}
	\includegraphics[width=\linewidth]{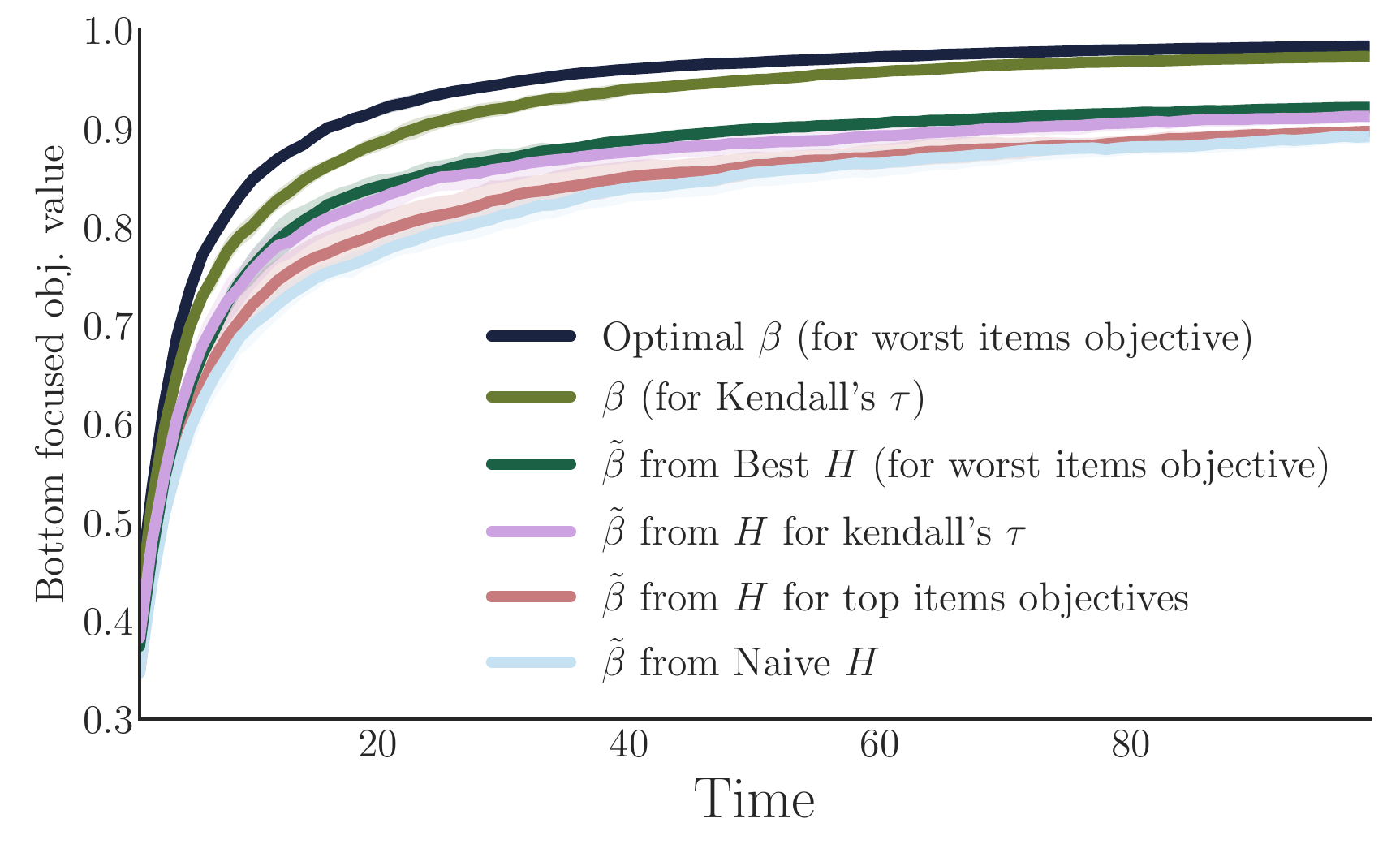}
	\caption{Uniform matching, no birth/death, worst items weighted objective}
	\label{fig:mturk_differentobjective}
	\end{subfigure}\hfill
	\begin{subfigure}[t]{.5\linewidth}
		\includegraphics[width=\linewidth]{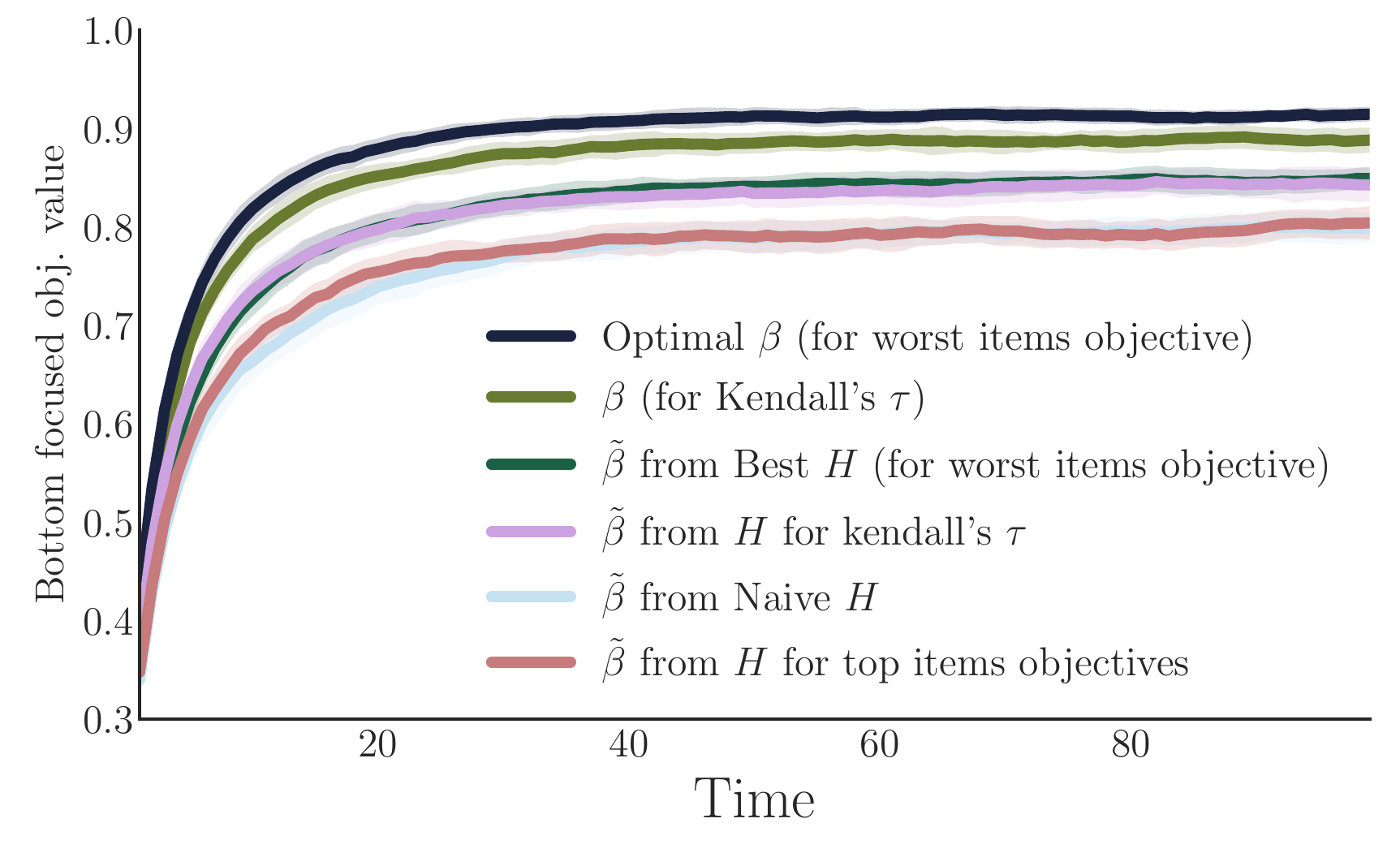}
		\caption{Uniform matching, 2$\%$ birth/death, worst items weighted objective}
		\label{fig:mturk_differentobjective_death}
	\end{subfigure}\hfill

	\begin{subfigure}[t]{.5\linewidth}
		\includegraphics[width=\linewidth]{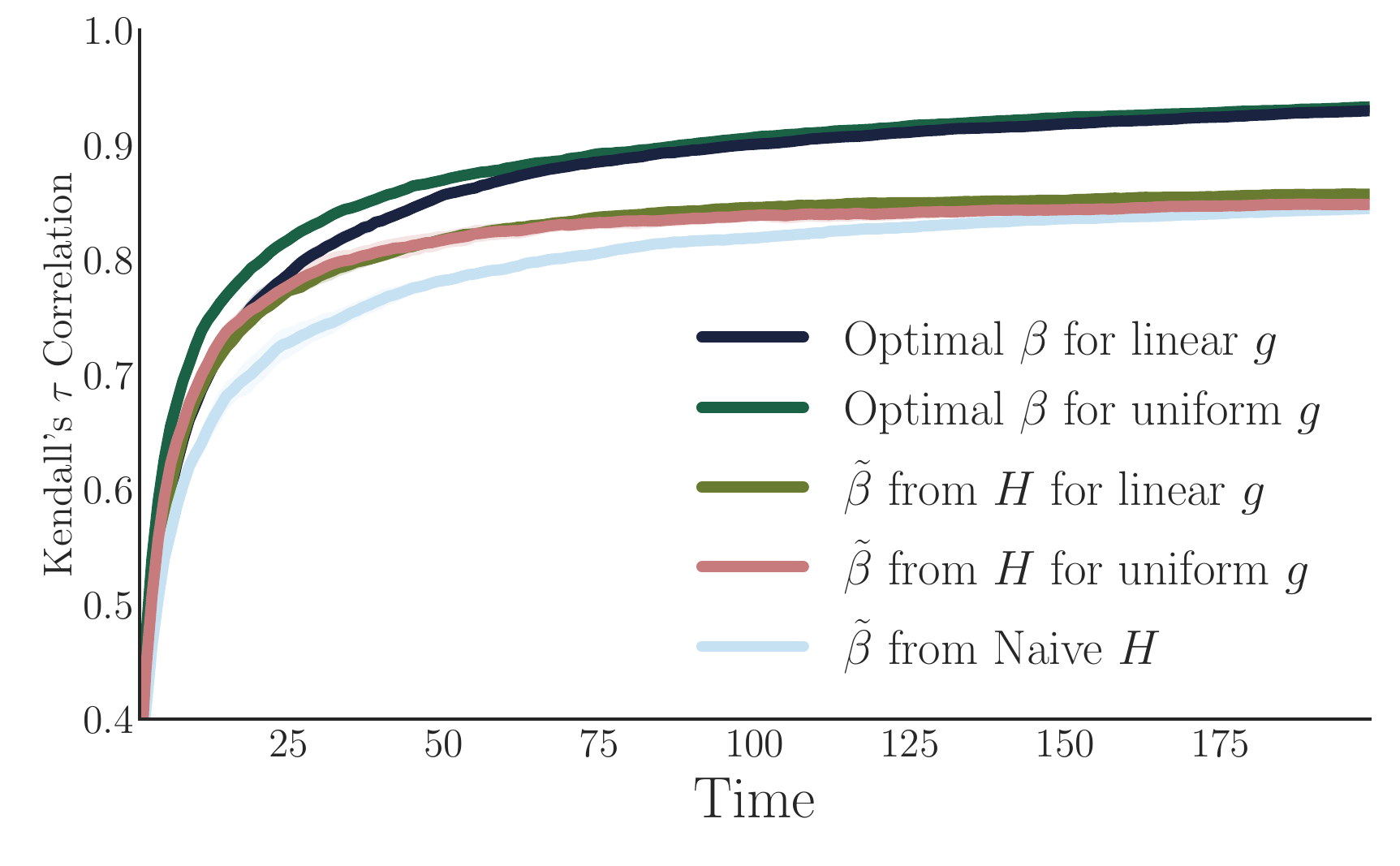}
		\caption{Linear matching, no birth/death}
		\label{fig:mturk_lineardeath0}
	\end{subfigure}\hfill
	\begin{subfigure}[t]{.5\linewidth}
		\includegraphics[width=\linewidth]{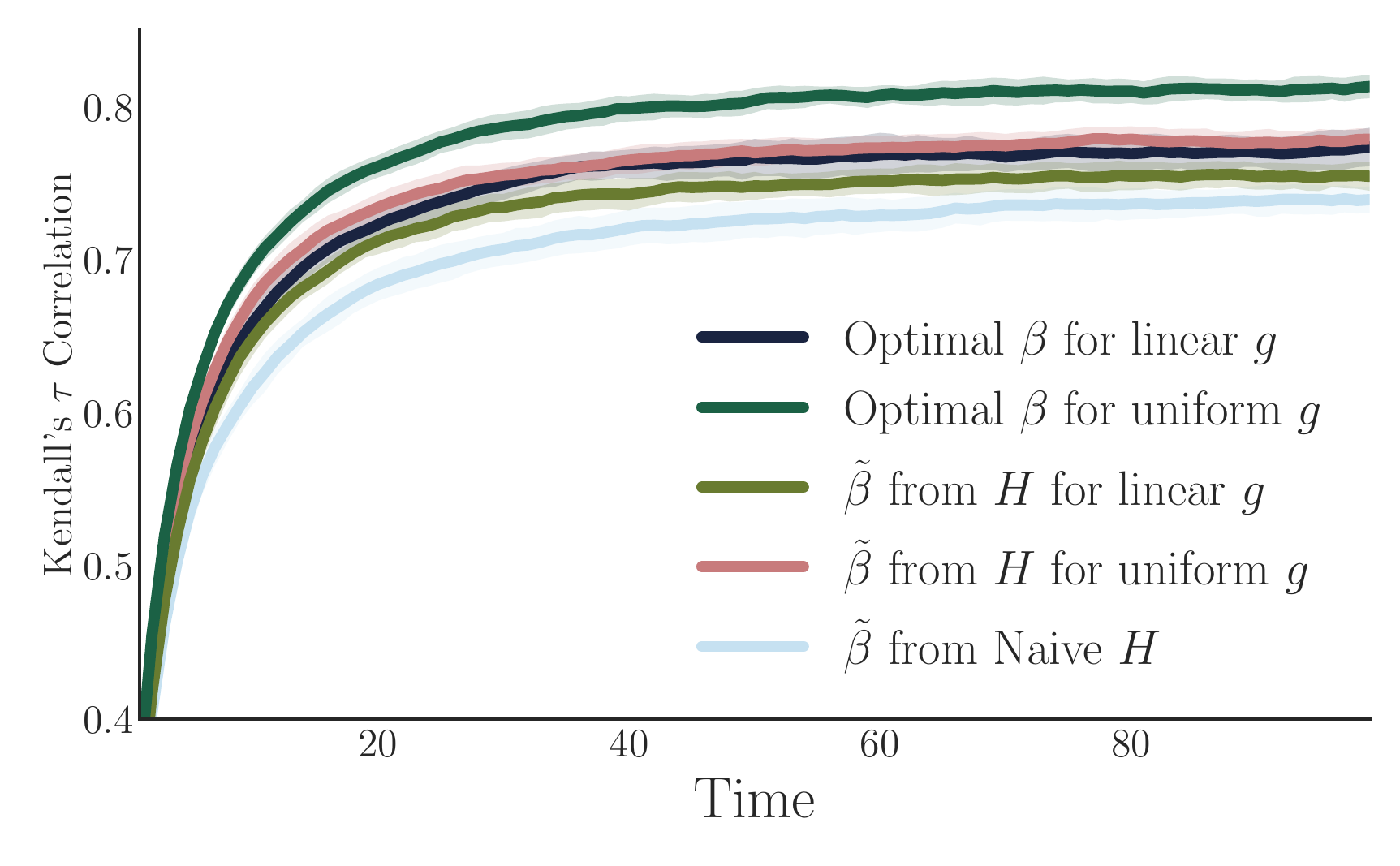}
		\caption{Linear matching, 2\% probability of death per time}
		\label{fig:mturk_lineardeath02}
	\end{subfigure}\hfill

	\caption{Simulations from data from Mechanical Turk experiment -- Binary rating system} 
	\label{fig:mturksimulationsbinary}
\end{figure}

Using the above data and subsequent designs, we simulate markets with a binary rating system as described in Section~\ref{sec:modelbinary}.  Our simulations have the following characteristics.
\begin{itemize}
	\item 500 items. Items have i.i.d. quality in $[0,1]$. For item with quality $\theta$,  we model buyer rating data using the $\psi$ collected from the experiment as follows. In particular, we presume the items are convex combinations of the representative items in our experiment -- items with quality $\theta \in [.1, .3]$ are assumed to have rating probabilities $\psi(\theta, y) = \alpha \psi(.1,y) + (1-\alpha)\psi(.3, y)$, where $\alpha = (\theta-.1)/.2$. Similarly for $\theta$ in other intervals. This process yields the $\tilde{\beta}$ shown in Figure~\ref{fig:betavsbetatilde}. 
	\item In some simulations, all items enter the market at time $k = 0$ and do not leave. In the others, with \textit{entry and exit}, each item independently leaves the market with probability $.02$ at the end of each time period, and a new item with quality drawn i.i.d. from $[0,1]$ enters.
	\item There are 100 buyers, each of which matches to an item independently. In other words, matching is independent across items, and items can match more than once per time period. 
	\item Matching is random with probability as a function of an item's \textit{estimated} rank $\hat \theta$ according to score, rather than actual rank. In other words, the optimal systems were designed assuming item $\theta$ would match at rate $g(\theta)$; instead it matches according to $g(\hat{\theta})$, where $\hat{\theta}$ is the item's rank according to score. We use both $g = 1$ and \textit{linear} search, $g(\hat \theta)  = \frac{1 + 10\hat{\theta}}{11}$. 
	\item $\mY$ is the set of 9 adjectives from our MTurk experiments. 
	\item We test several possible $H$: \textit{naive} with $H(y) = \frac{1}{|\mY|}$, and then the various optimal $H$ calculated for the different sections, illustrated in Figure~\ref{fig:mturkresults}. 
\end{itemize}

\subsubsection{Simulation results}
\label{sec:appmturkperformancesimulate}

Figure~\ref{fig:mturksimulationsbinary} contains plots from a simulated system that has binary ratings. Figures~\ref{fig:mturk_unifdeath0},~\ref{fig:mturk_unifdeath02} are with uniform search ($g = 1$),  Figures~\ref{fig:mturk_differentobjective} and \ref{fig:mturk_differentobjective_death} plot the objective prioritizing the worst items,
 and Figures~\ref{fig:mturk_lineardeath0} and \ref{fig:mturk_lineardeath02} are with linearly increasing search. For each setting, we include both plots with and without birth/death. 
 
 Together, the results suggest that the asymptotic and rate-wise optimality of our calculated $\beta$ hold even under deviations of the model, and that the real-world design approach outlined in Section~\ref{sec:applic_insights} would provide substantial information benefits to platforms. 
 
 Several specific qualitative insights can be drawn from the figures, alongside those discussed in the main text.
 \begin{enumerate}[leftmargin=*]
 	\item From all the plots with uniform search, the $H$ designed using our methods for the given setting outperforms other $H$ designs, as expected, and the optimal $\beta$ (for the given setting) significantly outperforms other designs both \textit{asymptotically} and \textit{rate-wise}.
 	\item Qualitatively, again with uniform search, heterogeneous item age also does not affect the results. In fact, it seems as if the optimal $\beta$ and best possible $H$ (given the data) as calculated from our methods outperforms other designs both \textit{asymptotically} and \textit{rate-wise}. Note that this is true even though items entering and leaving the market means that the system may not enter the asymptotics under which our theoretical results hold. 
	\item Figures~\ref{fig:mturk_differentobjective} and \ref{fig:mturk_differentobjective_death} show the same system parameters as  Figures~\ref{fig:mturk_unifdeath0},~\ref{fig:mturk_unifdeath02}, i.e. uniform search. However, while~\ref{fig:mturk_unifdeath0},~\ref{fig:mturk_unifdeath02} show Kendall's $\tau$ correlation over time,~\ref{fig:mturk_differentobjective} and \ref{fig:mturk_differentobjective_death} show the objective prioritizing bottom items ($w = (1-\theta_1)(1-\theta_2)(\theta_1-\theta_2)$). Note that the $\beta$ calculated for the actual objective outperforms that calculated for Kendall's $\tau$, including asymptotically.
	
	 Similarly, complementing the fact that $H$ design changes significantly with the weight function, these plots show the value of designing while taking into account one's true objective value -- the different designs perform differently. Mis-specifying one's objective (e.g. designing to differentiate the best items when one truly cares about the worst items) leads to a large gap in performance (e.g. see the gap between the dark green and red lines in \ref{fig:mturk_differentobjective} and \ref{fig:mturk_differentobjective_death}). 
	 
	 Note that comparing the performance of $\beta$ for the misspecified objective and $H$ for the true objective is not a fair comparison: the former differentiates between all items (though potentially not in a rate-optimal way), while $H$ is constrained by reality, i.e. $\psi$ and $\mY$. 

 	\item Now, consider Figures~\ref{fig:mturk_lineardeath0} and \ref{fig:mturk_lineardeath02}, which plot the system with linearly increasing search. Note that, contrary to expectation, the optimal $\beta$ for \textit{uniform} search outperforms the $\beta$ for the actual system simulated, with linear search! This pattern is especially true for small time $k$ and with item birth/death.  
 	
 	This inversion can be explained as follows. \textit{Uniformization} occurs with heterogeneous age and matching according to observed quality: new items of high type are likely to be mis-ranked lower, while new items of low type are more likely to be mis-ranked higher. (We note that this may not matter in practice, where the search function itself is fit through data, which already captures this effect.) These errors are prominent at low time $k$ and with item birth/death, i.e., in the latter our system never reaches the asymptotics at which the linear $\beta$ is the optimal design. 
 	
 	This pattern can be seen more clearly by comparing the two $\beta$ curves in Figures~\ref{fig:mturk_lineardeath0}, without item birth/death. At small $k$, when  errors are common and so search is more effectively uniform, the $\beta$ for uniform matching performs the best. However, as such errors subside over time, the performance of the $\beta$ for linear search catches up and eventually surpasses that of uniform optimal $\beta$. 
  
 \end{enumerate}
 
\FloatBarrier

\section{Supplementary theoretical information and results}

We now give some additional detail and develop additional results. Section~\ref{sec:appformalspec} contains the formal specification and update of our deterministic dynamical system. Section~\ref{sec:algorithm} gives our algorithm, Nested Bisection, is far more detailed pseudo-code. Section~\ref{sec:objectivematchingeffect} formalizes our earlier qualitative discussion on how matching rates affects the function $\beta$. Section~\ref{sec:discretization} includes a convergence result for functions $\beta_M$ as $M$ increases. Finally, Section~\ref{sec:realworlddesignexperiment} contains simple results on how one can learn $\psi(\theta, y)$ through experiments, even if one does not have a reference set of items $\Theta$ with known quality before one begins experiments. 

\subsection{Formal specification of system state update}
\label{sec:appformalspec}
Recall that $\mu_k(\Theta, X)$ is the mass of items with true quality $\theta \in \Theta \subseteq [0,1]$ and a reputation score $x\in X\subseteq [0,1]$ at time $k$. Let $E_k = \{ \theta : n_k(\theta) = n_{k-1}(\theta) + 1 \}$.  These are the items who receive an additional rating at time $k$; for all $\theta \in E_k^c$, $n_k(\theta) = n_{k-1}(\theta)$. Our system is completely deterministic, and evolves according to the distributions of the individual seller dynamics. 

For each $\theta\in E_k, x, x'$, define $\omega(\theta, x, x')$ as follows:
\[ \omega(\theta, x, x') = \beta(\theta) \bbI\{ n_k(\theta) x - n_{k-1}(\theta) x' = 1 \} + (1 - \beta(\theta)) \bbI \{ n_k(\theta) x - n_{k-1}(\theta) x' = 0 \}. \]
Then $\omega$ gives the probability of transition from $x'$ to $x$ when an item receives a rating.
We then have:
\[ \mu_{k+1}(\Theta, X) = \int_{E_k} \int_{x' = 0}^1 \int_{x\in X} \omega(\theta, x, x')  dx \mu_k(dx', d\theta) + \int_{E_k^c} \int_{x\in X} \mu_k(dx, d\theta). \]
It is straightforward but tedious to check that the preceding dynamics are well defined, given our primitives.

\subsection{Detailed algorithm}
\label{sec:algorithm}
Here, we present the Nested Bisection algorithm, which is described at a high level and summarized in pseudo-code in the main text, in more detail. 

\FloatBarrier
%Before made switch to just starting at first level instead of last
\begin{algorithm}
	\DontPrintSemicolon
	\KwData{Set size $M$, grid width $\delta$, match function $g$ \,\,\,\,\,\,\,\, \tcc{Assume $\delta << \min_i t_i - t_{i-1}$}}
	\KwResult{$\beta_M$ levels $\{t_0 \dots t_{M-1}\} $}
	\Fn{main ($M$, $\delta$, $g$)}{
		%		$\{t_i\} = \emptyset$\;
		$t_0 = 0, t_{M-1} = 1$\;
		$\ell = 1 - \frac{1}{M-1}, u = 1 - \delta$ \;
		\While{$u - \ell > \delta/2$}{
			$j_{M-2} = \frac{r+\ell}{2}$\;
			%		\ForEach{$j_{M-1} \in \{1-\delta, 1-2\delta \dots M\delta\}$}{
			$\text{rate}_\text{last} = -g_{M-2} \log (t_{M-2})$ \;%PairwiseRateLast($j_{M-1}$, $g_{M-1}$)\;
			$\{j_1 \dots  j_{M-3}\} = $ \textit{CalculateOtherLevels}($j_{M-2}, \text{rate}_\text{last}, g$)\;
			$\text{rate}_\text{first} = -g_1 \log (1-t_1)$\;%PairwiseRateFirst($j_{1}$, $g_{1}$)\;
			\lIf{$\text{rate}_\text{first} < \text{rate}_\text{last}$}{
				$\ell = j_{M-2}$
				%				return $\{t_i\}$\;
			}
			\lElse{
				$u = j_{M-2}$
			}
		}
		%			return \textit{CalculateOtherLevels}($u, g$)\;
		$\{t_1 \dots  t_{M-2}\} = $ \textit{CalculateOtherLevels}($u, g$)\;
		$t_{M-2} = u$\;
		return $\{t_i\}$\;
	}
	\Fn{PairwiseRate ($t_{m-1}$, $t_{m}$, $g_{m}$, $g_{m-1}$)}{
		return $-(g_{{m-1}} + g_{m})\log \left[(1-t_{m-1})^{\frac{g_{m-1}}{g_{{m-1}} + g_{m}}}(1-t_{m})^{\frac{g_m}{g_{{m-1}} + g_{m}}} + t_{m-1}^{\frac{g_{m-1}}{g_{{m-1}} + g_{m}}}t_{m}^{\frac{g_m}{g_{{m-1}} + g_{m}}}\right]$
	}
	\Fn{CalculateOtherLevels ($j_{M-2}$, $\text{rate}_\text{target}$, $g$)}{
		\tcc{Given target rate from current guess $j_{M-2}$, sequentially fix other levels.}
		\ForEach{$m \in M-3 \dots 1$}{
			$j_m = $ \textit{BisectNextLevel}($j_{m+1}, \text{rate}_\text{target}, g_{m}, g_{m+1}$)\;
		}
		return $\{j_1 \dots j_{M-3}\}$\;
	}
	\Fn{BisectNextLevel ($j_m, \text{rate}_\text{target}, g_{m-1}, g_m$)}{
		$\ell = 0, r = j_m - \delta$ \;
		\While{$r - \ell > \delta/2$}{
			$j_{m-1} = \frac{r+\ell}{2}$\;
			$\text{rate}_\text{m} = $ \textit{PairwiseRate}($j_{m-1}$, $j_{m}$, $g_{m-1}$, $g_m$)\;
			\lIf{$\text{rate}_\text{m} \leq \text{rate}_\text{target}$}{		$r = j_{m-1}$		}
			\lElse{
				$\ell = j_{m-1}$
			}
		}
		return $r$\;
	}
	\caption{Nested Bisection given in more detail}
	\label{alg:Fasttimealgorithm_detailed}
\end{algorithm}
\FloatBarrier

\subsection{Formalization of effect of matching rates shifting}
\label{sec:objectivematchingeffect}

Matching concentrating at the top items moves mass of $\beta(\theta)$ away from high $\theta$, and subsequently mass of $H(y)$ \textit{away} from the questions that help distinguish the top items, as observed in Figures~\ref{fig:betadifferentmatching}~and~\ref{fig:stlinearsearch} above. Informally, this occurs because when matching concentrates, top items are accumulating many ratings more ratings comparatively, and so the amount of information needed per rating is comparatively less. We formalize this intuition in Lemma~\ref{lem:shiftgmass} below.

The lemma states that if matching rates shift such that there is an index $k$ above which matching rates increase and below which they decrease, then correspondingly the levels of $\beta$, (i.e. $t_i$) become closer together above $k$.

\begin{restatable}{lemma}{lemshiftgmass}
	Suppose $k, g, \tilde{g}$ such that $\forall j \in \{k+1 \dots M-1\}, g_j \geq \tilde{g}_j$, and $\forall j \in \{0 \dots k-1\}, g_j \leq \tilde{g}_j$, and $g_k =\tilde{g}_k$. Then, $t^*_k \geq \tilde{t}^*_k$. \label{lem:shiftgmass}
\end{restatable}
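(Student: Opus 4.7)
I will leverage the characterization of the optimum in Lemma~\ref{lem:systemofequations}: the unique optimizer $\v{t}^*(g)$ is the solution of the system equalizing all consecutive pairwise rates to a common value $r^*(g)$, anchored at $t_0 = 0$ and $t_{M-1} = 1$. The main idea is to represent $t^*_k(g)$ as the unique intersection of two monotone ``sweep'' functions and then track how each sweep responds to the hypothesized shift in $g$.

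Concretely, for any target rate $r > 0$ and any matching profile $g$, define the forward sweep $L_i(r;g)$ by $L_0 = 0$ and $r(L_{j-1}, L_j; g_{j-1}, g_j) = r$ for $j = 1, \ldots, i$, and the backward sweep $U_i(r;g)$ by $U_{M-1} = 1$ and $r(U_j, U_{j+1}; g_j, g_{j+1}) = r$ for $j = M-2, \ldots, i$. By Lemma~\ref{lem:systemofequations}, $t^*_k(g) = L_k(r^*(g);g) = U_k(r^*(g);g)$, and analogously for $\tilde{g}$.

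The core technical step is to establish monotonicity properties of these sweeps, all of which will follow by induction from properties of the one-step map $\phi(r, s; g_{\text{prev}}, g_{\text{curr}})$ that returns the unique next level above $s$ achieving pairwise rate $r$. Using the inf representation $r(s,t) = \inf_a \{ g_{\text{prev}} \text{KL}(a\|s) + g_{\text{curr}} \text{KL}(a\|t)\}$ from Theorem~\ref{thm:CanMaximizeAsymValueAndRate} (together with strict convexity of KL in its second argument on each side of $a$), one checks that the pairwise rate is strictly increasing in $t - s$ and strictly increasing in each matching weight; hence $\phi$ is strictly increasing in $r$, strictly increasing in $s$, and strictly decreasing in both $g_{\text{prev}}$ and $g_{\text{curr}}$. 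Composing these yields: (a) $L_k(r;g)$ is strictly increasing in $r$; (b) $L_k(r;g)$ is weakly decreasing in $g_j$ for every $j \leq k$; (c) $U_k(r;g)$ is strictly decreasing in $r$; and (d) $U_k(r;g)$ is weakly increasing in $g_j$ for every $j \geq k$. Verifying (b) and (d) cleanly, and propagating sign changes through the implicitly-defined $\phi$, will be the main obstacle; the inductive step requires that decreases in $L_{j-1}$ combined with a decrease in $g_j$ jointly weakly decrease $L_j$, which uses both monotonicity properties of $\phi$ in the same direction.

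With these monotonicity facts in hand, the conclusion is a short contradiction argument. Since $\tilde{g}_j \geq g_j$ for $j < k$ and $\tilde{g}_k = g_k$, property (b) gives $L_k(r^*(g);\tilde{g}) \leq L_k(r^*(g);g) = t^*_k$; symmetrically, $\tilde{g}_j \leq g_j$ for $j > k$ together with (d) yields $U_k(r^*(g);\tilde{g}) \leq t^*_k$. Suppose for contradiction that $\tilde{t}^*_k > t^*_k$. Then
\[ L_k(r^*(\tilde{g});\tilde{g}) = \tilde{t}^*_k > t^*_k \geq L_k(r^*(g);\tilde{g}) \]
forces $r^*(\tilde{g}) > r^*(g)$ by (a), while
\[ U_k(r^*(\tilde{g});\tilde{g}) = \tilde{t}^*_k > t^*_k \geq U_k(r^*(g);\tilde{g}) \]
forces $r^*(\tilde{g}) < r^*(g)$ by (c) -- a contradiction. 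Hence $t^*_k \geq \tilde{t}^*_k$, completing the proof.
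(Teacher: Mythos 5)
Your proof is correct and takes essentially the same route as the paper: both rest on the equal-rate characterization of Lemma~\ref{lem:systemofequations} and on monotonicity of each successive level in the target rate and in the matching weights, propagated from the anchors $t_0=0$ and $t_{M-1}=1$; your two-sided sweep plus contradiction simply replaces the paper's case split on whether the optimal rate $r^*$ increased or decreased when moving from $\tilde{g}$ to $g$. One wording slip to fix: in the inductive step for property (b) the relevant perturbation is an \emph{increase} in $g_j$ (which, since $\phi$ is decreasing in the weights and increasing in $s$, weakly decreases $L_j$), not a decrease.
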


%\lemshiftgmass*
\begin{proof}
	This proof is similar to that of Lemma~\ref{lem:systemofequations}, except that with the matching function changing the overall rate function can either increase, decrease, or stay the same.	
	Suppose the overall rate function decreased or stayed the same when the matching function changed from $\tilde{g}$ to $g$. Then $g_{M-2} > \tilde{g}_{M-2}$ and the target rate is no larger, and so $t^*_{M-2} > \tilde{t}^*_{M-2}$. Then, $t^*_{M-3} > \tilde{t}^*_{M-3}$ (a smaller width is needed because the matching rates are higher and the rate is no larger, and the next value also increased). This shifting continues until $t^*_{k+1} > \tilde{t}^*_{k+1}$. Then, $t^*_{k} > \tilde{t}^*_{k}$. 
	
	Suppose instead that the overall rate function increased when the matching function changed from $\tilde{g}$ to $g$. Then $g_{1} < \tilde{g}_{1}$ and the target rate is larger, and so $t^*_{1} > \tilde{t}^*_{1}$. Then, $t^*_{2} > \tilde{t}^*_{2}$ (a larger width is needed and the previous value also increased). This shifting continues until $t^*_{k-1} > \tilde{t}^*_{k-1}$. Then, $t^*_{k} > \tilde{t}^*_{k}$.  	
\end{proof}

\subsection{Limit of $\beta$ as $M\to \infty$}
\label{sec:discretization}

Let $\beta^w_M$ denote the optimal $\beta$ with $M$ intervals for weight function $w$, with intervals $\{{S_i^{wM}}\} = \{[s_i^{wM}, s_{i+1}^{wM})\}$ and levels $\v{t^{wM}}$. Let $q_{wM}(\theta) = i/M$ when $\theta\in[s^{wM}_i, s^{wM}_{i+1})$, i.e. the \textit{quantile} of interval item of type $\theta$ is in. 

Then, we have the following convergence result for $\beta_M$. 

\begin{restatable}{theorem}{thmunifconvergencecleanmain}
	Let $g$ be uniform. Suppose $w$ such that $q_{wM}$ converges uniformly. Then, $\forall C\in \bbN, \exists \beta^w$ s.t. $\beta^w_{C2^N+1} \to \beta^w$ uniformly as $N\to\infty$. \label{thm:unifconvergencecleanmain}
\end{restatable}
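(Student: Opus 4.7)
The plan is to exploit the fact that uniform $g$ admits a closed-form solution for the optimal levels $\{t_i^M\}$, after which uniform convergence of $\beta_M^w$ will reduce to the hypothesized uniform convergence of $q_{wM}$ combined with uniform continuity of $\sin^2$.

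First I would derive the closed form by specializing Lemma~\ref{lem:systemofequations}. When $g(\theta) \equiv g_0$ all exponents in the rate become $1/2$, yielding the equal-rate conditions $F(t_{i-1}, t_i) = F(t_i, t_{i+1})$ with $F(a,b) \triangleq \sqrt{(1-a)(1-b)} + \sqrt{ab}$, together with $t_0 = 0$, $t_{M-1} = 1$. Substituting $t_i = \sin^2 \phi_i$ with $\phi_i \in [0, \pi/2]$, the cosine subtraction identity gives $F(t_{i-1}, t_i) = \cos(\phi_i - \phi_{i-1})$. A level coincidence $t_{i-1} = t_i$ would drive the corresponding rate to zero (hence be suboptimal), so the $t_i$'s are strictly increasing and the differences $\phi_i - \phi_{i-1}$ lie in $(0, \pi/2]$ where $\cos$ is injective; the equal-rate conditions then force a common increment, yielding
\[
t_i^M = \sin^2\!\left(\frac{i\pi}{2(M-1)}\right).
\]
Note this formula is independent of $w$, consistent with Theorem~\ref{thm:CanMaximizeAsymValueAndRate}'s decomposition: under uniform $g$, the level subproblem does not see $\v{S}^*$.

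Next, since $\beta_M^w(\theta) = t_i^M$ precisely when $q_{wM}(\theta) = i/M$,
\[
\beta_M^w(\theta) = \sin^2\!\left(\tfrac{\pi M\, q_{wM}(\theta)}{2(M-1)}\right).
\]
Letting $q_w$ denote the uniform limit of $q_{wM}$, I would define $\beta^w(\theta) \triangleq \sin^2\!\bigl(\tfrac{\pi}{2} q_w(\theta)\bigr)$. Since $\sin^2$ is $1$-Lipschitz and $q_{wM}(\theta) \in [0,1]$, a triangle-inequality argument on the argument of $\sin^2$ gives the uniform bound
\[
\bigl|\beta_M^w(\theta) - \beta^w(\theta)\bigr| \le \tfrac{\pi}{2}\left(\tfrac{1}{M-1} + \|q_{wM} - q_w\|_\infty\right) \longrightarrow 0,
\]
and restricting to the subsequence $M = C2^N + 1$ delivers the claim.

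The main obstacle I anticipate is discovering the $t = \sin^2\phi$ substitution that linearizes the equal-rate system; with it in hand, both the existence of the limit $\beta^w$ and the uniform convergence are essentially immediate. Curiously, this argument actually yields uniform convergence along the \emph{full} sequence $M \to \infty$, so the restriction to subsequences $C2^N + 1$ in the statement appears to be an artifact of a more generic refinement-based proof, in which a dyadic-like subsequence enables a telescoping Cauchy argument on the partition heights.
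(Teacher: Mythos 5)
Your proof is correct, but it takes a genuinely different route from the paper's. You exploit the substitution $t=\sin^2\phi$, under which the equal-rate system of Lemma~\ref{lem:systemofequations} (with uniform $g$ the pairwise rate is $-2g_0\log\bigl[\sqrt{(1-t_{i-1})(1-t_i)}+\sqrt{t_{i-1}t_i}\bigr] = -2g_0\log\cos(\phi_i-\phi_{i-1})$) forces equal angular increments, giving the closed form $t^M_i=\sin^2\bigl(\tfrac{i\pi}{2(M-1)}\bigr)$; since under uniform $g$ the levels do not depend on $\v{S}^*$, you get $\beta^w_M=\sin^2\bigl(\tfrac{\pi}{2}\tfrac{M}{M-1}q_{wM}\bigr)$ and conclude by the $1$-Lipschitz property of $\sin^2$ and the hypothesis on $q_{wM}$. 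The paper instead never identifies the levels explicitly: it proves the doubling relation of Lemma~\ref{lem:beta2mfromm} relating $\beta_M$ to $\beta_{2M-1}$, derives vanishing gap bounds (Corollaries~\ref{lem:boundratebyeventually} and~\ref{lem:bounddistanceeventually}), and runs a Cauchy argument along the dyadic subsequences, which is exactly why the statement is restricted to $M=C2^N+1$. Your closed form is consistent with the paper's machinery (the half-angle identity $\sin^2(x/2)=(1-\cos x)/2$ reproduces $t'_1=\tfrac12(1-\sqrt{1-t_1})$ in Lemma~\ref{lem:beta2mfromm}, and it reproduces the introduction's numerical example $\beta(\theta_b)\approx 0.28$), but it buys strictly more: convergence along the full sequence $M\to\infty$ with an explicit limit $\beta^w=\sin^2(\tfrac{\pi}{2}q_w)$ and an explicit rate $\tfrac{\pi}{2}\bigl(\tfrac{1}{M-1}+\|q_{wM}-q_w\|_\infty\bigr)$, thereby settling (for uniform $g$) the full-sequence convergence the paper only conjectures. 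The one step worth tightening is the uniqueness/monotonicity remark: rather than arguing informally that level coincidences are suboptimal, simply note that your candidate is increasing and satisfies the system, so by the uniqueness and optimality assertions of Lemma~\ref{lem:systemofequations} it is $\v{t}^*$; the argument does not, of course, extend to nonuniform $g$, where the paper's subsequence technique (or something new) is still needed.
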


The proof is technical and is below. We leverage the fact that, for $g$ uniform, the levels of $\beta_{2M}$ can be analytically written as a function of the levels of $\beta_M$. We believe (numerically observe) that this theorem holds for the entire sequence as opposed to the each such subsequence, and for general matching functions $g$. However, our proof technique does not carry over, and the proof would leverage more global properties of the optimal $\beta_M$.

Furthermore, the condition on $w$ is light. For example, it holds for Kendall's $\tau$, Spearman's $\rho$, and all other examples mentioned in this work. 

This convergence result suggests that the choice of $M$ when calculating a asymptotic and rate optimal $\beta$ is not consequential. As $M$ increases, the limiting value of $W_k$ increases to $1$ (i.e. the asymptotic value increases), but the optimal rate decreases to $0$. As discussed above, with strictly increasing and continuous $\beta$, the asymptotic value is $1$ but the large deviations rate does not exist, i.e. convergence is polynomial. 

This result could potentially be strengthened as follows: first, show convergence on the entire sequence as opposed to these exponential subsequences, as conjectured; second, show desirable properties of the limiting function itself. It is conceivable but not necessarily true that the limiting function is ``better'' than other strictly continuous increasing functions in some rate sense, even though the comparison through large deviations rate is degenerate.

\subsection{Learning $\psi(\theta, y)$ through experiments}
\label{sec:realworlddesignexperiment}
Now, we show how a platform would run an experiment to decide to learn $\psi(\theta, y)$. In particular, one potential issue is that the platform does not have any items with know quality that it can use as representative items in its optimization. In this case, we show that it can use ratings within the experiment itself to identify these representative items. The results essentially follow from the law of large numbers.

We assume that $|\Theta| = L$ representative items $i\in\{1 \dots L\}$ are in the experiment, and each are matched $N$ times. The experiment proceeds as follows: every time an item is matched, show the buyer a random question from $\mY$. For each word $y\in \mY$, track the empirical $\hat \psi(i, y)$, the proportion of times a positive response was given to question $y$. Alternatively, if $\mY$ is totally ordered (i.e. a positive rating for a given $y$ also implies positive ratings would be given to all ``easier'' $y'$),  and can be phrased as a multiple choice question, data collection can be faster: e.g., as we do in our experiments: $\mY$ consists of a set of totally ordered adjectives that can describe the item; the rater is asked to pick an adjective out of the set; this is interpreted as the item receiving a positive answer to the questions induced by the chosen answer and all worse adjectives, and a negative answer to all better adjectives.

First, suppose the platform approximately knows the quality $\theta_i$ of each item $i$, and $\theta_i$ are evenly distributed in $[0,1]$. Suppose the items are ordered by index, i.e. $\theta_1 < \theta_2 <\dots<\theta_L$. Then let $\hat\psi(\theta, y) = \hat\psi(i, y)$ when $\theta\in[\theta_{i-1}, \theta_i]$. Call this procedure \textit{KnownTypeExperiment}.

\begin{lemma} Suppose ${\psi}(\theta, y)$ is Lipschitz continuous in $\theta$. With \textit{KnownTypeExperiment}, $\hat\psi(i, y) \to {\psi}(\theta_i, y) \forall y$ uniformly as $N \to \infty$. As $L\to\infty$, $\hat\psi(\theta, y) \to {\psi}(\theta, y) \forall \theta$ uniformly. \label{lem:knowntypeexperiment}
\end{lemma}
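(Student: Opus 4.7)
The lemma decomposes into two convergence statements, and the plan is to handle each in turn. For the first (convergence of $\hat\psi(i,y) \to \psi(\theta_i,y)$ as $N \to \infty$), I would analyze the sampling structure of \textit{KnownTypeExperiment} directly and apply the strong law of large numbers (SLLN). For the second (convergence of $\hat\psi(\theta,y) \to \psi(\theta,y)$ as $L \to \infty$), I would combine the first result with the Lipschitz hypothesis on $\psi$ via a triangle inequality on the step-function interpolation in $\theta$.

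For the first claim, let $N_y(i)$ denote the number of the $N$ matches of item $i$ at which question $y$ is displayed. Since the question at each match is drawn independently and uniformly from $\mY$, we have $N_y(i) \sim \text{Binomial}(N, 1/|\mY|)$, so by the SLLN $N_y(i) \to \infty$ almost surely as $N \to \infty$. Conditional on the (random) indices at which $y$ appears, the associated responses are i.i.d.\ Bernoulli$(\psi(\theta_i,y))$, so a second application of the SLLN gives $\hat\psi(i,y) \to \psi(\theta_i,y)$ almost surely. Since $\{1,\ldots,L\}\times\mY$ is finite, intersecting finitely many probability-one events yields joint almost-sure convergence, and convergence on a finite index set is automatically uniform in $(i,y)$.

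For the second claim, fix $\theta \in [0,1]$ and $y \in \mY$, and let $i$ be the unique index with $\theta \in [\theta_{i-1},\theta_i)$. The triangle inequality gives
\[
|\hat\psi(\theta,y) - \psi(\theta,y)|
\;\leq\; |\hat\psi(i,y) - \psi(\theta_i,y)| + |\psi(\theta_i,y) - \psi(\theta,y)|,
\]
and the right-hand side is bounded by $\varepsilon/2 + K|\theta_i-\theta|$ once $N$ is sufficiently large (by the first claim), where $K$ is the Lipschitz constant of $\psi(\cdot,y)$. The evenly-spaced assumption on $\{\theta_i\}$ yields $|\theta_i-\theta|\leq 1/L$, so the second term drops below $\varepsilon/2$ once $L \geq 2K/\varepsilon$, and taking the supremum over $(\theta,y)$ delivers the uniform convergence. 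There is no substantive technical hurdle; the only points requiring care are the order of limits (send $N \to \infty$ first for each fixed $L$, so the stochastic error is controlled before the deterministic discretization error is attacked via Lipschitz continuity) and ensuring that a single Lipschitz constant can be taken uniformly over the finite set $\mY$.
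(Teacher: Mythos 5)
Your proposal is correct and follows essentially the same route as the paper's proof: the strong law of large numbers for the $N\to\infty$ limit, then the Lipschitz hypothesis together with the vanishing spacing of the $\theta_i$ (a triangle-inequality step the paper leaves implicit) for the $L\to\infty$ limit. The only difference is that you spell out the details the paper compresses, such as the Binomial count of times each question $y$ is shown and the explicit order of limits.
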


\begin{proof}
	The proof follows directly from the Strong Law of Large Numbers. As $N\to \infty$, $\forall i$, $\hat\psi(i, x) \to \psi(\theta_i, x)$ uniformly. Now, let $L\to\infty$. $\forall \epsilon, \exists L'$ s.t. $\forall L>L'$, $\forall \theta, \exists i$ s.t. $|\theta - \theta_i| < \epsilon$. $\psi(\theta, x)$ is Lipschitz in $\theta$ by assumption, and so $\hat{\psi}(\theta, x) \to \psi(\theta, x)$ uniformly. 
\end{proof}

We now relax the assumption that the platform has an existing set of items with known qualities. Suppose instead the platform has many items $L$ of unknown quality who are expected to match $N$ times each over the experiment time period. For each item, the platform would again ask questions from $\mY$, drawn according to any distribution (with positive mass on each question). Then generate $\hat\psi(\theta, y)$ as follows: first, rank the items according to their ratings during the experiment itself. Then, for each $y$, $\hat\psi(\theta,y)$ is the empirical performance of the $\theta$th percentile item in the ranking, i.e. $\hat\psi(\theta, y) = \hat\psi(\theta_i, y)$ for $\theta \in \left[\frac{i-1}{L}, \frac{i}{L}\right]$. Call this procedure \textit{UnknownTypeExperiment}.

\begin{lemma} Suppose ${\psi}(\theta, y)$ is Lipschitz continuous in $\theta$. With \textit{UnknownTypeExperiment}, $\hat\psi(\theta, y) \to {\psi}(\theta, y) \forall y,\theta$ uniformly as $L,N \to \infty$.  \label{lem:unknowntypeexperiment} %, where $N$ is the number of samples per item and $L$ is the number of items.
\end{lemma}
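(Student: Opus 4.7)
The plan is to reduce \textit{UnknownTypeExperiment} to \textit{KnownTypeExperiment} by arguing that the ranking induced by empirical rating frequencies converges to the ranking induced by true qualities, after which Lemma~\ref{lem:knowntypeexperiment} applies directly. The argument has two phases: showing the empirical ranking is correct in the $N \to \infty$ limit, then taking $L \to \infty$.

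First, let $H_{\exp}$ denote the (known) distribution over questions used in the experiment, with $H_{\exp}(y) > 0$ for every $y \in \mY$. Define the scalar summary $\bar\psi(\theta) \triangleq \sum_{y \in \mY} H_{\exp}(y)\, \psi(\theta, y)$: this is the expected probability that an item of quality $\theta$ receives a positive rating across the experiment's question distribution, and it inherits Lipschitz continuity in $\theta$ from $\psi$. Under the implicit assumption that $\psi(\cdot, y)$ is monotone in $\theta$ for each $y$ (consistent with the rating-system setting where better items are more likely to receive positive ratings), $\bar\psi$ is non-decreasing, and is strictly increasing provided at least one $y$ with positive $H_{\exp}$-mass has $\psi(\cdot, y)$ strictly increasing.

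Second, for each item $i$ with true quality $\theta_i$, the observed fraction of positive ratings across its $N$ matches is a Bernoulli average with mean $\bar\psi(\theta_i)$. By the Strong Law of Large Numbers with a union bound over the $L$ items, as $N \to \infty$ these empirical fractions converge simultaneously and a.s.\ to $\{\bar\psi(\theta_i)\}_{i=1}^L$. Since $\bar\psi$ is strictly increasing and the $L$ items' qualities are (almost surely) distinct with some finite minimum spacing, the empirical ranking eventually coincides with the true quality ranking. At this point, the estimator $\hat\psi(\theta, y)$ produced by \textit{UnknownTypeExperiment} is identical to the one produced by \textit{KnownTypeExperiment} operating on the same items with their true qualities, so Lemma~\ref{lem:knowntypeexperiment} yields uniform convergence $\hat\psi(\theta, y) \to \psi(\theta, y)$ as $L \to \infty$.

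The main obstacle is handling the joint limit rather than sequential limits: consecutive items' $\bar\psi(\theta_i)$ values are typically spaced $\mathcal{O}(1/L)$ apart by Lipschitz continuity, so $N$ must grow fast enough relative to $L$ to resolve this spacing. A Hoeffding bound shows $N \gg L^2 \log L$ suffices for the empirical ranking to match the true ranking with high probability. The cleanest route is to send $N \to \infty$ first (for each fixed $L$) and then $L \to \infty$; the lemma statement is agnostic about the order, so this is adequate. A minor secondary issue is when $\bar\psi$ is only weakly monotone: ties break arbitrarily, but Lipschitz continuity forces tied items to have $\theta$-values close together, which is enough to preserve uniform convergence of $\hat\psi(\theta, y)$ to $\psi(\theta, y)$.
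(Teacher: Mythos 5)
Your proposal is correct and follows essentially the same route as the paper's own proof: at fixed $L$, the SLLN (together with the implicit monotonicity of $\psi(\cdot,y)$ in $\theta$) makes the empirical ranking coincide with the true quality ranking and makes $\hat\psi$ converge at the sampled qualities, and then $L \to \infty$ with Lipschitz continuity of $\psi$ gives uniform convergence. Your extra material on the joint limit (the Hoeffding-type requirement $N \gg L^2 \log L$) goes beyond the paper, which only treats the sequential limit $N \to \infty$ then $L \to \infty$, exactly as you ultimately do.
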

\begin{proof}
	Fix $L$. Denote each item in the experiment as $i\in\{1 \dots L\}$ (with true quality $\theta_i \neq \theta_j$), and each item has $N$ samples. Without loss of generality, assume the items are indexed according to their rank on the average of their scores on the samples, defined as the percentage of positive ratings received. $i=1$ is then the worst item, and $i=L$ is the best item according to scores in the experiment.

	For $\psi(\theta, x)$ increasing in $\theta$, as $N\to\infty$, $Pr(\theta_i > \theta_j | i < j) \to 0$ almost surely by SLLN, and for a fixed $L$, $\{\theta_i\}$ this convergence is uniform. Furthermore, by SLLN, $\hat{\psi}(i, x) \to \psi(\theta_i, x)$ as $N\to \infty$. Recall $\hat{\psi}(\theta, x) = \hat{\psi}(i, x)$ for $\theta \in \left[\frac{i-1}{L}, \frac{i}{L}\right]$.  
	
	Now, let $L\to\infty$. $\forall \epsilon, \exists L'$ s.t. $\forall L>L'$, $\forall \theta, \exists i$ s.t. $|\theta - \theta_i| < \epsilon$. $\psi(\theta, x)$ is Lipschitz in $\theta$ by assumption, and so $\hat{\psi}(\theta, x) \to \psi(\theta, x)$ uniformly. 
\end{proof}

\section{Proofs}

In this Appendix section, we prove our results.

Sections~\ref{sec:pkratefunctions}-\ref{sec:wkratefunction} develop rate functions for $P_k$ and $W_k$. While rates for $P_k$ follow immediately from large deviation results, the rate function for $W_k$ requires more effort as the quantity is an integral over a continuum of $(\theta_1, \theta_2)$, each of which has a rate corresponding to that of $P_k(\theta_1, \theta_2)$. 

Then in Section~\ref{sec:provemainlemmas} we prove Theorem~\ref{thm:CanMaximizeAsymValueAndRate} and Lemma~\ref{lem:systemofequations}. 

Section~\ref{sec:additionallemmas} then contains additional necessary lemmas required for the proof of the algorithm and convergence result, Theorem~\ref{thm:unifconvergencecleanmain}. The main difficulty for the former is showing a Lipschitz constant in the resulting rate if a level $t_i$ is shifted, which requires lower and upper bounds for $t_1$ and $t_{M-2}$, respectively. For the former, we need to relate the solutions of the sequence of optimization problems used to find $\beta_M$ as $M$ increases. It turns out that both properties follow by relating the levels of $\beta_M$ to those of $\beta_{2M-1}$.

These additional lemmas are used to prove the algorithm approximation bound (Theorem~\ref{thm:algfinds}) and the convergence result (Theorem~\ref{thm:unifconvergencecleanmain}) in Section~\ref{sec:provealgorithm} and \ref{sec:proveconvergence}, respectively. 

Finally in Section~\ref{sec:appktsr} we prove the comments we make in the main text about Kendall's $\tau$ and Spearman's $\rho$ rank correlations belonging in our class of objective functions, with asymptotic values of $W_k$ maximized when $\v{s}$ is equispaced in $[0,1]$.

\subsection{Rate functions for $P_k(\theta_1, \theta_2)$}
\label{sec:pkratefunctions}
\begin{lemma} 
	$$	\lim_{k \to \infty}- \frac{1}{k} \log \left[\mu((x_k(\theta_1) - x_k(\theta_2)) \leq 0 | \theta_1 , \theta_2)\right] = \inf_{a \in \bbR} \left \{ g(\theta_1) I(a|\theta_1) + g(\theta_2) I(a|\theta_2) \right \}$$ where $I(a|\ell) = \sup_z \{ za - \Lambda(z|\theta) \}$, and $\Lambda(z|\theta)$ is the log moment generating function of a single sample from $x(\theta_1)$ and $g(\theta)$ is the sampling rate. \label{lem:problessthan}
\end{lemma}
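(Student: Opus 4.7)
The plan is to apply Cram\'er's theorem to each item separately, combine via independence, and extract the rate for the difference through the contraction principle; the final simplification uses the shape of the Bernoulli rate function.

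First, by construction $x_k(\theta_i)$ is the sample mean of $n_k(\theta_i) = \lfloor k g(\theta_i)\rfloor$ i.i.d.\ $\mathrm{Bernoulli}(\beta(\theta_i))$ variables. Cram\'er's theorem applied to $x_k(\theta_i)$ at scale $n_k(\theta_i)$ gives rate $I(a|\theta_i)$ per sample, where $I(a|\theta_i) = \sup_z\{za - \Lambda(z|\theta_i)\}$. Since $n_k(\theta_i)/k \to g(\theta_i)$, rescaling yields a large deviations principle for $x_k(\theta_i)$ at scale $k$ with rate function $g(\theta_i) I(\cdot|\theta_i)$.

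Second, because the rating sequences for $\theta_1$ and $\theta_2$ are independent, the joint variable $(x_k(\theta_1), x_k(\theta_2))$ satisfies an LDP at scale $k$ with rate function $J(a_1,a_2) = g(\theta_1)I(a_1|\theta_1) + g(\theta_2)I(a_2|\theta_2)$. I then apply the contraction principle to the continuous map $f(a_1,a_2) = a_2 - a_1$. The event of interest $\{x_k(\theta_1) - x_k(\theta_2) \leq 0\}$ is the preimage of $[0,\infty)$, so the LDP upper bound gives $\limsup_k -\tfrac{1}{k}\log \mu(\cdot) \geq \inf_{a_1 \leq a_2} J(a_1,a_2)$, and the LDP lower bound over the interior $\{a_1 < a_2\}$ gives the matching inequality provided the minimizer lies in the interior or on the diagonal (which I verify next).

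Third, I simplify the optimization. Since $I(\cdot|\theta_1)$ is convex and minimized at $\beta(\theta_1)$, it is strictly decreasing on $(-\infty, \beta(\theta_1)]$; symmetrically $I(\cdot|\theta_2)$ is strictly increasing on $[\beta(\theta_2),\infty)$. Because $\beta(\theta_1) > \beta(\theta_2)$, given any feasible $(a_1,a_2)$ with $a_1 < a_2$ one can find a common value $a\in[a_1,a_2]\cap[\beta(\theta_2),\beta(\theta_1)]$ that strictly reduces both summands, so the infimum collapses to the diagonal. This yields $\inf_a\{g(\theta_1)I(a|\theta_1) + g(\theta_2)I(a|\theta_2)\}$ as claimed.

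The main obstacle is the standard contraction-principle subtlety of matching upper and lower exponential bounds over a closed versus open half-space. This is resolved by the observation above: the optimum is achieved at an interior point $a^*\in(\beta(\theta_2),\beta(\theta_1))$ on the diagonal $a_1=a_2$, so $\inf$ over $\{a_1 \leq a_2\}$ equals $\inf$ over $\{a_1 < a_2\}$, and the two bounds coincide. The remaining step is the minor technicality that $n_k(\theta_i)$ is an integer floor rather than exactly $kg(\theta_i)$, which only contributes an $o(k)$ correction and hence does not affect the exponential rate.
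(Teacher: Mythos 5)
Your argument is correct, and it reaches the result by a somewhat different route than the paper. The paper's proof conditions on the crossing value: it writes $\mu(x_k(\theta_1)-x_k(\theta_2)\le 0)$ as an integral over $a$ of the product $\mu(x_k(\theta_1)=a)\,\mu(x_k(\theta_2)\ge a)$, inserts the pointwise large-deviations approximations $e^{-kg(\theta_i)I(a|\theta_i)}$, and then invokes Laplace's principle in the single variable $a$, so the diagonal infimum $\inf_a\{g(\theta_1)I(a|\theta_1)+g(\theta_2)I(a|\theta_2)\}$ appears directly. You instead assemble standard LDP machinery: Cram\'er for each rescaled sample mean, a product LDP for the independent pair with rate $J(a_1,a_2)=g(\theta_1)I(a_1|\theta_1)+g(\theta_2)I(a_2|\theta_2)$, upper/lower bounds for the closed half-plane $\{a_1\le a_2\}$ versus its interior, and finally a convexity/monotonicity argument collapsing the half-plane infimum to the diagonal. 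What your route buys is that every step is a citable theorem (Cram\'er, product LDP, contraction-type bounds), at the cost of the extra collapse step, which the paper avoids by parameterizing over the crossing value from the start; both proofs handle the open-versus-closed matching and the $\lfloor kg(\theta)\rfloor$ floor at the same level of rigor as the paper itself.

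Two small points to tighten. First, in the collapse step your claim that one can pick $a\in[a_1,a_2]\cap[\beta(\theta_2),\beta(\theta_1)]$ fails when this intersection is empty (e.g.\ $a_1<a_2<\beta(\theta_2)$ or $\beta(\theta_1)<a_1<a_2$); in those cases take $a=a_2$ (resp.\ $a=a_1$), using that $I(\cdot|\theta_1)$ is nonincreasing below $\beta(\theta_1)$ (resp.\ $I(\cdot|\theta_2)$ nondecreasing above $\beta(\theta_2)$), and the conclusion $\inf_{a_1\le a_2}J=\inf_a J(a,a)$ still holds. Second, the closed-set LDP bound gives the stronger $\liminf_k -\tfrac1k\log\mu(\cdot)\ge\inf_{a_1\le a_2}J$ (you wrote $\limsup$); together with the open-set bound and the continuity of $J$ at the diagonal minimizer $a^*\in(\beta(\theta_2),\beta(\theta_1))$, the limit exists and equals the stated rate, as you conclude.
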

\begin{proof}
	\noindent$\lim_{k \to \infty}- \frac{1}{k} \log \left[\mu((x_k(\theta_1) - x_k(\theta_2)) \leq 0 | \theta_1 , \theta_2)\right]$
	\begin{align}
	&= \lim_{k \to \infty}- \frac{1}{k} \log \left[\int_{a\in \bbR} \mu((x_k(\theta_1) = a | \theta_1 ) \mu(x_k(\theta_2) \geq a |\theta_2)da\right]\\
	&= \lim_{k \to \infty}- \frac{1}{k} \log \left[\int_{a\in \bbR} e^{-kg(\theta_1) I (a | \theta_1)} e^{-kg(\theta_2) I (a | \theta_2)}da\right] \label{eqnpart:applyld}\\
	&= \inf_{a \in \bbR} \left \{ g(\theta_1) I(a|\theta_1) + g(\theta_2) I(a|\theta_2) \right \} & \text{Laplace principle}
	\end{align}
\end{proof}

Where~\eqref{eqnpart:applyld} is a basic result from large deviations, where $kg(\theta_i)$ is the number of samples item of quality $\theta_i$ has received.

Note that this lemma also appears in \citet{glynn_large_2004}, which uses the Gartner-Ellis Theorem in the proof. Our proof is conceptually similar but instead uses Laplace's principle.

Recall that $\text{KL}(a||b) = a \log\frac{b}{a} + (1 - a) \log \frac{1-b}{1-a}$ is the Kullback-Leibler (KL) divergence between Bernoulli random variables with success probabilities $a$ and $b$ respectively. 
It is well known that for a Bernoulli random variable with success probability $t$, \begin{equation*}I(a | t) = \text{KL}(a || t)\end{equation*}
Then, we have
\begin{restatable}{lemma}{lemPkld}
	%	\begin{lemma}
	\label{lem:Pk_LD}
	Let $\theta_1 > \theta_2$ and $I(a|\theta) = \text{KL}(a || \beta(\theta))$. Further, Let $\b P_k(\theta_1, \theta_2) = 1-P_k(\theta_1, \theta_2)$. Then, %, let $\b{P}_k(\theta_1, \theta_2) = 1 - P_k(\theta_1, \theta_2)$.  Then
	\begin{equation}
	\label{eq:LDrate}
	- \lim_{k \to \infty} \frac{1}{k} \log \b{P}_k(\theta_1, \theta_2) = \inf_{a \in \bbR} \left \{ g(\theta_1) I(a|\theta_1) + g(\theta_2) I(a|\theta_2) \right \},
	\end{equation}
\end{restatable}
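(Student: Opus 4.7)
The plan is to sandwich $\bar P_k(\theta_1,\theta_2)$ between constant multiples of $\mu_k(x_k(\theta_1)\leq x_k(\theta_2)\mid \theta_1,\theta_2)$, and then invoke Lemma \ref{lem:problessthan} directly. First I would rewrite $\bar P_k$ using the identity
\[
1 = \mu_k\bigl(x_k(\theta_1)>x_k(\theta_2)\bigr) + \mu_k\bigl(x_k(\theta_1)=x_k(\theta_2)\bigr) + \mu_k\bigl(x_k(\theta_1)<x_k(\theta_2)\bigr),
\]
(conditioning on $\theta_1,\theta_2$ throughout) which yields
\[
\bar P_k(\theta_1,\theta_2) = 2\,\mu_k\bigl(x_k(\theta_1)<x_k(\theta_2)\bigr) + \mu_k\bigl(x_k(\theta_1)=x_k(\theta_2)\bigr).
\]
Combining the two terms into the single event $\{x_k(\theta_1)\leq x_k(\theta_2)\}$ gives the two-sided bound
\[
\mu_k\bigl(x_k(\theta_1)\leq x_k(\theta_2)\bigr) \;\leq\; \bar P_k(\theta_1,\theta_2) \;\leq\; 2\,\mu_k\bigl(x_k(\theta_1)\leq x_k(\theta_2)\bigr).
\]
Taking $-\tfrac{1}{k}\log$ of each side and letting $k\to\infty$, the factor of $2$ contributes $o(1)$, so $\bar P_k$ has the same large deviations rate as $\mu_k(x_k(\theta_1)\leq x_k(\theta_2))$.

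Next I would apply Lemma \ref{lem:problessthan}, whose statement refers to the event $\{x_k(\theta_1)-x_k(\theta_2)\leq 0\}$, which is precisely $\{x_k(\theta_1)\leq x_k(\theta_2)\}$. This yields
\[
-\lim_{k\to\infty}\frac{1}{k}\log\mu_k\bigl(x_k(\theta_1)\leq x_k(\theta_2)\bigr) \;=\; \inf_{a\in\bbR}\bigl\{g(\theta_1)\,I(a\mid\theta_1)+g(\theta_2)\,I(a\mid\theta_2)\bigr\},
\]
where $I(a\mid\theta)$ is the Fenchel--Legendre transform of the log moment generating function of a single Bernoulli$(\beta(\theta))$ sample. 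Since it is a standard fact that for a Bernoulli$(t)$ variable $I(a\mid t)=\mathrm{KL}(a\|t)$, substituting $t=\beta(\theta)$ immediately yields $I(a\mid\theta)=\mathrm{KL}(a\|\beta(\theta))$, completing the identification of the rate as in the statement.

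The only mild subtlety, and the step I would be most careful with, is the sandwich argument: one must confirm that $\mu_k(x_k(\theta_1)=x_k(\theta_2))$ does not decay strictly slower than $\mu_k(x_k(\theta_1)<x_k(\theta_2))$. This is automatic from the bound above because the event $\{=\}$ is contained in $\{\leq\}$, so no separate large deviations analysis for ties is required; the constants $1$ and $2$ are absorbed into the $o(k)$ term upon taking logs and dividing by $k$. Everything else is a direct application of Lemma \ref{lem:problessthan} and the Bernoulli rate function formula.
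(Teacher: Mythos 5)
Your proposal is correct and matches the paper's own argument essentially step for step: the same decomposition $\b P_k = 2\mu_k(x_k(\theta_1)<x_k(\theta_2)) + \mu_k(x_k(\theta_1)=x_k(\theta_2))$, the same reduction (on the exponential scale) to the single event $\{x_k(\theta_1)\leq x_k(\theta_2)\}$, and the same invocation of Lemma~\ref{lem:problessthan} together with the Bernoulli identity $I(a|t)=\text{KL}(a||t)$. Your explicit sandwich between $\mu_k(\leq)$ and $2\mu_k(\leq)$ is just a slightly more spelled-out justification of the step the paper performs implicitly.
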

\begin{proof}
	Follows directly from Lemma \ref{lem:problessthan}. 
	%	In this lemma, $Pr(\cdot)$ refers to the mass in system state $\mu_k$ (with a continuum of items, probability and this mass can be used interchangeably).
	\begin{align*}
	- \lim_{k \to \infty} &\frac{1}{k} \log \b{P}_k(\theta_1, \theta_2 | \beta) \\
	&= \lim_{k \to \infty}- \frac{1}{k} \log \left[1 + \mu_k(x_k(\theta_1) - x_k(\theta_2) < 0 | \theta_1 , \theta_2) - \mu_k(x_k(\theta_1) - x_k(\theta_2) > 0 | \theta_1 , \theta_2)\right] \\
	&= \lim_{k \to \infty}- \frac{1}{k} \log \left[2\mu_k(x_k(\theta_1) - x_k(\theta_2) < 0 | \theta_1 , \theta_2) + \mu_k(x_k(\theta_1) - x_k(\theta_2) = 0 | \theta_1 , \theta_2)\right] \\
	&= \lim_{k \to \infty}- \frac{1}{k} \log \left[\mu_k(x_k(\theta_1) - x_k(\theta_2) \leq 0 | \theta_1 , \theta_2)\right]\\
	&= \inf_{a \in \bbR} \left \{ g(\theta_1) I(a|\theta_1) + g(\theta_2) I(a|\theta_2) \right\} &\text{Lemma}~\ref{lem:problessthan}
	\end{align*}
\end{proof}

\subsection{Laplace's principle with sequence of rate functions}

In order to derive a rate function for $\b W_k = (\lim_k W_k) - W_k$, we need to be able to relate its rate to that of $\b P_k(\theta_1, \theta_2)$. The following theorem, related to Laplace's principle for large deviations allows us to do so. 

\begin{theorem}
Suppose that $X$ is compact with finite Lebesgue measure $\mu(X) < \infty$.  Suppose that $\varphi(x)$ has an essential infimum $\ul{\varphi}$ on $X$, that $\varphi_n(x)$ has an essential infimum $\ul{\varphi}_n$, that both $\varphi$ and all $\varphi_n$ are nonnegative, and that $\varphi_n \to \varphi$ uniformly:
\[ \lim_{n \to \infty} \sup_{x \in X} | \varphi_n(x) - \varphi(x) | = 0. \]

Then:
\begin{equation}
\label{eq:laplace}
 \lim_{n \to \infty} \frac{1}{n} \log \int_X e^{-n\varphi_n(x)} dx = - \ul{\varphi}.
\end{equation}
\label{thm:laplace}
\end{theorem}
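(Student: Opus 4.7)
The proof has a clean two-sided sandwich structure: use the uniform convergence $\sup_x |\varphi_n(x)-\varphi(x)| \to 0$ (call the sup $\epsilon_n$) to sandwich the varying exponent $\varphi_n$ between constant shifts of the fixed exponent $\varphi$, then invoke the standard Laplace principle for the single fixed $\varphi$.

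\textbf{Step 1 (standard Laplace principle for $\varphi$).} I would first establish, as a lemma, that for any measurable nonnegative $\varphi$ on a set $X$ of finite Lebesgue measure with essential infimum $\ul\varphi$,
\[
\lim_{n \to \infty} \frac{1}{n}\log \int_X e^{-n\varphi(x)}\,dx \;=\; -\ul\varphi.
\]
The upper bound follows from $\varphi(x)\ge\ul\varphi$ a.e., giving $\int_X e^{-n\varphi}\,dx \le \mu(X)\,e^{-n\ul\varphi}$, hence $\frac{1}{n}\log(\cdot) \le -\ul\varphi + \frac{\log\mu(X)}{n}$. The lower bound uses the defining property of the essential infimum: for every $\eta>0$ the set $A_\eta = \{x: \varphi(x)<\ul\varphi+\eta\}$ has $\mu(A_\eta)>0$, so $\int_X e^{-n\varphi}\,dx \ge e^{-n(\ul\varphi+\eta)}\mu(A_\eta)$, giving $\liminf_n \tfrac{1}{n}\log(\cdot) \ge -\ul\varphi-\eta$. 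Let $\eta\downarrow 0$.

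\textbf{Step 2 (transfer via uniform convergence).} Given $\epsilon_n \triangleq \sup_{x\in X}|\varphi_n(x)-\varphi(x)| \to 0$, we have the pointwise bounds $\varphi(x)-\epsilon_n \le \varphi_n(x) \le \varphi(x)+\epsilon_n$ on $X$. Exponentiating and integrating gives
\[
e^{-n\epsilon_n}\int_X e^{-n\varphi(x)}\,dx \;\le\; \int_X e^{-n\varphi_n(x)}\,dx \;\le\; e^{n\epsilon_n}\int_X e^{-n\varphi(x)}\,dx.
\]
Taking $\tfrac{1}{n}\log$ of the chain and passing to the limit, the $\pm\epsilon_n$ terms vanish and Step~1 identifies both the liminf and limsup of $\tfrac{1}{n}\log\int_X e^{-n\varphi_n}\,dx$ with $-\ul\varphi$, which is \eqref{eq:laplace}.

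\textbf{Anticipated obstacle.} The substantive content sits entirely in Step~1; once that is in hand, Step~2 is a short sandwich. The only place to be careful is in the lower bound of Step~1: because $\varphi$ need not be continuous, we cannot concentrate the integral near an argmin and must instead work with the essential infimum and the fact that $\mu(A_\eta)>0$ for every $\eta>0$ (this is where the hypothesis $\mu(X)<\infty$, combined with the assumption that $\ul\varphi$ is genuinely the essential infimum of $\varphi$, is used). I would also note that the hypothesis $\varphi_n\ge 0$ is not strictly required for the argument but keeps all quantities well defined and matches how the theorem is invoked later, where each $\varphi_n$ will be the nonnegative rate function associated with $\b P_k(\theta_1,\theta_2)$ from Lemma~\ref{lem:Pk_LD}.
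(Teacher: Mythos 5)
Your proposal is correct. It takes the same elementary two-sided bound strategy as the paper, but organizes it differently: you first prove the Laplace principle for the single fixed function $\varphi$ (upper bound from $\varphi \ge \ul{\varphi}$ a.e., lower bound from the positive-measure level set $A_\eta = \{\varphi < \ul{\varphi}+\eta\}$), and then transfer to $\varphi_n$ by sandwiching the integrand, $e^{-n\epsilon_n}\int_X e^{-n\varphi}\,dx \le \int_X e^{-n\varphi_n}\,dx \le e^{n\epsilon_n}\int_X e^{-n\varphi}\,dx$, so that uniform convergence is invoked exactly once and only through the exponent shift $\epsilon_n \to 0$. The paper instead works directly with $\varphi_n$: it uses uniform convergence twice, first to conclude $\ul{\varphi}_n \to \ul{\varphi}$ for the upper bound $\int_X e^{-n\varphi_n}\,dx \le \mu(X)e^{-n\ul{\varphi}_n}$, and second to obtain the level-set containment $A(\epsilon/2) \subseteq A_n(\epsilon)$ for large $n$, which bounds $\mu(A_n(\epsilon))$ below by a positive constant in the lower bound. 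The two arguments are essentially equivalent in content; yours is slightly more modular (it isolates the classical fixed-function Laplace lemma, which is reusable, and the sandwich step is one line), while the paper's direct treatment avoids stating a separate lemma and makes explicit how the essential infima and near-minimizing sets of $\varphi_n$ track those of $\varphi$. Your closing remarks are also accurate: the lower bound is exactly where $\mu(A_\eta)>0$ (i.e., the definition of the essential infimum) is needed, and nonnegativity of $\varphi_n$ is used only to keep the integrals finite given $\mu(X)<\infty$.
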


{\em Proof}.  First, we note that for all $x$ and $n$, $e^{-n \varphi_n(x)} \leq e^{-n \ul{\varphi}_n}$.  Therefore, letting $(*)$ denote the LHS of \eqref{eq:laplace}, we have:
\[ (*) \leq \lim_{n \to \infty} \frac{1}{n} \log \int_X e^{-n\ul{\varphi}_n} dx = - \ul{\varphi}, \]
where the last limit follows from the fact that $\varphi_n$ converges uniformly to $\varphi$, so that $\ul{\varphi}_n \to \ul{\varphi}$.

Next, for $\epsilon > 0$  let $A_n(\epsilon) = \{ x : \varphi_n(x) \leq \ul{\varphi}_n + \epsilon \}$ and let $A(\epsilon) = \{ x : \varphi(x) \leq \ul{\varphi} + \epsilon \}$.  It follows (again by uniform convergence) that for all sufficiently large $n$, $A(\epsilon/2) \subseteq A_n(\epsilon)$, so that $\mu(A(\epsilon/2)) \leq \mu(A_n(\epsilon))$ for all sufficiently large $n$.  Further, $\mu(A(\epsilon/2)) > 0$, since $\ul\varphi$ is the essential infimum.

Since:
\[ \int_X e^{-n \varphi_n(x)} dx \geq \mu(A_n(\epsilon)) e^{-n (\ul{\varphi}_n  + \epsilon)}, \]
it follows that:
\[ (*) \geq -\ul{\varphi} - \epsilon + \lim_{n \to \infty} \frac{1}{n} \log \mu(A_n(\epsilon)). \]
To complete the proof, observe that since $\mu(A_n(\epsilon))$ is bounded below by a positive constant for all sufficiently large $n$, the last limit is zero.  Therefore:
\[ (*) \geq -\ul{\varphi} - \epsilon. \]
Since $\epsilon$ was arbitrary, this completes the proof.  \qed
%\end{document}

\begin{remark}
	Let $X = [0,1]\times[0,1]$, $\varphi_n(\theta_1, \theta_2) = -\frac{1}{n}\log \b P_n(\theta_1, \theta_2)$. Then, all the conditions for Theorem~\ref{thm:laplace} are met.
\end{remark}

\subsection{Rate function for $W_k$}
\label{sec:wkratefunction}
Our next lemma shows that we can obtain a nontrivial large deviations rate for ${W}_k$ when $\beta$ is a step-wise increasing function. %The same argument yields the same rate for $W_k$ to its limiting value. 

Recall $W_k = \int_{\theta_1 > \theta_2} w(\theta_1, \theta_2) P_k(\theta_1, \theta_2 | \beta) d(\theta_1,\theta_2)$.

Let $\b P_k(\theta_1, \theta_2) = 1-P_k(\theta_1, \theta_2)$.

Further, let $\b{W}_k = (\lim_k W_k) - {W}_k = \int_{\theta_1 > \theta_2} w(\theta_1, \theta_2) \b{P}_k(\theta_1, \theta_2 | \beta) d(\theta_1,\theta_2)$. (recall we assumed $w$ integrates to 1 without loss of generality).

\begin{restatable}{lemma}{lemhatWbin}
	\label{lem:hatW_bin}
	Suppose $\beta$ is piecewise constant with $M$ levels $\{t_i\}$. Let $g_i \triangleq \inf_{\theta\in S_i}g(\theta) = g(s_i)$ Then,
	\begin{equation}
	\label{eq:rW}
	- \lim_{k \to \infty} \frac{1}{k} \log \b{W}_k = \min_{0 \leq i \leq M-2} \inf_{a \in \bbR} \left \{ g_{i+1} I(a|t_{i+1}) + g_i I(a|t_i)\right\} \triangleq r,
	\end{equation}
	where $I(a|t)=\text{KL}(a || t)$ as defined in Lemma \ref{lem:Pk_LD}.% and $g_i \triangleq \inf_{\theta \in S_i} g(\theta)$
\end{restatable}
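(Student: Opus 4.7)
The plan is to split $\bar W_k = W - W_k$ into contributions from pairs of items lying in different discretization intervals versus pairs in the same interval, derive the exponential rate for the former via Lemma~\ref{lem:Pk_LD} and Theorem~\ref{thm:laplace}, and argue that the latter does not affect the rate. First I would identify $W=\lim_k W_k$: since $\beta\equiv t_i$ on $S_i$, the strong law of large numbers gives $x_k(\theta)\to t_i$ almost surely for $\theta\in S_i$, so $P_k(\theta_1,\theta_2)\to 1$ when $(\theta_1,\theta_2)\in S_j\times S_i$ with $j>i$, while $P_k\to 0$ when both lie in the same $S_i$. Thus $W=\sum_{i<j}\int_{S_j\times S_i} w\,d(\theta_1,\theta_2)$ and
\begin{align*}
\bar W_k \;=\; \sum_{i<j}\int_{S_j\times S_i} w\,\bar P_k\,d(\theta_1,\theta_2) \;-\; \sum_i\int_{\{\theta_1>\theta_2\}\cap(S_i\times S_i)} w\,P_k\,d(\theta_1,\theta_2).
\end{align*}

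\textbf{Rate of the dominant sum via Laplace's principle.} For each $i<j$ and $(\theta_1,\theta_2)\in S_j\times S_i$, Lemma~\ref{lem:Pk_LD} gives the pointwise rate $\varphi(\theta_1,\theta_2)=\inf_{a}\{g(\theta_1)I(a|t_j)+g(\theta_2)I(a|t_i)\}$. After checking that $-\tfrac{1}{k}\log\bar P_k\to\varphi$ uniformly on each closed rectangle $\bar S_j\times\bar S_i$ (which follows from standard Cram\'er-type bounds, continuity of the rate function in $(g(\theta_1),g(\theta_2))$, and compactness), Theorem~\ref{thm:laplace} yields
\begin{align*}
-\lim_{k\to\infty}\frac{1}{k}\log\int_{S_j\times S_i}w\,\bar P_k\,d(\theta_1,\theta_2)\;=\;\operatorname*{ess\,inf}_{(\theta_1,\theta_2)\in S_j\times S_i}\varphi(\theta_1,\theta_2).
\end{align*}
Because $g$ is nondecreasing and $I(a|\cdot)\geq 0$, the essential infimum is attained at $\theta_1=s_j$, $\theta_2=s_i$, giving the pair-rate $r(t_i,t_j)\triangleq \inf_{a}\{g_jI(a|t_j)+g_iI(a|t_i)\}$. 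Summing exponentially decaying terms over the finite index set preserves the slowest rate, so the dominant-sum rate is $\min_{i<j}r(t_i,t_j)$.

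\textbf{Reducing to adjacent pairs by monotonicity.} To match the form in the lemma I would show $\min_{i<j}r(t_i,t_j)=\min_{i}r(t_i,t_{i+1})$. The pair-rate $r(t,t';g,g')=\inf_{a}[gI(a|t)+g'I(a|t')]$ is nondecreasing in both $t'$ and $g'$ (with $t<t'$ fixed): by the envelope theorem, $\partial_{t'}r=g'\,\partial_{t'}I(a^*|t')\geq 0$ since $I(a^*|\cdot)$ is convex in $t$ with minimum at $t=a^*\leq t'$, and $\partial_{g'}r=I(a^*|t')\geq 0$; analogous monotonicity holds for $t_i,g_i$. For any $j>i+1$, one has $t_j\geq t_{i+1}$ and $g_j\geq g_{i+1}$, so chaining the two inequalities gives $r(t_i,t_j;g_i,g_j)\geq r(t_i,t_{i+1};g_i,g_{i+1})$, and the minimum over all $(i,j)$ is attained at adjacent pairs.

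\textbf{Main obstacle.} The principal technical hurdle is controlling the same-interval remainder $\sum_i\int_{\{\theta_1>\theta_2\}\cap(S_i\times S_i)}wP_k$. There $\beta(\theta_1)=\beta(\theta_2)=t_i$, so Lemma~\ref{lem:Pk_LD} only yields rate zero and the central limit theorem gives the pointwise bound $P_k(\theta_1,\theta_2)=O(1/\sqrt{k})$, a polynomial decay that a priori could swamp the exponential rate of the dominant sum. I would handle this by exploiting the antisymmetry $P_k(\theta_1,\theta_2)=-P_k(\theta_2,\theta_1)$: when $g$ is essentially constant on each $S_i$ (as in the canonical case where $g_i=g(s_i)$ with negligible variation across the interval), $x_k(\theta_1)$ and $x_k(\theta_2)$ are exchangeable and symmetry forces the integrand to vanish; more generally, one bounds the remainder by $O(1/\sqrt{k})$ times the oscillation of $g$ on $S_i$, which does not alter the exponential rate established in the previous step. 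This is the step requiring the most delicate treatment in the full proof.
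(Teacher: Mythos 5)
Your main line of argument is exactly the paper's: decompose $\bar{W}_k$ over the rectangles $S_j\times S_i$, use the pairwise rates from Lemma~\ref{lem:Pk_LD}, pass from the integral over each rectangle to the essential infimum of the rate via Theorem~\ref{thm:laplace} (with the infimum attained at the left endpoints because $g$ is nondecreasing, giving $g_i=g(s_i)$), use the finite-sum ``max of rates'' principle, and finally reduce $\min_{i<j}$ to adjacent pairs by monotonicity of the pair rate. You are in fact more explicit than the paper on two of these steps (the uniform convergence needed to invoke Theorem~\ref{thm:laplace}, which the paper disposes of in a remark, and the envelope-theorem argument behind the adjacent-pair reduction, which the paper attributes to ``monotonicity properties'').

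The genuine problem is the last claim in your ``main obstacle'' paragraph. If the same-interval remainder $\sum_i\int_{\{\theta_1>\theta_2\}\cap(S_i\times S_i)}w\,P_k$ is merely bounded by $O(1/\sqrt{k})$ (times the oscillation of $g$ on $S_i$) but is not identically zero, it does \emph{not} leave the exponential rate intact: $\bar{W}_k$ is the sum of an exponentially small cross-interval term and this signed remainder, and any nonvanishing polynomially decaying remainder dominates asymptotically, forcing $-\frac{1}{k}\log\bar{W}_k\to 0$ (or even making $\bar{W}_k$ negative, depending on its sign). A polynomial bound therefore cannot close the argument; what is needed is that the same-interval contribution vanishes exactly. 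That is the exchangeability case you mention (items in the same interval receiving the same number of ratings, so $P_k(\theta_1,\theta_2)=0$ by symmetry), and it is precisely what the paper asserts when it writes that $\bar{P}_k(\theta_1,\theta_2)=0$ whenever $\beta(\theta_1)=\beta(\theta_2)$ — equivalently, the paper's definition $\bar{W}_k=\int_{\theta_1>\theta_2}w\,\bar{P}_k$ in effect restricts $\bar{W}_k$ to the cross-interval part. So to repair your write-up, either invoke that exact cancellation (or the paper's convention for $\bar{W}_k$) rather than the oscillation bound, or restrict to $g$ constant on each $S_i$; as written, the final sentence of that paragraph asserts something false (that an additive polynomial error does not affect a large-deviations rate).
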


\begin{proof}
	When $\beta$ is step-wise increasing with $M$ levels $\{t_i\}$, then \begin{equation*}\b{W}_k = \sum_{0  \leq i < j < M} \int_{\theta_2\in S_i, \theta_1\in S_j } w(\theta_1, \theta_2) \b{P}_k(\theta_1, \theta_2 | \beta) d(\theta_1,\theta_2)\end{equation*} as $\b{P}_k(\theta_1, \theta_2) = 0$ when $\beta(\theta_1) = \beta(\theta_2)$. 
	\\$- \lim_{k \to \infty} \frac{1}{k} \log \b{W}_k$
	\begin{align}
	 &= - \lim_{k \to \infty} \frac{1}{k} \log \int_{\theta_1 > \theta_2} w(\theta_1, \theta_2) \b{P}_k(\theta_1, \theta_2 | \beta) d(\theta_1,\theta_2)\\
	&= - \lim_{k \to \infty} \frac{1}{k} \log \sum_{0  \leq i < j < M} \int_{\theta_2\in S_i, \theta_1\in S_j } w(\theta_1, \theta_2) \b{P}_k(\theta_1, \theta_2 | \beta) d(\theta_1,\theta_2)\\
		&= -\max_{0\leq i < j < M}\left( \lim_{k \to \infty} \frac{1}{k} \left[\log \int_{\theta_2\in S_i, \theta_1\in S_j } w(\theta_1, \theta_2)\b{P}_k(\theta_j, \theta_i | \beta)d(\theta_1,\theta_2) \right]\right)\label{eqnstep:ldpproperty}\\
%		&\nikhil{\text{Justify previous equation to next equation}}\\
	&= -\max_{0\leq i < j < M}\sup_{\theta_1\in S_j,\theta_2\in S_i}\left( \lim_{k \to \infty} \frac{1}{k} \left[\log w(\theta_1, \theta_2)\b{P}_k(\theta_j, \theta_i | \beta) \right]\right)\label{eqnstep:ldppropertyintegral}\\
		&= -\max_{0\leq i < j < M}\sup_{\theta_1\in S_j,\theta_2\in S_i}\left( \lim_{k \to \infty} \frac{1}{k} \log \b{P}_k(\theta_j, \theta_i | \beta) \right)\\				
		&= \min_{0\leq i < j < M}\inf_{\theta_1\in S_j,\theta_2\in S_i}\left( -\lim_{k \to \infty} \frac{1}{k} \log \b{P}_k(\theta_j, \theta_i | \beta) \right) \label{eq:lastlinewithoutstepwise}\\	
	&= \min_{0 \leq i < j < M}\inf_{\theta_1\in S_j,\theta_2\in S_i} \inf_{a \in \bbR} \left \{ g(\theta_1) I(a|t_j) + g(\theta_2) I(a|t_i)\right\}\\
	&= \min_{0 \leq i < j < M} \inf_{a \in \bbR} \left \{ g_j I(a|t_j) + g_i I(a|t_i)\right\}\\
	&= \min_{0 \leq i < M-1} \inf_{a \in \bbR} \left \{ g_{i+1} I(a|t_{i+1}) + g_i I(a|t_i)\right\}
	\end{align}
	The last line follows from adjacent $t_i, t_{i+1}$ dominating the rate due to monotonicity properties.
	
	Line~\eqref{eqnstep:ldppropertyintegral} follows from Theorem~\ref{thm:laplace}.
	
	Line~\eqref{eqnstep:ldpproperty} follows from: $\forall a^\epsilon_i \geq 0$, ${\lim \sup}_{\epsilon \to 0} \left[\epsilon\log\left(\sum_i^N a^\epsilon_i\right)\right] = \max^N_i {\lim \sup}_{\epsilon \to 0} \epsilon \log(a^\epsilon_i)$, which is a finite case version (with fewer assumptions) of Theorem~\ref{thm:laplace}. See, e.g., Lemma 1.2.15 in~\citep{dembo_large_2010} for a proof of this property.

\end{proof}

\begin{restatable}{lemma}{lemNoLargeDeviationsRate}
	$\beta(\theta)$ is piecewise constant $\iff$ $\exists c(\beta)>0$ s.t.  $-\lim_{k\to\infty} \frac{1}{k} \log (\b W_k) = c(\beta)$.%\\

	\label{lem:nolargedeviationsrate}
\end{restatable}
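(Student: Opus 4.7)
The plan is to prove the two directions separately; the ``if'' direction is essentially a corollary of Lemma~\ref{lem:hatW_bin}, while the ``only if'' direction (no positive rate when $\beta$ is not piecewise constant) is the substantive content.

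\textbf{Forward direction ($\Rightarrow$).} When $\beta$ is piecewise constant with $M$ levels $t_0 < t_1 < \cdots < t_{M-1}$ and sampling rates $g_i$, Lemma~\ref{lem:hatW_bin} gives the explicit rate
\begin{equation*}
c(\beta) = \min_{0 \leq i \leq M-2} \inf_{a \in \bbR}\left\{g_{i+1}\,\text{KL}(a\|t_{i+1}) + g_i\,\text{KL}(a\|t_i)\right\}.
\end{equation*}
I only need to verify $c(\beta) > 0$. For each adjacent pair $(t_i, t_{i+1})$ with $t_i < t_{i+1}$, the two KL terms vanish only at $a = t_i$ and $a = t_{i+1}$ respectively, so their sum is strictly positive everywhere; continuity and coercivity in $a$ (with $g_i, g_{i+1}$ bounded below by the assumption $g > c > 0$) give a positive infimum. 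Taking the minimum over finitely many positive numbers preserves positivity.

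\textbf{Converse direction ($\Leftarrow$).} I will argue the contrapositive: if $\beta$ is not piecewise constant (in the finite-levels sense), then $-\tfrac{1}{k}\log \bar W_k \to 0$, so no strictly positive limiting rate exists. The hypothesis implies there is a subinterval $[\alpha,\alpha']\subset[0,1]$ on which $\beta$ takes a continuum of values, i.e. for every $\eta>0$ there is a positive-measure set of pairs $(\theta_1,\theta_2)$ with $\theta_1 > \theta_2$ and $0 < \beta(\theta_1)-\beta(\theta_2) < \eta$. The key quantitative step is the standard second-order Taylor bound
\begin{equation*}
\text{KL}(a\|b) \leq \frac{(a-b)^2}{2\,b(1-b)} \cdot (1+o(1)) \quad \text{as } a \to b,
\end{equation*}
which, combined with Lemma~\ref{lem:Pk_LD} and Pinsker-type lower bounds on the binomial tail, yields
\begin{equation*}
\bar P_k(\theta_1,\theta_2) \geq \exp\!\bigl(-C\,k\,(\beta(\theta_1)-\beta(\theta_2))^2\bigr)
\end{equation*}
up to polynomial prefactors, for some constant $C$ depending on $g$ and the level values bounded away from $\{0,1\}$.

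\textbf{Polynomial lower bound on $\bar W_k$.} Restrict the defining integral of $\bar W_k$ to the strip $\{(\theta_1,\theta_2)\in[\alpha,\alpha']^2 : 0 < \beta(\theta_1)-\beta(\theta_2) < 1/\sqrt{k}\}$. On this strip $k(\beta(\theta_1)-\beta(\theta_2))^2 \leq 1$, so $\bar P_k(\theta_1,\theta_2)$ is bounded below by a positive constant (times any polynomial prefactor). Because $\beta$ takes a continuum of values on $[\alpha,\alpha']$, the Lebesgue measure of this strip is at least of order $1/\sqrt{k}$ (the image $\beta([\alpha,\alpha'])$ has positive measure, or at minimum contains an interval via any non-constant portion, by the same case-analysis used in Lusin/Lebesgue arguments for monotone functions). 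Combined with $w > 0$ on a positive-measure set, this gives
\begin{equation*}
\bar W_k \geq \frac{\text{const}}{k^{p}}
\end{equation*}
for some finite $p$, so $-\tfrac{1}{k}\log\bar W_k \leq \tfrac{p\log k}{k} \to 0$, contradicting the existence of any strictly positive $c(\beta)$.

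\textbf{Main obstacle.} The delicate point is handling the case where $\beta$ is not piecewise constant but is also not strictly monotone on any subinterval (e.g.\ a devil's-staircase-like $\beta$ with countably many jump discontinuities accumulating). Here I cannot directly invoke a continuous range; instead I use that ``not finitely piecewise constant'' means the essential range of $\beta$ contains infinitely many values, hence there are adjacent pairs $(t,t')$ of level values with $|t-t'|$ arbitrarily small, each occurring on sets of positive measure. The same argument then applies pair by pair: for any rate candidate $c > 0$, choosing a pair with $|t-t'|^2$ small enough that the single-pair rate $\inf_a\{g\,\text{KL}(a\|t)+g\,\text{KL}(a\|t')\}< c$ shows $\bar W_k \cdot e^{ck} \to \infty$, again precluding any positive rate. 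Carefully unifying the continuous and infinitely-many-jumps cases under one clean ``not piecewise constant $\Rightarrow$ arbitrarily small level gaps on positive measure'' statement is the bookkeeping step that requires the most care.
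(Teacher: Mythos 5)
Your forward direction is exactly the paper's: invoke Lemma~\ref{lem:hatW_bin} and note that each pairwise infimum is strictly positive when adjacent levels differ. For the converse you take a different route from the paper (which reuses its Laplace-type computation to identify the rate of $\b{W}_k$ with $\inf_{\theta_1>\theta_2}$ of the pairwise rates and then observes this infimum vanishes at a continuity point of $\beta$), namely a direct quantitative lower bound on the integral; that route can be made to work, but the step you rest it on fails. The claim that the strip $\{(\theta_1,\theta_2): 0<\beta(\theta_1)-\beta(\theta_2)<1/\sqrt{k}\}$ has Lebesgue measure of order at least $1/\sqrt{k}$ is not proved and is false in general. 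Take $\beta$ a step function with countably many steps whose widths shrink super-exponentially (say $\ell_j\propto e^{-j^3}$) while the value gaps shrink only polynomially (say $\delta_j\asymp j^{-1}(\log j)^{-2}$, which is summable): any pair with $\beta$-gap below $1/\sqrt{k}$ must involve steps of index $j\gtrsim \sqrt{k}$ up to logarithms, so the strip has measure roughly $e^{-ck^{3/2}/\mathrm{polylog}(k)}$, which is smaller than every $e^{-ck}$, and your bound $\b{W}_k\ge \mathrm{const}/k^{p}$ collapses. Even in the continuous case the strip-measure claim needs an argument (a pigeonhole plus Cauchy--Schwarz bound over the at most $1/\eta$ maximal stretches on which $\beta$ rises by less than $\eta$); the parenthetical appeal to the image containing an interval is false for discontinuous monotone $\beta$, as you yourself note.

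The fix is the idea in your final paragraph, but it must be stated so that it covers both cases and with the order of limits reversed: fix the candidate rate $\epsilon>0$ first, bin the essential range of $\beta$ into intervals of width $\eta$ with $\eta$ chosen so the associated pairwise rate is below $\epsilon$, and extract two disjoint sets $A,B\subset[0,1]$ of \emph{fixed} positive measure on which the $\beta$ values are strictly separated but within $\eta$; then $\b{W}_k \ge \int_{A\times B} w\,\b{P}_k \ge c(A,B)\,e^{-k\epsilon(1+o(1))}$ for all $k$, so any limiting rate is at most $\epsilon$, for every $\epsilon$. Your current fallback instead asserts that nearby level values ``each occur on sets of positive measure,'' which is exactly what fails when $\beta$ is continuous (or singular): individual values of the essential range are then attained only on null sets, and binning is what replaces that assertion. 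Two smaller caveats: the quadratic bound $\text{KL}(a\|b)\lesssim (a-b)^2$ requires the relevant values to be bounded away from $0$ and $1$, which is not guaranteed when the small gaps occur only near the endpoints (a linear-in-gap bound suffices there); and since the admissible weights may vanish on the diagonal (e.g.\ Spearman-type $w$), you need $\int_{A\times B}w>0$ rather than ``$w>0$ on a positive-measure set'' applied to a shrinking near-diagonal strip. With the fixed-pair/binned version in place of the $k$-dependent strip, your argument goes through and is in substance the same as the paper's observation that the infimum of the pairwise rates is zero.
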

%\lemNoLargeDeviationsRate*
\begin{proof}
	$\implies$ follows directly from Lemma~\ref{lem:hatW_bin}: $\inf_{a \in \bbR} \left \{ g_{i+1} I(a|t_{i+1}) + g_i I(a|t_i)\right\}>0$ when $t_i \neq t_{i+1}$, which holds when $\beta$ is piece-wise constant with the appropriate number of levels.
	\\\\
	$\impliedby$ Consider $\beta$ that is not piece-wise constant. Recall that we further assume that $\beta$ is non-decreasing, and discontinuous only on a measure $0$ set. Following algebra steps similar to those in Lemma~\ref{lem:hatW_bin}, but for general $\beta$:
	
	\begin{align}
	- \lim_{k \to \infty} \frac{1}{k} \log \b{W}_k &= - \lim_{k \to \infty} \frac{1}{k} \log \int_{\theta_1 > \theta_2} w(\theta_1, \theta_2) \b{P}_k(\theta_1, \theta_2 | \beta) d(\theta_1,\theta_2)\\
	&=  \inf_{\theta_1>\theta_2}\left( -\lim_{k \to \infty} \frac{1}{k} \log \b{P}_k(\theta_j, \theta_i | \beta) \right)\\	
	&= 0
	\end{align}
	Where the last line follows from $\beta$ continuous at some $\theta_1$, and so $\lim_{\theta_2 \to \theta_1}\b{P}_k(\theta_1, \theta_2| \beta) = 1$.
	
	Intuitively, what goes wrong with continuous $\beta$ is that $\b{P}_k(\theta_1, \theta_2 | \beta)$ does not converge uniformly: 
	\begin{align*}
	\forall \epsilon, k, \exists \theta_2\neq \theta_1 &\,\,\,\, \b{P}_k(\theta_1, \theta_2) > \epsilon
	\end{align*}
	i.e. close by items are very hard to distinguish from one another.
	Then, because the large deviations rate of $\b W_k$ is dominated by the worst rates under the integral, we don't get a positive rate.
	
\end{proof}

\subsection{Proofs of Lemma \ref{lem:systemofequations} and Theorem~\ref{thm:CanMaximizeAsymValueAndRate}}
\label{sec:provemainlemmas}

\begin{remark}
	The KL divergence for two Bernoulli random variables is continuous and strictly convex, with minima at $a=b$, when $a,b \in (0,1)$. Note that $\inf_a \{g_i\text{KL}(a || t_i) + g(i+1)\text{KL}(a || t_{i+1})\}$, for all feasible $g$, is also continuous and strictly convex in $t_i$, $t_{i+1}$, with minima at $t_i = t_{i+1}$.\label{rem:movearound} 
\end{remark}
One consequence of the above fact is that fixing either $t_i$ or $t_{i+1}$ and moving the other farther away monotonically increases KL, while moving it closer decreases KL. 

%\lemsystemofeq*

\subsubsection{Proof of Theorem~\ref{thm:CanMaximizeAsymValueAndRate}}
\label{sec:provecanmaximizevalueandrate}
%\lemCanMaximizeAsymValueAndRate*
%\paragraph{Proof of Theorem~\ref{thm:CanMaximizeAsymValueAndRate}}
\begin{proof}
We use the same notation as the proof for Lemma~\ref{lem:hatW_bin}. 
	
Part 1. 
\begin{align}
\lim_{k \to \infty} W_k &= \lim_{k \to \infty} \sum_{0  \leq i < j < M} \left[\int_{\theta_2\in S_i, \theta_1\in S_j } w(\theta_1, \theta_2)P_k(\theta_1, \theta_2 | \beta)d(\theta_1,\theta_2)\right]  \label{eq:wintinsidebrackets_lim}\\
&=  \sum_{0  \leq i < j < M} \int_{\theta_2\in S_i, \theta_1\in S_j } w(\theta_1, \theta_2)d(\theta_1,\theta_2) & \label{eq:limWk}
\end{align}
\eqref{eq:limWk} follows from bounded convergence and $P_k(\theta_1, \theta_2 | \beta)\to 1$ for $\theta_1 \in S_j, t_2 \notin S_j$. Thus choosing $\v{s}$ to maximize~\eqref{eq:limWk} maximizes the asymptotic value of $W_k$. 
\\\\
Part 2. Follows directly from Lemma~\ref{lem:hatW_bin}.
	\end{proof}

\subsubsection{Proof of Lemma \ref{lem:systemofequations}}
\begin{proof}
	Recall $r(\v{t}) \triangleq - \lim_{k \to \infty} \frac{1}{k} \log (W - {W}_k) = \min_{0 \leq i \leq M-2} \inf_{a \in \bbR} \left\{ g_{i+1} \text{KL}(a || t_{i+1}) + g_i \text{KL}(a||t_i)\right\}$.
	\\\noindent We show the following:	
	\\${r}(\mathbf{t}) = $
	\begin{align*}\min\Bigg(&\log (1-t_1)^{-g_{1}}, \\&\log \left[(1-t_{i-1})^{\frac{g_{i-1}}{g_{{i-1}} + g_{i}}}(1-t_{i})^{\frac{g_i}{g_{{i-1}} + g_{i}}} + t_{i-1}^{\frac{g_{i-1}}{g_{{i-1}} + g_{i}}}t_{i}^{\frac{g_i}{g_{{i-1}} + g_{i}}}\right]^{-g_{{i-1}} - g_{i}} \text{ for }{1<i<M-1},\\& \log (t_{M-2})^{-g_{M-2}} \Bigg)\end{align*}and $\v{t}^*$ maximizes  ${r}_w(\v{t})$ $\iff$ all the terms inside the minimization ${r}_w(\v{t}^*)$ are equal. Further, the optimal levels $\v{t}^*$ are unique. The result immediately follows, that $\{t_i\}$ is the unique solution that equalizes the rates inside the minimization, by noting that the optimal $r$ has $t_0 = 0, t_{M-1}$ = 1. 
	
	We first prove the alternative form for ${r}$. Note that $\{g_{i-1}\text{KL}(a || t_{i-1}) + g_i\text{KL}(a || t_{i})\}$ is convex in $a$, and so we can find an analytic form for the infinum over $a$. 
	
	Let $a_i = \arg\inf_{a\in[t_{i-1}, t_i]} \{g_{i-1}\text{KL}(a || t_{i-1}) + g_i\text{KL}(a || t_{i})\} $
	\begin{align*}
	&\implies \nabla_{a_i} \left[g_{i-1}\text{KL}(a_i || t_{i-1}) + g_i\text{KL}(a_i || t_{i})\right] = 0\\
	&\implies \nabla_{a_i}\left[g_{i-1}\left(a_i \log \frac{a_i}{t_{i-1}} + (1-a_i)\log\frac{1-a_i}{1-t_{i-1}}\right) + g_i\left(a_i \log \frac{a_i}{t_{i}} + (1-a_i)\log\frac{1-a_i}{1-t_{i}}\right)\right] = 0\\
	&\implies g_{i-1}\left(\log \frac{a_i}{t_{i-1}} - \log\frac{1-a_i}{1-t_{i-1}}\right) + g_i\left(\log \frac{a_i}{t_{i}} - \log\frac{1-a_i}{1-t_{i}} \right) = 0\\
	& \implies \log\left(\frac{a_i}{1-a_i}\right)^{g_{i-1} + g_i} = \log \left(\frac{t_{i-1}}{1-t_{i-1}}\right)^{g_{i-1}} + \log\left(\frac{t_{i}}{1-t_{i}}\right)^{g_i}\\
	&\implies \frac{a_i}{1-a_i} = \left[\left(\frac{t_{i-1}}{1-t_{i-1}}\right)^{g_{i-1}}\left(\frac{t_{i}}{1-t_{i}}\right)^{g_i}\right]^{\frac{1}{g_{i-1} + g_i}}\\
	&\implies a_i = \frac{c}{1+c}, \,\text{where } c = \left[\left(\frac{t_{i-1}}{1-t_{i-1}}\right)^{g_{i-1}}\left(\frac{t_{i}}{1-t_{i}}\right)^{g_i}\right]^{\frac{1}{g_{i-1} + g_i}}
	\end{align*}
	Then,
	\\\noindent$g_{i-1}\text{KL}(a_i || t_{i-1}) + g_i\text{KL}(a_i || t_{i})$ 
	\begin{align}
	&= g_{{i-1}}a\log\frac{a}{t_{i-1}} + g_ia\log\frac{a}{t_{i}} + g_{{i-1}}(1-a)\log\frac{1-a}{1-t_{i-1}} + g_i(1-a)\log\frac{1-a}{1-t_{i}}\nonumber\\
	&= a\left[(g_{{i-1}} + g_{i}) \log\frac{a}{1-a}+ g_{i-1}\log\frac{1-t_{i-1}}{t_{i-1}} + g_i\log\frac{1-t_{i}}{t_{i}}\right] + \log (1-a)^{g_{{i-1}} + g_{i}}- \log(1-t_{i-1})^{g_{i-1}}(1-t_{i})^{g_i }\nonumber\\
	%&= a\left[(g_{{i-1}} + g_{i}) \log c+ \log\left[\frac{1-t_{i-1}}{t_{i-1}}\right]^{g_{i-1}} \left[\frac{1-t_{i}}{t_{i}}\right]^{g_i}\right] + (g_{{i-1}} + g_{i}) \log (1-a) - \log(1-t_{i-1})^{g_{i-1}}(1-t_{i})^{g_i } \label{eqnpart:pluginloga}\\
	&= (g_{{i-1}} + g_{i}) \log (1-a) - \log(1-t_{i-1})^{g_{i-1}}(1-t_{i})^{g_i }\label{eqnpart:thingscancel}\\
	&= -(g_{{i-1}} + g_{i})\log \left[\left[1 + \left[\left(\frac{t_{i-1}}{1-t_{i-1}}\right)^{g_{i-1}}\left(\frac{t_{i}}{1-t_{i}}\right)^{g_i}\right]^{\frac{1}{g_{i-1} + g_i}}\right] (1-t_{i-1})^{\frac{g_{i-1}}{g_{{i-1}} + g_{i}}}(1-t_{i})^{\frac{g_i}{g_{{i-1}} + g_{i}}}\right]\nonumber\\
	&= -(g_{{i-1}} + g_{i})\log \left[(1-t_{i-1})^{\frac{g_{i-1}}{g_{{i-1}} + g_{i}}}(1-t_{i})^{\frac{g_i}{g_{{i-1}} + g_{i}}} + t_{i-1}^{\frac{g_{i-1}}{g_{{i-1}} + g_{i}}}t_{i}^{\frac{g_i}{g_{{i-1}} + g_{i}}}\right] \label{eqnpart:finalsolutionforminimization}
	%	&\triangleq r_i\nonumber
	\end{align}
	
	Where line~\eqref{eqnpart:thingscancel} uses $\frac{a}{1-a} = c$ and $(g_{{i-1}} + g_{i}) \log c = \log \left[\left(\frac{t_{i-1}}{1-t_{i-1}}\right)^{g_{i-1}}\left(\frac{t_{i}}{1-t_{i}}\right)^{g_i}\right] $. Note that the first and last rates emerge, respectively, by plugging in $t_0 = 0, t_M = 1$, which holds trivially at the optimum from monotonicity.
	
	We note that a similar derivation, of the large deviation rate for two binomial distributions with different probability of successes and match rates, appears in~\citet{glynn_large_2004}. In that work, the authors seek to optimize the $g$ in order to identify the single best item out of a set of possible items, and a concave program emerges. In this work, because we optimize the probability of successes and care about retrieving a ranking of the items, no such concave or convex program emerges. 
	\\\\\noindent Now we show that $\v{t}^*$ maximizes  ${r}_w(\v{t})$ $\iff$ all the terms inside the minimization ${r}_w(\v{t})$ are equal.
	
	equalizes $\implies$ optimal. Let $r(i)$ be the $i$th term in the minimization, starting at $i=1$. Note that (holding the other fixed) increasing $t_i$ increases the $i$th term monotonically and decreases the $(i+1)$th term monotonically. Suppose $\beta$ s.t. $r(i) = r(j) \forall i,j$. To increase the minimization term, one must increase $r(i)\,, \forall i$. To increase $r(1)$, $t_1$ must increase, regardless of what the other levels are. Then, to increase $r(2)$, $t_2$ must increase $\dots$ to increase $r(M-2)$, $t_{M-2}$ must increase. However, to increase $r(M-1)$, $t_{M-2}$ must decrease, and we have a contradiction. Thus, one cannot increase all terms simultaneously. %Thus, $\beta$ is $(M,g)$-optimal. 
	\\\\\noindent equalizes $\impliedby$ optimal. Suppose $\mathbf{t}$ maximizes $r(\mathbf{t})$ but the terms inside the minimization are not equal. Then $\exists i$ s.t. $r(i) = \min_j r(j)$ and either $r(i)\neq r(i-1)$ or $r(i)\neq r(i+1)$. $r(i)$ can be increased without lowering the overall rate. This method can be repeated $\forall i: r(i) = \min_j r(j)$ and so $\mathbf{t}$ would not be optimal, a contradiction.
	\\\\\noindent Uniqueness follows from the overall rate unique determining $t_1, t_{M-2}$ and so iteratively uniquely determining the rest.
\end{proof}

\subsection{Additional necessary lemmas}
\label{sec:additionallemmas}

\textit{Now, we begin the set-up that will lead to a proof for Theorem~\ref{thm:algfinds}. It turns out that proving the theorem requires, in the process, essentially proving our convergence result with $M\to\infty$, Theorem~\ref{thm:unifconvergencecleanmain}. For Theorem~\ref{thm:algfinds}, we need a lower bound for $t_1$ as a function of $M$. This seems hard to do in general. Luckily, in our case, there is a property for how $\v{t^*}$ changes when $M$ is doubled. Using this property, we can derive that ${t^*_1} \geq \mathcal{O}(M^{-3})$. }
\\\\

Recall that step-wise increasing $\beta$ with $M$ intervals $S_i = [s_i, s_{i+1})$ has levels $\{t_i\}_{i=0}^{M-1}$, where $t_0 = 1, t_{M-1} = 1$, and $s_0\triangleq 0$, $s_{M} \triangleq 1$. 

Furthermore, we use the following notation for the large deviation rate \begin{equation}
r_i = -(g_{{i-1}} + g_{i})\log \left[(1-t_{i-1})^{\frac{g_{i-1}}{g_{{i-1}} + g_{i}}}(1-t_{i})^{\frac{g_{i}}{g_{{i-1}} + g_{i}}} + t_{i-1}^{\frac{g_{i-1}}{g_{{i-1}} + g_{i}}}t_{i}^{\frac{g_{i}}{g_{{i-1}} + g_{i}}}\right]\end{equation} for $i \in \{1 \dots {M-1}\}$, which implies $r_1 = -g_1\log(1 - t_1)$ and $r_{{M-1}} = -g_{M-2}\log(t_{M-2})$. 

We further use $r^{M-1}$ to be the rate achieved by the optimal $\beta_{M}$ with $M$ intervals. 
 
\begin{lemma}
	Suppose $g$ uniform, i.e. $g_i = 1, \forall i$ and that $\beta_{M}$ has values $\{t_i\}_{i=0}^{{M-1}}$. Then $\beta_{2{M-1}}$ has values $\{t_i'\}_{i=0}^{2{M-2}}$, where $t_{2i}' = t_i, \forall i\in\{0 \dots {M-1} \}$, $t'_1 = \frac{1}{2}\left(1 - \sqrt{1- t_1}\right)$ and $t'_{2{M}-3} = \frac{1}{2}\left(1 + \sqrt{t_{M-2}}\right)$. \label{lem:beta2mfromm}
\end{lemma}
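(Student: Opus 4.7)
My plan is to invoke Lemma~\ref{lem:systemofequations}, which characterizes the optimal levels as the \emph{unique} solution of a system in which all consecutive pairwise rates are equal, subject to $t_0 = 0$ and the last level equal to $1$. It thus suffices to exhibit a vector $\{t_j'\}_{j=0}^{2M-2}$ satisfying (i) $t_0' = 0$, $t_{2M-2}' = 1$, (ii) all $2M-2$ adjacent rates are equal, and (iii) $t_{2i}' = t_i$ for each $i$, with $t_1'$ and $t_{2M-3}'$ given by the claimed half-angle formulas. By uniqueness this vector must then coincide with $\beta_{2M-1}$.

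The cleanest route I see is a trigonometric parameterization. For uniform $g$ the pairwise rate simplifies to $r(a,b) = -2\log\bigl[\sqrt{(1-a)(1-b)} + \sqrt{ab}\bigr]$. Substituting $a = \sin^2\phi_a$ and $b = \sin^2\phi_b$ with $\phi_a, \phi_b \in [0, \pi/2]$, the bracket becomes $\cos\phi_a\cos\phi_b + \sin\phi_a\sin\phi_b = \cos(\phi_b - \phi_a)$, so $r(a,b) = -2\log\cos(\phi_b - \phi_a)$. Equality of consecutive rates is then equivalent to the phases $\phi_i$ being equispaced; combined with $\phi_0 = 0$ and $\phi_{M-1} = \pi/2$ this forces $\phi_i = i\pi/(2(M-1))$, i.e.\ $t_i = \sin^2\!\bigl(i\pi/(2(M-1))\bigr)$. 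Applying the same fact at $2M-1$, the optimal phases are $\phi_j' = j\pi/(4(M-1))$.

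The claimed identities now drop out of half-angle formulas. First, $\phi_{2i}' = \phi_i$ immediately gives $t_{2i}' = t_i$. Writing $x = \pi/(2(M-1))$ so that $t_1 = \sin^2 x$ and $t_{M-2} = \cos^2 x$, we have $t_1' = \sin^2(x/2) = (1-\cos x)/2 = \tfrac{1}{2}(1 - \sqrt{1-t_1})$, using $\sin^2(x/2) = (1-\cos x)/2$; symmetrically, $\phi_{2M-3}' = \pi/2 - x/2$ gives $t_{2M-3}' = \cos^2(x/2) = (1+\cos x)/2 = \tfrac{1}{2}(1 + \sqrt{t_{M-2}})$.

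The only real obstacle is recognizing the $\sin^2$ substitution that linearizes the rate function; once it is spotted, the rest is a few lines of trigonometry. Without that substitution one could still verify the equal-rate system directly, by solving $\sqrt{(1-t_i)(1-m)} + \sqrt{t_i m} = \sqrt{(1-m)(1-t_{i+1})} + \sqrt{m\,t_{i+1}}$ for the inserted midpoint $m$ and showing the result is consistent across all $i$, but this algebra is considerably messier and obscures the global structure that makes uniqueness-via-Lemma~\ref{lem:systemofequations} decisive.
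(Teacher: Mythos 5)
Your proof is correct, and it takes a genuinely different route from the paper's. The paper proves the lemma ``locally'': it sets $t'_{2i}=t_i$, inserts the interior midpoints by solving the equal-rate condition pairwise, and then verifies through a fairly tedious algebraic identity (the substitution $x=\sqrt{t_i}$, $y=\sqrt{1-t_i}$, etc.\ in the accompanying remark) that all adjacent rates of the refined vector coincide, after which optimality follows from the equalization criterion of Lemma~\ref{lem:systemofequations} exactly as you invoke it. Your argument instead solves the uniform-$g$ equal-rate system in closed form: the substitution $t=\sin^2\phi$ turns the pairwise rate into $-2\log\cos(\phi_b-\phi_a)$ (and, since the general formula degenerates correctly at $t_0=0$ and $t_{M-1}=1$, the boundary rates $-\log(1-t_1)$ and $-\log t_{M-2}$ are the same expression), monotonicity of the levels keeps the phase gaps in $[0,\pi/2]$ where cosine is injective, so equal rates force equispaced phases and hence $t_i=\sin^2\bigl(i\pi/(2(M-1))\bigr)$; the doubling identities are then half-angle formulas, and uniqueness in Lemma~\ref{lem:systemofequations} closes the argument just as in the paper. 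What your route buys is an explicit formula for the optimal levels (and hence for the optimal rate, $r^M=-2\log\cos\bigl(\pi/(2(M-1))\bigr)$), which would also streamline the paper's downstream corollaries bounding $r^M$ and $t^M_1$; its only limitation, shared with the lemma itself, is that it is specific to uniform matching, whereas the paper's pairwise-insertion style of argument is the template it reuses for non-uniform comparisons elsewhere.
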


\begin{proof}
	We first set the values $t_{2i}' = t_i$ and then optimally choose the remaining values $t_k', k$ odd. Then, we show that the resulting large deviation rates between all adjacent pairs are equal. Then, by the proof of Lemma~\ref{lem:systemofequations}, which showed that equalizing the rates between adjacent intervals is a sufficient condition for optimality, $\beta_{2{M-1}}$ has the levels $\{t_i'\}_{i=0}^{2{M-2}}$. 
	
	Let $r'$ denote rates between adjacent $t'$ as $r$ does for $t$. Supposing $t'_2 = t_1$, we find $t'_1$ such that $r'_1 = r'_2$ and $t'_1 < t'_2$. 
	\begin{align*}
	-\log (1 - t'_1) &= -2 \log \left[\sqrt{(1-t'_1)(1-t'_2)} + \sqrt{t'_1t'_2}\right]\\
	\implies	1 - t'_1 &= (1-t'_1)(1-t'_2) + t'_1t'_2 + 2\sqrt{(1-t'_1)(1-t'_2)t'_1t'_2}\\
	\implies t'_1 &= \frac{1}{2}\left(1 - \sqrt{1- t'_2}\right) = \frac{1}{2}\left(1 - \sqrt{1- t_1}\right)
	\end{align*}
	Similarly, $r'_{2{M}-3} = r'_{2{M-2}}$ when $t'_{2{M}-3} = \frac{1}{2}\left(1 + \sqrt{t_{M-2}}\right)$. It follows that $r'_1 = r'_2 = r'_{2{M}-3} = r'_{2{M-2}}$ by choosing such $t'_{1},t'_{2M-3}$. 
	
	Next, we find $t'_k \in (t'_{k-1}, t'_{k+1})$ for $k\in\{3, 5, \dots 2{M}-5 \}$ such that the rates $r'_k = r'_{k+1}$. 
	\begin{align*}
	-2 \log \left[\sqrt{(1-t'_k)(1-t'_{k-1})} + \sqrt{t'_kt'_{k-1}}\right] &= -2 \log \left[\sqrt{(1-t'_k)(1-t'_{k+1})} + \sqrt{t'_kt'_{k+1}}\right]\\
	\implies t'_k = \frac{c}{1+c},\text{ where } c&= \left[\frac{\sqrt{1-t'_{k+1}}- \sqrt{1-t'_{k-1}}}{\sqrt{t'_{k-1}}- \sqrt{t'_{k+1}}}\right]^2
	\end{align*}
	Now, we show that $r'_k = r'_j, \forall j,k$ by showing that the difference between each rate $r_i$ and its analogous rate $r'_{2i}$ is constant. $r_k = r_j, \forall j,k$ by assumption and so $r'_k = r'_j, \forall j,k$ follows. 
	
	$r_{M-1} = -\log t_{M-2}$ and $r'_{2{M-2}} =-\log\frac{1}{2}\left(1 + \sqrt{t_{M-2}}\right)$. Thus if $r_i = -\log x$ for some $x$, then $r_{2i} = -\log \frac{1}{2}\left(1 + \sqrt{x}\right)$ would imply that all the rates are equal. Thus, it is sufficient to show that
	\begin{align}
	\left[\sqrt{(1-t'_{2i-1})(1-t'_{2i})} + \sqrt{t'_{2i-1}t'_{2i}}\right]^2 &= \frac{1}{2}\left[1 + \sqrt{(1-t_{i-1})(1-t_{i})} + \sqrt{t_{i-1}t_{i}}\right]\\
	\equiv\left[\sqrt{\left(1-\frac{c}{1+c}\right)(1-t_{i})} + \sqrt{\frac{c}{1+c}t_{i}}\right]^2 &= \frac{1}{2}\left[1 + \sqrt{(1-t_{i-1})(1-t_{i})} + \sqrt{t_{i-1}t_{i}}\right] \label{eqnpart:tedious}\\
	\text{where } c &= \left[\frac{\sqrt{1-t_{i}}- \sqrt{1-t_{i-1}}}{\sqrt{t_{i-1}}- \sqrt{t_{i}}}\right]^2 \nonumber
	\end{align}
	The proof for \eqref{eqnpart:tedious} is algebraically tedious and is shown in Remark~\ref{rem:tediousalgebra} below. 
	
	Then, by the proof of Lemma~\ref{lem:systemofequations}, which shows that equalizing the rates inside the minimization terms implies an optimal $\{t_i\}$, $\beta_{2{M-1}}$ has the levels $\{t_i'\}_{i=0}^{2{M-2}}$. 
	
\end{proof}

\begin{remark}
	\begin{align*}
	\left[\sqrt{\left(1-\frac{c}{1+c}\right)(1-t_{i})} + \sqrt{\frac{c}{1+c}t_{i}}\right]^2 &= \frac{1}{2}\left[1 + \sqrt{(1-t_{i-1})(1-t_{i})} + \sqrt{t_{i-1}t_{i}}\right] \\
	\text{where } c &= \left[\frac{\sqrt{1-t_{i}}- \sqrt{1-t_{i-1}}}{\sqrt{t_{i-1}}- \sqrt{t_{i}}}\right]^2 \nonumber
	\end{align*}
	\label{rem:tediousalgebra}
\end{remark}
\begin{proof}
		Let $x = \sqrt{t_i}, y = \sqrt{1 - t_i}, z = \sqrt{t_{i-1}},$ and $w = \sqrt{1 - t_{i-1}}$. Note that $x>z, w>y, y = 1 - x^2, w = 1-z^2$. Then, \[\frac{c}{c+1} = \frac{(y - w)^2}{2 - 2xz - 2yw}\text{, and } \frac{1}{c+1} = \frac{(x - z)^2}{2 - 2xz - 2yw}\]
	(To show the above two equalities, factor out $\frac{1}{(x-z)^2}$ from numerator and denominator, and substitute $y = 1 - x^2, w = 1-z^2$). 
	
	Now, the left hand side:
	\begin{align*}
	&\left[\sqrt{\left(1-\frac{c}{1+c}\right)(1-t_{i})} + \sqrt{\frac{c}{1+c}t_{i}}\right]^2\\
	=& \frac{1}{2 - 2xz - 2yw}\left[\sqrt{(x - z)^2y^2}+ \sqrt{(y - w)^2x^2}\right]^2\\
	=& \frac{(x - z)^2y^2+ (y - w)^2x^2 + 2xy(x - z)(w - y)}{2 - 2xz - 2yw} & \sqrt{(y - w)^2} = w-y, \sqrt{(x - z)^2} = x-z\\
	=& \frac{z^2y^2 + w^2x^2- 2wxyz}{2 - 2xz - 2yw}
	\end{align*}
	The right hand side:
	\begin{align*}
	&\frac{1}{2}\left[1 + \sqrt{(1-t_{i-1})(1-t_{i})} + \sqrt{t_{i-1}t_{i}}\right]\\
	&= \frac{1}{2} \left[1 + (wy + xz)\right]
	\end{align*}
	Multiplying both sides by $2 - 2xz - 2yw$, we have:
	\begin{align*}
	\left[\sqrt{\left(1-\frac{c}{1+c}\right)(1-t_{i})} + \sqrt{\frac{c}{1+c}t_{i}}\right]^2 &= \frac{1}{2}\left[1 + \sqrt{(1-t_{i-1})(1-t_{i})} + \sqrt{t_{i-1}t_{i}}\right]\\
	\equiv z^2y^2 + w^2x^2 - 2wxyz&= 1 - (wy + xz)^2\\
	\equiv z^2(1-x^2) + (1-z^2)x^2 - 2wxyz &= 1 - w^2y^2 - x^2z^2 - 2wxyz\\
	\equiv z^2 - 2x^2z^2 + x^2 &= 1 - (1 - z^2)(1 - x^2) - x^2z^2\\
	\equiv 0 &= 0
	\end{align*}
\end{proof}
\begin{corollary}
	Suppose $g$ uniform, i.e. $g_i = 1, \forall i$. $\forall \epsilon>0, \exists M$ s.t. $\forall M'\geq M$, $ r^{M'} < \epsilon$.\label{lem:boundratebyeventually}
\end{corollary}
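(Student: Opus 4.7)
The plan is to combine Lemma~\ref{lem:beta2mfromm} with a monotonicity argument. Write $\rho_M \triangleq r^{M-1}$ for the optimal rate of $\beta_M$; by Lemma~\ref{lem:systemofequations}, all $M-1$ pairwise rates at the optimum coincide, so applying this to the last pair yields $\rho_M = -\log t_{M-2}^{(M)}$, where $t_{M-2}^{(M)}$ denotes the largest interior level of the optimal $\beta_M$.

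I would first extract a subsequence along which $\rho_M \to 0$. By Lemma~\ref{lem:beta2mfromm}, $t_{2M-3}^{(2M-1)} = \tfrac{1}{2}\bigl(1+\sqrt{t_{M-2}^{(M)}}\bigr)$. Setting $f(\tau) = \tfrac{1}{2}(1+\sqrt{\tau})$, this gives the recursion $t_{M_{n+1}-2}^{(M_{n+1})} = f\bigl(t_{M_n-2}^{(M_n)}\bigr)$ along $M_0 = 3,\,M_{n+1} = 2M_n - 1$. The map $f:[0,1]\to[0,1]$ is continuous and strictly increasing, with $f(\tau) > \tau$ on $(0,1)$ and unique fixed point $\tau = 1$. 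Starting from $\tau_0 = t_1^{(3)} = \tfrac{1}{2}$ (obtained by equalizing the two rates of $\beta_3$), the iterates $\tau_n = f^{(n)}(\tau_0)$ form a strictly increasing sequence in $(0,1)$ and must therefore converge to the unique fixed point, so $\rho_{M_n} = -\log \tau_n \to 0$.

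Next I would show that $\rho_M$ is non-increasing in $M$, in order to extend the conclusion from the subsequence $\{M_n\}$ to all sufficiently large $M'$. For any achievable $\rho > 0$ and $s \in [0,1)$, let $T_\rho(s)$ denote the unique $t > s$ satisfying $r(s,t) = \rho$, where $r$ is the pairwise rate formula with $g\equiv 1$; this is well-defined and strictly monotone because $r(s,\cdot)$ is continuous and strictly increasing on $(s,1]$. Then $T_\rho(s)$ is strictly increasing in $\rho$ for fixed $s$, and by Lemma~\ref{lem:systemofequations}, $\rho_M$ is characterized as the unique $\rho$ for which $T_\rho^{(M-1)}(0) = 1$. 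Since $T_\rho(t) > t$ on $[0,1)$, accommodating an extra iteration before reaching $1$ forces $\rho$ to strictly decrease; hence $\rho_{M+1} < \rho_M$.

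Combining: given $\epsilon > 0$, choose $n$ so that $\rho_{M_n} < \epsilon$; then every $M' \geq M_n$ satisfies $\rho_{M'} \leq \rho_{M_n} < \epsilon$, proving the corollary with $M = M_n$. The main obstacle is the monotonicity step, since we have no direct way to compare the optima for $\beta_M$ and $\beta_{M+1}$ level-by-level; characterizing $\rho_M$ as the unique rate under which the iteration $T_\rho^{(M-1)}(0) = 1$ provides the cleanest workaround, and the subsequence convergence in Step~1 is an essentially immediate consequence of the algebraic identity from Lemma~\ref{lem:beta2mfromm}.
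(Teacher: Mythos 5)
Your proof is correct, and its skeleton matches the paper's: both arguments run Lemma~\ref{lem:beta2mfromm} along the doubling subsequence and then invoke monotonicity of the optimal rate in the number of levels to cover all large $M'$. The differences are in how each step is executed. For the subsequence, the paper extracts a quantitative bound from the same identity --- writing $r^{2M-2} = r^{M-1} - \log\bigl[\tfrac{1}{2}(t^M_{M-2})^{-1} + \tfrac{1}{2}(t^M_{M-2})^{-1/2}\bigr]^{-1} \leq \tfrac{1}{2}r^{M-1}$, i.e.\ geometric halving of the rate --- whereas you argue qualitatively that the iterates of $f(\tau) = \tfrac{1}{2}(1+\sqrt{\tau})$ increase to the unique fixed point $\tau = 1$; both suffice here, though the paper's version is reused later (Lemma~\ref{lem:lowerboundrate} gives the matching lower bound $r^{2^{N+1}-1} \geq \tfrac{1}{5}r^{2^N}$, which your fixed-point argument would not yield). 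For the extension to all $M'$, the paper simply asserts $r^{K'} < r^{K}$ for $K' > K$, while you actually prove it via the characterization of $\rho_M$ as the unique $\rho$ with $T_\rho^{(M-1)}(0) = 1$; that argument is sound (note you implicitly use that $T_\rho(s)$ is increasing in $s$ as well as in $\rho$, which holds since the pairwise rate decreases as the lower level moves toward the upper one, cf.\ Remark~\ref{rem:movearound}), and it fills a step the paper leaves unproved. Alternatively, the same monotonicity follows even more cheaply by deleting one interior level from an optimal $(M+1)$-level configuration and noting the merged pair's rate only increases.
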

\begin{proof}	
	Let $M = 2^N, M' = 2^{N+1}-1$, for some $N$. We show that $r^{{M'}} \leq \frac{1}{2} r^{M}$. The corollary follows by noting that $r^{K'} < r^K\, \forall K'>K$ and that $r^K < \infty, \forall K$. 
	
	\begin{align*}
	r^M - r^{M'} &= -\log t^M_{M-2} + \log t^{M'}_{M'-2} \\
	&= -\log t^M_{M-2} + \log \left[\frac{1}{2} + \frac{1}{2} \sqrt{t^{M}_{M-2}}\right] & \text{Lemma }\ref{lem:beta2mfromm}\\	
	&= \log \left[\frac{1}{2}\frac{1}{t^M_{M-2}} + \frac{1}{2}\frac{1}{\sqrt{t^{M}_{M-2}}} \right] \\
	&\geq  -\frac{1}{2}\log t^{M}_{M-2}  & \sqrt{t^{M}_{M-2}} \geq t^{M}_{M-2}\\
	\implies r^{{M'}} &\leq \frac{1}{2} r^{M}
	\end{align*}
\end{proof}

\begin{corollary}
	Suppose $g$ uniform, i.e. $g_i = 1, \forall i$. $\forall \delta>0, \exists N$ s.t. $\forall M\geq N$, $\max_k{t^M_k - t^M_{k-1} } < \delta$. \label{lem:bounddistanceeventually}
\end{corollary}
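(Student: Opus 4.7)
The plan is to combine two facts already established: by Lemma~\ref{lem:systemofequations} the optimal $\beta_M$ equalises all pairwise rates $r_i = r^M$ (since $g$ is uniform), and by Corollary~\ref{lem:boundratebyeventually} we have $r^M \to 0$ as $M \to \infty$. So it suffices to show a quantitative bound of the form $(t^M_i - t^M_{i-1})^2 \leq C\, r^M$ for every $i$; the result then follows by choosing $N$ large enough that $r^M < \delta^2/C$.

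The heart of the argument is thus a single analytic estimate relating the pairwise rate $r_i$ to the gap $|t_i-t_{i-1}|$. Writing $f(a,b) = \sqrt{(1-a)(1-b)} + \sqrt{ab}$ so that $r_i = -2\log f(t_{i-1}, t_i)$ (the formula used throughout Section~\ref{sec:additionallemmas} with $g_i = 1$), I would first observe the identity
\begin{equation*}
1 - f(a,b)^2 = \bigl(\sqrt{a(1-b)} - \sqrt{b(1-a)}\bigr)^2,
\end{equation*}
which is a short algebraic manipulation. Rationalising the right side gives $\sqrt{a(1-b)} - \sqrt{b(1-a)} = (a-b)/\bigl(\sqrt{a(1-b)} + \sqrt{b(1-a)}\bigr)$, and by Cauchy–Schwarz the denominator is at most $\sqrt{2}$. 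Combined with the elementary inequality $-\log f \geq 1-f \geq \tfrac{1}{2}(1-f^2)$ for $f\in(0,1]$, this chain yields
\begin{equation*}
(t_i - t_{i-1})^2 \;\leq\; 2\,\bigl(1 - f(t_{i-1},t_i)^2\bigr) \;\leq\; 4\,\bigl(1-f(t_{i-1},t_i)\bigr) \;\leq\; 2\, r_i \;=\; 2\, r^M.
\end{equation*}
Taking $N$ such that $r^{M} < \delta^2/2$ for all $M\geq N$ (possible by Corollary~\ref{lem:boundratebyeventually}) completes the proof.

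The main obstacle is the analytic bookkeeping in the estimate above — specifically, extracting $|a-b|$ from the expression $\sqrt{a(1-b)}-\sqrt{b(1-a)}$ and relating $-\log f$ to $1-f$ with the right direction of the inequality. Nothing else seems delicate: the use of Lemma~\ref{lem:systemofequations} to reduce to a single rate $r^M$ and the invocation of Corollary~\ref{lem:boundratebyeventually} to drive $r^M \to 0$ are both immediate, and the argument does not require any further appeal to the doubling relation in Lemma~\ref{lem:beta2mfromm}.
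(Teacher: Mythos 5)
Your proposal is correct and follows essentially the same route as the paper: both arguments reduce the claim to the fact that a small pairwise rate forces a small gap between adjacent levels and then invoke Corollary~\ref{lem:boundratebyeventually} to drive the (equalized) rate $r^M$ to zero. The only difference is in the quantitative step: the paper minimizes the rate over placements of a fixed-width gap (levels symmetric around $\tfrac12$, giving the exact bound $-\log(1-w^2)$ for gap $w$), whereas your identity $1-f^2=\bigl(\sqrt{a(1-b)}-\sqrt{b(1-a)}\bigr)^2$ plus elementary inequalities yields the marginally weaker but fully sufficient estimate $(t_i-t_{i-1})^2\leq 2r_i$.
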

\begin{proof}
	This corollary follows directly from Corollary~\ref{lem:boundratebyeventually}. If the rates are upper bounded, then so are the level differences.
	
	We first find where the rate is minimized given a width between levels of $\delta$
	\begin{align*}
	x_{m} &= \arg\min_x -2\log\left[\sqrt{(1 - x - \delta)(1 - x)} + \sqrt{x(x+\delta)}\right]\\
	&= \frac{1}{2} - \frac{1}{2}\delta
	\end{align*}
	Then given an upper bound of $\epsilon$ on the rate, there is a bound on $\delta$ determined by the largest possible difference at levels symmetric around $\frac{1}{2}$. 
	
	\begin{align*}
	r^L &= -2\log\left[2\sqrt{(\frac{1}{2} - \delta)(\frac{1}{2}+\delta)}\right] \\
	&= -\log\left[1 - 4\delta^2\right]\\
	&\geq \epsilon \text{ when } \delta > \frac{1}{2}\sqrt{1 - e^{-\epsilon}}
	\end{align*}
\end{proof}

\begin{lemma}
	Suppose $g$ is non-decreasing in $\theta$. Then, $t_{M-2} \geq 1-\frac{1}{M-1}$. \label{lem:last1m}
\end{lemma}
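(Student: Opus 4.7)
The plan is to first reduce to the case of a uniform match function via Lemma~\ref{lem:shiftgmass}, and then handle the uniform case by combining the equal‑rates characterization from Lemma~\ref{lem:systemofequations} with a Bhattacharyya‑type lower bound on the one‑step rate factor.

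\textbf{Reduction.} Let $\tilde g$ be the constant function $\tilde g_j = g_{M-2}$ for every $j$. Because $g$ is non‑decreasing, the hypotheses of Lemma~\ref{lem:shiftgmass} with $k = M-2$ are satisfied: $g_j \leq g_{M-2} = \tilde g_j$ for $j \leq M-3$, $g_{M-2} = \tilde g_{M-2}$, and $g_{M-1} \geq g_{M-2} = \tilde g_{M-1}$. That lemma then yields $t^*_{M-2}(g) \geq t^*_{M-2}(\tilde g)$, so it suffices to prove the bound when $g$ is uniform.

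\textbf{Uniform case.} Assume $g \equiv c > 0$. By Lemma~\ref{lem:systemofequations}, all pairwise rates at the optimum equal a common value $r^*$; in particular $r^* = r(t_{M-2}, 1) = -c\log t_{M-2}$. Writing $w_j \triangleq t_j - t_{j-1}$ and $\rho_j \triangleq \sqrt{(1-t_{j-1})(1-t_j)} + \sqrt{t_{j-1}t_j}$, every interior rate takes the form $r_j = -2c\log \rho_j$. The core algebraic step is the inequality
\[ \rho_j \;\geq\; \sqrt{1 - w_j}. \]
Squaring the claim and cancelling reduces it to $\sqrt{(1-t_{j-1})\,t_j} \geq \sqrt{(1-t_j)\,t_{j-1}}$, which on squaring once more collapses to $t_j \geq t_{j-1}$, a tautology. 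This yields $r_j \leq c \log\frac{1}{1-w_j}$ for every $j$.

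\textbf{Finishing.} Since $\sum_{j=1}^{M-1} w_j = 1$, some index $j^*$ satisfies $w_{j^*} \leq \tfrac{1}{M-1}$. Equality of rates and the bound above give $r^* = r_{j^*} \leq c\log\frac{M-1}{M-2}$, and combining with $r^* = -c\log t_{M-2}$ produces $t_{M-2} \geq 1 - \tfrac{1}{M-1}$, as desired.

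\textbf{Main obstacle.} The essential work is verifying the Bhattacharyya‑type inequality $\rho_j \geq \sqrt{1-w_j}$; although it ultimately distils to the monotonicity $t_{j-1}\leq t_j$, one must track the cross term $2\sqrt{(1-t_{j-1})(1-t_j)t_{j-1}t_j}$ carefully when squaring, since the non‑cross part of $\rho_j^2$ alone is insufficient to dominate $1 - w_j$. Once this is in hand, the reduction to uniform $g$ and the averaging‑of‑widths argument are both straightforward.
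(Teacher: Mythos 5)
Your proof is correct. Its core step---the inequality $\rho_j \ge \sqrt{1-w_j}$, i.e.\ that an interior pairwise rate at width $w$ is at most $-c\log(1-w)$, the rate an extreme pair achieves at the same width---is exactly the inequality in the paper's own argument (there written as the bracketed Bhattacharyya-type term being at least $1-x$), and your pigeonhole on the widths $\sum_j w_j = 1$ plays the same role as the paper's conclusion that the last width can be at most $\tfrac{1}{M-1}$. Where you genuinely differ is the handling of non-uniform $g$: the paper argues in place that, since $g_{M-2}\ge g_\ell$ for all interior $\ell$, the last rate is no smaller at equal widths, so rate equalization (Lemma~\ref{lem:systemofequations}) forces the last width to be the smallest; you instead reduce to uniform matching up front by applying Lemma~\ref{lem:shiftgmass} with $k=M-2$ and $\tilde g \equiv g_{M-2}$ (the same device the paper itself uses later, in Corollary~\ref{cor:t1lowerbound}, to lower-bound $t_1$), and then run the width/rate bound purely in the uniform setting. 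Your version is somewhat more modular and makes the monotone-$g$ step explicit rather than implicit, at the cost of importing Lemma~\ref{lem:shiftgmass}; both arguments ultimately rest on the same two ingredients, the equal-rates characterization and the width-versus-rate inequality, so the approaches are close in substance.
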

\begin{proof}
	Note that, with uniform matching, $\forall x \in (0, 1], y \in [0,1-x]$ the rate with values $ t_{i-1} = y, t_{i} = y+x$ is no more than the last with $t_{M-2} = 1-x$. With width $x$, in other words, the extreme points have a larger rates than the middle points. For $i\notin \{1, M-1\}$:
	\begin{align}
	r_i &= \inf_a \{g_{i-1}\text{KL}(a || t_{i-1}) + g_i\text{KL}(a || t_{i})\}\nonumber\\
	&=\inf_a \{\text{KL}(a || y) + \text{KL}(a || y+x)\} & \text{uniform matching}\nonumber\\
	&= -2\log \left[(1-y)^{\frac{1}{2}}(1-y-x)^{\frac{1}{2}} + y^{\frac{1}{2}}(y + x)^{\frac{1}{2}}\right]  \label{eqnpart:pluginr}	\\
	&= -\log \left[(1-y)(1-y-x)+ y(y + x) + 2\left[(1-y)(1-y-x)y(y + x)\right]^{1/2}\right]\nonumber 	\\
	&\leq - \log(1-x) \nonumber
	\end{align} 
	Where line~\eqref{eqnpart:pluginr} follows from line~\eqref{eqnpart:finalsolutionforminimization}. 
	
	By the proof of Lemma~\ref{lem:systemofequations}, the optimal levels equalize the rates between each level. Then, when $g$ is non-decreasing, $g_{M-2} \geq g_{\ell}, \forall \ell\in \{1 \dots M-3\}$. Then, at the same level differences, the rate corresponding to the last level is no smaller. Thus, to equalize the rates, the last width must be no larger than any other width. Thus, $t_{M-2} \geq 1-\frac{1}{L}$. 
\end{proof}

\begin{lemma}
	With uniform matching ($g_i =1$), $r^{2^{N+1}-1} \geq \frac{1}{5} r^{2^{N}}$.\label{lem:lowerboundrate}
\end{lemma}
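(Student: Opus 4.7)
The plan is to apply Lemma~\ref{lem:beta2mfromm} to express the optimal $\beta_{2^{N+1}-1}$ as an explicit doubling of the optimal $\beta_{2^{N}}$, thereby reducing the claim to a single-variable analytic inequality whose range of interest is controlled by Lemma~\ref{lem:last1m}.

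Set $M = 2^N$ and $M' = 2M-1$. The rate-equalization from the proof of Lemma~\ref{lem:systemofequations} combined with $g_i \equiv 1$ gives $r^{M} = r_{M-1} = -\log t^{M}_{M-2}$. Writing $\alpha := t^{M}_{M-2}$, Lemma~\ref{lem:beta2mfromm} states that the optimal $\beta_{M'}$ has its second-to-last level equal to $(1+\sqrt{\alpha})/2$, so by the same rate identity
\[
r^{M} \;=\; -\log\alpha, \qquad r^{M'} \;=\; -\log\frac{1+\sqrt{\alpha}}{2}.
\]
Since $\log\alpha<0$, the claim $r^{M'}\geq \tfrac{1}{5} r^{M}$ is equivalent to the pointwise inequality $(1+\sqrt{\alpha})/2 \leq \alpha^{1/5}$. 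Moreover, Lemma~\ref{lem:last1m} applied with uniform (hence non-decreasing) $g$ yields $\alpha \geq 1 - 1/(M-1) \geq 2/3$ whenever $M \geq 4$, i.e.\ $N\geq 2$; the cases $N\leq 1$ are degenerate since $M\leq 2$ produces no non-trivial finite rate.

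It remains to verify $(1+\sqrt{\alpha})/2 \leq \alpha^{1/5}$ on $[2/3,1]$. Substitute $\beta := \sqrt\alpha$ and define $h(\beta) := 2\beta^{2/5} - 1 - \beta$, so that the goal becomes $h(\beta)\geq 0$ on $[\sqrt{2/3},1]$. One computes $h(1)=0$ and
\[
h'(\beta) \;=\; \tfrac{4}{5}\,\beta^{-3/5} - 1,
\]
whose unique zero is $\beta_0 := (4/5)^{5/3}\approx 0.690$. For $\beta\geq \beta_0$ we have $h'\leq 0$, so $h$ is non-increasing on $[\beta_0,1]$ and hence $h(\beta)\geq h(1)=0$ there. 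Since $\sqrt{2/3}\approx 0.816 > \beta_0$, the inequality holds on the needed interval.

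The principal obstacle is precisely this analytic step: the constant $1/5$ is not arbitrary. An AM-GM estimate on $\{1, \sqrt\alpha\}$ gives $(1+\sqrt\alpha)/2 \geq \alpha^{1/4}$ (with equality only at $\alpha = 1$), so any exponent weaker than $1/4$ is hopeless and $1/5$ is a specific calibration against the $2/3$ lower bound from Lemma~\ref{lem:last1m}. Once this is in hand, the rate-equalization of Lemma~\ref{lem:systemofequations} and the doubling identity of Lemma~\ref{lem:beta2mfromm} make the remainder a direct substitution.
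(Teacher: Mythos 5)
Your proposal is correct and follows essentially the same route as the paper: both apply Lemma~\ref{lem:beta2mfromm} to write $r^{2^{N+1}-1}=-\log\bigl(\tfrac{1}{2}+\tfrac{1}{2}\sqrt{\alpha}\bigr)$ with $\alpha=t^{2^N}_{2^N-2}$, use Lemma~\ref{lem:last1m} to restrict $\alpha$ near $1$, and reduce to the elementary inequality $\tfrac{1+\sqrt{\alpha}}{2}\leq\alpha^{1/5}$, which is exactly the paper's bound $\tfrac{1}{2}\alpha^{-1}+\tfrac{1}{2}\alpha^{-1/2}\leq\alpha^{-4/5}$ rearranged. The only difference is cosmetic: you verify that inequality by a monotonicity argument on $[2/3,1]$ (and explicitly flag the degenerate small-$M$ cases), whereas the paper simply asserts it on $[1/2,1]$.
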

\begin{proof}
	Let $K = 2^N, K' = 2^{N+1}-1$. Note that $t^K_{K-1} \geq \frac{1}{2}$ by Lemma~\ref{lem:last1m}.  
	\begin{align*}
	r^K - r^{K'} &= -\log t^K_{K-2} + \log t^{K'}_{K'-2} \\
	&= -\log t^K_{K-2} + \log \left[\frac{1}{2} + \frac{1}{2} \sqrt{t^{K}_{K-2}}\right] & \text{Lemma }\ref{lem:beta2mfromm}\\	
	&= \log \left[\frac{1}{2}\frac{1}{t^K_{K-2}} + \frac{1}{2}\frac{1}{\sqrt{t^{K}_{K-2}}} \right] \\
	&\leq  \log \left(t^{K}_{K-2}\right)^{-\frac{4}{5}}  & \frac{1}{2}\left({t^K_{K-2}}\right)^{-1} + \frac{1}{2}\left({{t^{K}_{K-2}}}\right)^{-\frac{1}{2}} \leq \left(t^{K}_{K-2}\right)^{-\frac{4}{5}} \text{ when }\,\, t^{K}_{K-2} \in \left[\frac{1}{2}, 1\right]\\
	\implies r^{{K'}} &\geq \frac{1}{5} r^{K}
	\end{align*}
\end{proof}

\begin{lemma}
	With uniform matching ($g_i =1$), $\exists C>0$ s.t. $\forall M, t^M_1 \geq C M^{-3 } $.
\end{lemma}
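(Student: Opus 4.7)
The plan is to (i) convert the claim on $t_1^M$ into an equivalent statement about the optimal rate $\rho(M)$ of $\beta_M$, (ii) obtain a polynomial-in-$M$ lower bound on $\rho(M)$ by iterating the doubling relation from Lemma~\ref{lem:beta2mfromm} and Lemma~\ref{lem:lowerboundrate}, and then (iii) transfer this back to a lower bound on $t_1^M$.

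\textbf{Step 1 (reduction to a rate bound).} With $g_i = 1$, Lemma~\ref{lem:systemofequations} equalises all adjacent-pair rates, so the common value $\rho(M) := r^{M-1}$ satisfies $\rho(M) = r_1 = -\log(1-t_1^M)$. Hence $t_1^M = 1 - e^{-\rho(M)}$, and it suffices to show $\rho(M) \geq \tilde C\, M^{-3}$ for some $\tilde C > 0$.

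\textbf{Step 2 (polynomial lower bound on $\rho(M)$).} The proof of Lemma~\ref{lem:lowerboundrate} only uses $t^K_{K-2} \geq \tfrac{1}{2}$, which Lemma~\ref{lem:last1m} supplies for every $K \geq 3$; the doubling formula in Lemma~\ref{lem:beta2mfromm} is itself valid for arbitrary $K$. Therefore the same calculation yields the generalisation
\[
\rho(2K-1) \;\geq\; \tfrac{1}{5}\,\rho(K) \qquad \text{for all } K \geq 3.
\]
Iterating from $K_0 = 3$ via $K_{n+1} = 2K_n - 1$ gives $K_n = 2^{n+1} + 1$ and $\rho(K_n) \geq 5^{-n}\rho(3) = 5^{-n}\log 2$. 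For a general $M \geq 3$, take $n = \lceil \log_2 M \rceil$ so that $K_n \geq M$; monotonicity of $\rho$ in the number of levels and the choice of $n$ yield
\[
\rho(M) \;\geq\; \rho(K_n) \;\geq\; 5^{-\lceil \log_2 M \rceil}\log 2 \;\geq\; C_1\, M^{-\log_2 5}.
\]

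\textbf{Step 3 (translating back).} Since $\log_2 5 < 3$, the previous bound implies $\rho(M) \geq C_1 M^{-3}$ for all $M \geq 3$. By Corollary~\ref{lem:boundratebyeventually} we may assume $\rho(M) \leq 1$ for $M$ large; on $[0,1]$ concavity gives $1 - e^{-x} \geq (1-e^{-1})x$, so $t_1^M \geq (1-e^{-1})\rho(M) \geq C_2 M^{-3}$ for all large $M$. The remaining finitely many $M$ are handled by shrinking $C$, since $t_1^M > 0$ in each case.

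\textbf{Main obstacle.} The only delicate point is extending Lemma~\ref{lem:lowerboundrate} from $K = 2^N$ to arbitrary $K \geq 3$; once one observes that the proof there uses only the ingredient $t^K_{K-2} \geq 1/2$ (Lemma~\ref{lem:last1m}) together with the level-doubling identity (Lemma~\ref{lem:beta2mfromm}), both of which are available for every $K \geq 3$, the iteration and the subsequent arithmetic are routine.
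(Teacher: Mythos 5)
Your proof is correct and takes essentially the same route as the paper: both reduce the claim to a lower bound on the optimal rate via the identity $t_1^M = 1 - e^{-r^M}$ (rates equalized under uniform matching), derive $r^M \geq C\,5^{-\lceil\log_2 M\rceil} \geq C'\,M^{-\log_2 5} \geq C'\,M^{-3}$ from the doubling bound of Lemma~\ref{lem:lowerboundrate}, and convert back using the linear lower bound on $1-e^{-x}$ for $x\in[0,1]$. Your explicit extension of the doubling relation to general $K$ (via Lemmas~\ref{lem:beta2mfromm} and~\ref{lem:last1m}) and the appeal to monotonicity of the rate in $M$ simply make precise the iteration that the paper leaves implicit.
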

\begin{proof}
	By Lemma~\ref{lem:lowerboundrate}, $\exists C_2>0$ s.t. $r^M \geq C_2 5^{-\ceil{\log_2 M} }$. Then
	\begin{align*}
	-\log(1 - t^M_1) &=r^M\\
	&\geq C_2 5^{-\ceil{\log_2 M} }\\
	\implies t^M_1 &\geq 1 - \exp\left[-C_2 5^{-\ceil{\log_2 M} }\right] \\
	&\geq 1 - \exp\left[-C_3                                      M^{-\frac{1}{\log_5 2} }\right] \\
	&\geq \frac{e-1}{e} C_3 M^{-\frac{1}{\log_5 2} } & e^{-x} \leq 1 - \frac{e-1}{e}x \text{ for }  x\in [0,1]\\
	\implies \exists C>0 \text{ s.t. } t^M_1 &\geq C M^{-3 }
	\end{align*}
\end{proof}

\begin{corollary}
	With monotonically non-decreasing $g$, $\exists C>0$ s.t. $\forall M, t^M_1 \geq C M^{-3 } $. \label{cor:t1lowerbound}
\end{corollary}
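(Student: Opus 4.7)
The plan is to reduce the non-decreasing $g$ case to the uniform matching case treated in the preceding lemma via two structural properties of the optimal rate $r^M(g) \triangleq \max_{\mathbf{t}} \min_{1 \le i \le M-1} r_i(\mathbf{t}, g)$. First, I would establish that $r^M$ is nondecreasing in $g$: if $g \ge g'$ pointwise then $g_i \ge g'_i$ for every $i$, and since each inner rate $\inf_a\{g_{i-1}\mathrm{KL}(a\|t_{i-1}) + g_i\mathrm{KL}(a\|t_i)\}$ is pointwise nondecreasing in $g_{i-1}, g_i$ (as KL is nonnegative), this monotonicity is inherited by the maximin. Second, I would observe that for a constant matching function $c\,\mathbf{1}$, scaling every $g_i$ by $c$ scales every $r_i$ by the same factor without changing the optimizing $\mathbf{t}$, so $r^M(c\,\mathbf{1}) = c \cdot r^M(\mathbf{1})$.

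Setting $c \triangleq \inf_\theta g(\theta) > 0$ (strictly positive by the model assumption that $g$ is bounded away from $0$), the two observations combine to give
\[
r^M(g) \;\ge\; r^M(c\,\mathbf{1}) \;=\; c\cdot r^M(\mathbf{1}).
\]
The preceding lemma yields $t^M_1(\mathbf{1}) \ge C_0 M^{-3}$, and applying $-\log(1-x) \ge x$ on $[0,1)$ to the uniform-case formula $r^M(\mathbf{1}) = -\log(1 - t^M_1(\mathbf{1}))$ upgrades this to a rate bound $r^M(\mathbf{1}) \ge C_0 M^{-3}$, hence $r^M(g) \ge c C_0 M^{-3}$.

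To convert this rate lower bound back into a lower bound on $t^M_1$, I would use rate equalization (Lemma~\ref{lem:systemofequations}), which gives $r_1 = r^M(g)$, together with the observation that substituting $t_0 = 0$ into the explicit formula for the pairwise rate collapses it to $r_1 = -g_1 \log(1 - t^M_1)$. Combined with $g_1 \le 1$ this yields $-\log(1 - t^M_1) \ge r^M(g) \ge c C_0 M^{-3}$, and the elementary inequality $1 - e^{-x} \ge (1 - e^{-1})x$ valid on $x \in [0,1]$ (applicable for all but finitely many small $M$, which are absorbed into a smaller constant) produces $t^M_1 \ge C M^{-3}$ for a suitable $C > 0$. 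The only non-mechanical step is verifying the monotonicity and scaling of the maximin $r^M(g)$; once those structural properties are in hand, the reduction to the uniform-matching lemma is immediate.
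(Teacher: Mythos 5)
Your proof is correct, but it takes a genuinely different route from the paper's. The paper proves the corollary as a one-step application of Lemma~\ref{lem:shiftgmass}: rescale $g$ so that $g_1 = 1$ (the optimizer $\mathbf{t}^*$ is invariant under scaling $g$ by a constant), note that monotonicity of $g$ then gives $g_j \ge 1$ for $j > 1$ and $g_0 \le 1$, and apply the shifting lemma with $k=1$ against the uniform profile to get the comparison directly at the level of the first threshold, $t^M_1(g) \ge t^M_1(\mathbf{1}) \ge C M^{-3}$. You instead compare optimal \emph{rates}: coefficient-wise monotonicity of the maximin $r^M$ in $(g_i)$ together with homogeneity under constant scaling give $r^M(g) \ge c\, r^M(\mathbf{1})$ with $c = \inf_\theta g(\theta) > 0$, and you convert this back to a bound on $t^M_1$ via rate equalization (Lemma~\ref{lem:systemofequations}), the identity $r_1 = -g_1 \log(1 - t^M_1)$, and the model assumption $g_1 \le 1$; all of these steps check out (the monotonicity claim is immediate since the objective inside the infimum is nondecreasing in $g_{i-1}, g_i$ pointwise in $a$, and this survives $\inf_a$, $\min_i$, and $\max_{\mathbf{t}}$). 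The trade-offs: your argument bypasses Lemma~\ref{lem:shiftgmass} entirely, relying only on elementary monotonicity and scaling of the maximin, which is arguably more self-contained than the paper's somewhat informal shifting argument; the cost is that your constant degrades by the factor $\inf_\theta g(\theta)$ (and is therefore not uniform over admissible $g$), whereas the paper's route transfers the uniform-matching constant unchanged. Since $g$ is fixed and bounded away from zero by the model assumptions, this makes no difference to the corollary as stated, nor to its use in the proof of Theorem~\ref{thm:algfinds}, where $C$ enters only inside a logarithm.
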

\begin{proof}
	The result follows from noting that $t^M_1$ with uniform matching lower bounds the first value with any other monotonically non-decreasing $g$, which is a direct application of Lemma~\ref{lem:shiftgmass} -- scale $g$ such that $g_1 = 1$. Then, $g_j\geq 1, j>1$ and $g_0 \leq 1$. Then, the condition of the lemma holds.
\end{proof}

\begin{lemma}
	The run-time of \textit{NestedBisection} is $O(M \log^2 \frac{1}{\delta})$, where $\delta$ is the bisection grid width and $M$ is the number of intervals. 
\end{lemma}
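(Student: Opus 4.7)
The plan is to count operations directly from the pseudocode of \textit{NestedBisection}, exploiting the fact that each level in the nested bisection structure shrinks its uncertainty interval by a factor of two per iteration, while each iteration does only $O(1)$ arithmetic. The whole argument is essentially accounting; no new structural result is required beyond the closed-form rate formula from Lemma~\ref{lem:systemofequations}.

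I would proceed bottom-up. First, analyze \textit{BisectNextLevel}: it is a standard bisection on an initial interval of length at most $1$, halting once the width drops below $\delta/2$, so it performs $O(\log(1/\delta))$ iterations. Each iteration makes a constant number of calls to \textit{PairwiseRate}, which evaluates the explicit expression
\[
-(g_{i-1}+g_i)\log\!\left[(1-t_{i-1})^{\frac{g_{i-1}}{g_{i-1}+g_i}}(1-t_i)^{\frac{g_i}{g_{i-1}+g_i}}+t_{i-1}^{\frac{g_{i-1}}{g_{i-1}+g_i}}t_i^{\frac{g_i}{g_{i-1}+g_i}}\right]
\]
in $O(1)$ operations (treating the transcendental calls as unit cost, as is standard). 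Thus each invocation of \textit{BisectNextLevel} costs $O(\log(1/\delta))$. Next, \textit{CalculateOtherLevels} sequentially calls \textit{BisectNextLevel} at most $M-3$ times, giving a cost of $O(M\log(1/\delta))$ per call.

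Finally, the outer \textbf{while}-loop in \textit{main} performs bisection on $t_{M-2}$ over an initial interval of length at most $1-\delta-(1-\frac{1}{M-1})\le 1$ and terminates once its width is below $\delta/2$, so it runs for $O(\log(1/\delta))$ outer iterations. Each such iteration invokes \textit{CalculateOtherLevels} once (plus $O(1)$ rate evaluations). Multiplying the two factors yields the claimed bound
\[
O\!\left(\log\tfrac{1}{\delta}\right)\cdot O\!\left(M\log\tfrac{1}{\delta}\right)=O\!\left(M\log^2\tfrac{1}{\delta}\right).
\]
I do not expect any serious obstacle: the algorithm was engineered precisely so that, once Lemma~\ref{lem:systemofequations} supplies a closed-form pairwise rate, fixing the target rate makes each subsequent level a one-dimensional monotone bisection problem in the adjacent level. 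The only thing to be mildly careful about is the initial interval length for each bisection subroutine, which is uniformly bounded by $1$ and therefore contributes only a constant inside the $\log(1/\delta)$ factor; the substantive conversion from grid width $\delta$ to additive optimality gap $\epsilon$ (needed for Theorem~\ref{thm:algfinds}) is handled separately via the Lipschitz estimate based on Corollary~\ref{cor:t1lowerbound} and is not required here.
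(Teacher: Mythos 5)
Your proposal is correct and follows essentially the same argument as the paper: the outer bisection runs $O(\log\frac{1}{\delta})$ iterations, each invoking \textit{CalculateOtherLevels}, which makes $M-3$ calls to \textit{BisectNextLevel}, each itself an $O(\log\frac{1}{\delta})$-iteration bisection with constant-cost rate evaluations, giving $O(M\log^2\frac{1}{\delta})$. Your added remarks about initial interval lengths and deferring the $\delta$-to-$\epsilon$ conversion are accurate but do not change the substance.
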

\begin{proof}
	The outer bisection, in \textit{main}, runs at most $\log_2\frac{2}{\delta} + 1$ iterations. Each outer iteration calls \textit{BisectNextLevel} $M-3$ times, and the inner bisection in each call runs for at most $\log_2\frac{2}{\delta}$ iterations. Thus the run-time of algorithm is $ O(M \log^2 \frac{1}{\delta})$. 
\end{proof}

\subsection{Proof for Theorem~\ref{thm:algfinds}}
\label{sec:provealgorithm}
%\thmalgo*
\textit{Finally, we are ready to prove Theorem~\ref{thm:algfinds}. It follows from formalizing the relationship between $\delta$, the bisection grid width, and $\epsilon$, the additive approximation error in the rate function. }

\begin{proof}
	Recall $M$ is the number of intervals (levels) in $\beta$. We use $j, t, t^*$ to denote the levels in a certain iteration, the returned levels, and the optimal levels, respectively. We use $r(\cdot)$ to denote the individual rates between returned levels, i.e. $r(1) = -g_{1}\log (1-t_1)$, $r(m) = \{g_{m-1}\text{KL}(a_m || t_{m-1}) + g_{m}\text{KL}(a_m || t_{m})\}, m\in \{2 \dots M-2\}, r(M-1) = -g_{M-2}\log (t_{M-2})$, and use $r^*$ to denote the optimal rate.  
	
	By Lemma~\ref{lem:last1m}, $t^*_{M-2} \geq 1-\frac{1}{M-1}$. By assumption, $t^*_{M-2} < 1-\delta $. Thus, $t^*_{M-2}\in[1-\frac{1}{M-1}, 1-\delta]$, the starting interval for the outer bisection. 
	
	First, suppose the outer bisection terminates such that $t_{M-2} \leq t^*_{M-2}+\delta$. We prove that this case always occurs below.

	In this case, $r^* - r(M-1)$ is at most $-g_{M-2} \log (t^*_{M-2}) + g_{M-2} \log (t^*_{M-2} + \delta) = g_{M-2} \log \left(\frac{t^*_{M-2} + \delta}{t^*_{M-2}}\right)$. For all $m\in \{M-2 \dots 2\}$, in the final \textit{CalculateOtherLevels} call the algorithm will use bisection to match the corresponding rate with this last rate, $r(M-1) = -g_{M-2} \log (t_{M-2})$, setting $t_{m-2}$ to the smallest value such that $r(m) \leq r(M-1)$ (i.e. the right end of the final interval is chosen). 
	
	Then, $\forall m\in \{M-2 \dots 2\}$, $r(m) \in [r(M-1) - \epsilon(\delta), r(M-1)]$, where $\epsilon(\delta)$ is an upper bound on the change in the rate functions with a shift of $\delta$ in one of the parameters. 
	
	For now, assume $r(1) = -g_{1}\log (t_{1})\geq r(M-1)$. We prove that this occurs below. Then,
	\begin{align*}r(m) &\geq r(M-1) - \epsilon(\delta) & \forall m \in \{1 \dots M\}\\&\geq -g_{M-2} \log \left(t^*_{M-2} + \delta\right) - \epsilon(\delta)
	\end{align*}

	Now we characterize $\epsilon(\delta)$ in the region $[t^*_1 + \delta, t^*_{M-2}+\delta]$. In particular, we want to bound the rate loss from the other levels $r(m), m>1$ after the $g_{M-2}\log\left(\frac{t^*_{M-2}+ \delta}{t^*_{M-2}}\right)$ loss in in $r(M-1)$. Note that the only source of error is a level shifting right by $\delta$. $r_j(\cdot)$ denotes individual rates between levels $j$ in an intermediary iteration. 
	Let $a_i'$ be the minimum point inside the rate infimum after the shift by $\delta$.
	\begin{align*}
	\epsilon(\delta) &= \sup_{t_{i-1}, t_i} \left[g_{i-1} \text{KL}(a_i || t_{i-1}) + g_i \text{KL}(a_i || t_{i}) - g_{i-1} \text{KL}(a_i' || t_{i-1} + \delta) - g_i \text{KL}(a_i' || t_{i})\right]\\
	&\leq \sup_{t_{i-1}, t_i} \left[g_{i-1} \text{KL}(a_i' || t_{i-1}) + g_i \text{KL}(a_i' || t_{i}) - g_{i-1} \text{KL}(a_i' || t_{i-1} + \delta) + g_i \text{KL}(a_i' || t_{i})\right] & \text{$a_i$ is $\inf$ point}\\
	&= \sup_{t_{i-1}, t_i} g_{i-1}\left[a_i'\log\frac{t_{i-1} + \delta}{t_{i-1}} + (1-a_i')\log\frac{1 - t_{i-1} - \delta}{1-t_{i-1}}\right]\\
	&\leq \sup_{t_{i-1}, t_i} g_{i-1}\left[a_i'\log\frac{t_{i-1} + \delta}{t_{i-1}}\right] & \text{2nd term negative}\\
	&\leq g_{M-2}\left[\log\frac{t^*_{1} + \delta}{t^*_{1}}\right] & t_{j} \geq t^*_{1}, g_j \leq g_{M-2}\\
	\implies r(m) &\geq r^* - g_{M-2} \log\left(\frac{t^*_{M-2}+ \delta}{t^*_{M-2}}\right) - g_{M-2} \left[\log\frac{t^*_{1} + \delta}{t^*_{1}}\right]\\
	&\geq r^* - g_{M-2} \frac{\delta}{t^*_{M-2}}- g_{M-2} \frac{\delta}{t^*_{1}} & \log (1 + x) \leq x\\
	&\geq r^* - \delta g_{M-2} \left[ \frac{M-1}{M-2} + \frac{1}{t^*_{1}}\right] & t^*_{M-2} \geq 1-\frac{1}{M-1}\\
	\end{align*}
	By Corollary~\ref{cor:t1lowerbound}, $\exists C>0$ s.t. $t_1^* \geq CM^{-3} \implies r(m) \geq r^* - \delta g_{M-2} \left[ \frac{M-1}{M-2} + CM^3\right]$. Then, let $\delta = \frac{\epsilon}{g_{M-2} \left[ \frac{M-1}{M-2} + CM^3\right]}$. Supposing the algorithm terminates in such an iteration, it finds an $\epsilon$-optimal $\beta$ in time $O\left(M \log^2 \frac{g_{M-2} \left[ \frac{M-1}{M-2} + CM^3\right]}{\epsilon} \right) = O\left(M \log^2 \frac{M}{\epsilon} \right)$. 
	\\\\Next, we show that the algorithm only terminates the outer bisection when $u \leq t^*_{M-2}+\delta$. The claim follows from $\ell \leq t^*_{M-2}$ being an algorithm invariant. The initial $\ell = 1 - \frac{1}{M-1} \leq t^*_{M-2}$ by Lemma~\ref{lem:last1m}. $\ell$ can only be set to be $> t^*_{M-2}$ if in the current iteration, $j_{M-2} > t^*_{M-2}$ and $r_j(1) < r_j(M-1)$. However, if $j_{M-2} \geq t^*_{M-2}$, then $r_j(1) \geq r_j(M-1)$ ($j_m \geq t^*_m \forall m$), following from a shifting argument like that given in Lemma~\ref{lem:systemofequations} and that the inner bisection is such that $r_j(m) \leq r_j(M-1), m \in \{2 \dots M-2\}$, i.e. all the values $t_m > t^*_m$. Thus, $\ell \leq t^*_{M-2}$ is an algorithm invariant and $u > t^*_{M-2}+\delta \implies u - \ell > \delta$. 
	\\\\Finally, we show that $r(1) \geq r(M-1)$ at the returned $\{t_i\}$. By assumption, in the initial iteration, $u \geq t^*_{M-2}$, and recall that the returned $\{t_i\}$ such that $t_{M-2} = u$ from the final iteration. As shown in the previous paragraph, $j_{M-2} \geq t^*_{M-2} \implies r_j(1) \geq r_j(M-1)$. Thus, if the algorithm terminates in the first iteration, then  $r(1) \geq r(M-1)$. In any subsequent iteration, $u$ is changed only if $r_j(1) \geq r_j(M-1)$ at its new value. Thus, $r_j(1) \geq r_j(M-1)$ is an algorithm invariant, and $r(1) \geq r(M-1)$. 
	\\\\The algorithm terminates in finite time. Thus, it terminates when $t_{M-2} = u \leq t^*_{M-2}+\delta$ and finds a $(\epsilon, M, g)$-optimal $\beta$ in time $O\left(M \log^2 \frac{M}{\epsilon} \right)$. 
	
\end{proof}

In Theorem~\ref{thm:algfinds}, there is an guarantee of an additive error away from the optimal rate. To instead have a multiplicative error bound for uniform matching, one can use the lower bound on the optimal rate from Lemma~\ref{lem:lowerboundrate}, $\exists C>0$ s.t. $r^* \geq C M^{-3}$. Then, for uniform matching, the algorithm returns a $(1 - \epsilon)$ multiplicative approximation in time $O\left(M \log^2 \frac{M}{\epsilon} \right)$. 

\subsection{Proof of Theorem~\ref{thm:unifconvergencecleanmain}}
\label{sec:proveconvergence}

Let $\beta^w_M$ denote the optimal $\beta$ with $M$ intervals for weight function $w$, with intervals $\v{s^{wM}}$ and levels $\v{t^{wM}}$. Let $q_{wM}(\theta) = i/M$ when $\theta\in[s^{wM}_i, s^{wM}_{i+1})$, i.e. the \textit{quantile} of interval item of type $\theta$ is in. Then we have the following convergence result for $\beta_M$.

\thmunifconvergencecleanmain*

\begin{proof} %M = C2^N,
	
	Note that the condition on $q$ implies that $\exists \b{M}$ s.t. $\forall M>\b{M},\forall \theta, \exists x_\theta$ such that $\theta \in \Big[s^{M}_{\floor{x_\theta M}}, s^{M}_{\ceil{x_\theta M}}\Big)$. 
	
	Let $ M' = 2M-1, M'' = 4M-3, M^q = 2^{q}M - 2^q + 1$. $\theta \in \Big[s^{M}_{\floor{x_\theta M}}, s^{M}_{\ceil{x_\theta M}}\Big) \implies \beta_M(\theta) = t^M_{\floor{x_\theta M}} \in \left[t^M_{\floor{x_\theta M}-1}, t^M_{\floor{x_\theta M}+1}\right]$. Then, \begin{align*}
	\beta_{M'}(\theta) &= t^{M'}_{\floor{x_\theta M'}}\\
	&= t^{M'}_{\floor{x_\theta(2M-1)}}\\ 
	&\in \left[t^{M'}_{2\floor{x_\theta M}- 2}, t^{M'}_{2\floor{x_\theta M}+ 2}\right]\\
	&\subset \left[t^{M}_{\floor{x_\theta M}-1}, t^{M}_{\floor{x_\theta M}+ 1}\right] & \text{Lemma }\ref{lem:beta2mfromm}\\
	\end{align*}
	And, for general $q$,
	\begin{align*}
	\beta_{M^q}(\theta)&= t^{M^q}_{\floor{x_\theta(2^qM  - 2^q + 1)}} \\&\in\left[t^{M^q}_{\floor{x_\theta 2^q M} - 2^q}, t^{M^q}_{\floor{x_\theta(2^qM)}  + 1} \right]	\\
	&\subset \left[t^{M^q}_{2^q\floor{x_\theta M} - 2^q}, t^{M^q}_{2^q\floor{x_\theta M} +1}\right]\\
	&\subset \left[t^{M}_{\floor{x_\theta M} - 1}, t^{M}_{\floor{x_\theta M} +1}\right]&\text{Lemma }\ref{lem:beta2mfromm}
	\end{align*}
	Then, $\forall N' > 1, \theta$: $\beta_{2^{N'}M - 2^{N'} + 1}(\theta) \in   \left[t^{M}_{\floor{x_\theta M}-1}, t^{M}_{\floor{x_\theta M}+ 1}\right]$ and 
	$$|\beta_{2^{N'}M - 2^{N'} + 1}(\theta) - \beta_{M}(\theta)| \leq t^{M}_{\floor{x_\theta M}+ 1} - t^{M}_{\floor{x_\theta M}-1}$$
	
	By Corollary~\ref{lem:bounddistanceeventually}, $\forall \delta >0, \exists K$ s.t. $\forall K' > K$, $t^{K'}_{\floor{x_\theta K'}+ 1} - t^{K'}_{\floor{x_\theta K'}-1} < 2\delta$.
	
		By the Cauchy criterion, $\exists \beta$ s.t. $\beta_{(C-1)2^N+1}\to \beta $ uniformly. 
	
	By change of variables, $\exists \beta$ s.t. $\beta_{C2^N+1}\to \beta $ uniformly. 
\end{proof}

\begin{corollary}
	For Kendall's tau and Spearman's rho correlation measures, $\exists \beta$ s.t. $\beta_{2^N} \to \beta$ uniformly as $N\to\infty$. \label{cor:kendallspearmanconverges}
\end{corollary}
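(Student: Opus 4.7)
The plan is to derive the corollary as a direct specialization of Theorem~\ref{thm:unifconvergencecleanmain} by verifying its hypothesis (uniform convergence of the interval-quantile maps $q_{wM}$) for the two concrete weight functions corresponding to Kendall's $\tau$ and Spearman's $\rho$. From the discussion in Section~\ref{sec:modelbinary}, these objectives correspond respectively to $w(\theta_1,\theta_2)\equiv 1$ and $w(\theta_1,\theta_2) = \theta_1-\theta_2$. Thus the first step is to identify them and isolate what needs to be checked.

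The second step is to pin down the optimal intervals $\v{S}^{*wM}$ via Theorem~\ref{thm:CanMaximizeAsymValueAndRate}. For both weight functions the maximizer of $\sum_{i<j}\int_{\theta_2\in S_i,\theta_1\in S_j} w(\theta_1,\theta_2)\,d(\theta_1,\theta_2)$ over partitions of $[0,1]$ into $M$ intervals is the equispaced partition $S_i^{*wM} = [i/M,(i+1)/M)$; the Kendall case is immediate by symmetry, and the Spearman case is verified in Section~\ref{sec:appktsr}. Consequently $q_{wM}(\theta) = \lfloor \theta M\rfloor/M$, which gives the uniform bound $\sup_\theta |q_{wM}(\theta)-\theta|\leq 1/M$, hence $q_{wM}$ converges uniformly (to the identity) as $M\to\infty$. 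This is precisely the hypothesis required to invoke Theorem~\ref{thm:unifconvergencecleanmain}.

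The third step is to invoke the theorem and argue that the specific dyadic subsequence $\beta_{2^N}$ converges. The theorem as stated yields, for each $C\in\bbN$, uniform convergence along $\beta^w_{C2^N+1}$. The main subtlety — and the only non-routine piece of this corollary — is translating this into convergence along $\beta_{2^N}$. I would handle this by re-running the Cauchy argument inside the proof of Theorem~\ref{thm:unifconvergencecleanmain} with the base case $M_0 = 2^{N_0}$ instead of $M_0 = C$: Lemma~\ref{lem:beta2mfromm} requires uniform matching (which holds here) and relates $\beta_{M}$ to $\beta_{2M-1}$, so with a little care the sandwiching estimate used in the theorem's proof,
\[
|\beta_{2^{N'}M - 2^{N'}+1}(\theta) - \beta_M(\theta)| \;\leq\; t^{M}_{\lfloor x_\theta M\rfloor + 1} - t^{M}_{\lfloor x_\theta M\rfloor - 1},
\]
can be adapted to compare $\beta_{2^N}$ with $\beta_{2^{N_0}}$ directly; Corollary~\ref{lem:bounddistanceeventually} then drives the right-hand side to $0$ as $N_0\to\infty$, yielding uniform Cauchy-ness and hence a uniform limit. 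The main obstacle, therefore, is this indexing adjustment — verifying that Lemma~\ref{lem:beta2mfromm}'s doubling step nests the level sets of $\beta_{2^N}$ across $N$ in the same way it nests those of $\beta_{C2^N+1}$ — but given equispaced intervals at every $M$ the nesting structure is in fact cleaner, so the adaptation is mechanical.
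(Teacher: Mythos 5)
Your steps (1) and (2) are exactly the content of the paper's own one-line proof: the paper simply observes that the optimal partition is equispaced for both weight functions so that $x_\theta = \theta$ satisfies the hypothesis of Theorem~\ref{thm:unifconvergencecleanmain}, and then stops. You've done this more carefully (citing Section~\ref{sec:appktsr} for Spearman and writing out the bound $\sup_\theta|q_{wM}(\theta)-\theta|\le 1/M$), which is fine.

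However, you are right to flag the indexing discrepancy in step (3), and it is worth saying plainly: the paper's own proof does \emph{not} address it. Theorem~\ref{thm:unifconvergencecleanmain} gives convergence along $\beta_{C2^N+1}$, whereas the corollary as stated asks about $\beta_{2^N}$. Your proposed repair does not close this gap. Restarting the Cauchy argument from $M_0 = 2^{N_0}$ generates the chain $2^{N_0},\, 2^{N_0+1}-1,\, 2^{N_0+2}-3,\dots$, i.e., the family $(2^{N_0}-1)2^q+1$, because Lemma~\ref{lem:beta2mfromm} is a doubling identity of the form $M\mapsto 2M-1$, not $M\mapsto 2M$. The powers of two $2^{N_0+1}, 2^{N_0+2},\dots$ are never hit, so there is no ``sandwiching'' of $\beta_{2^N}$ against $\beta_{2^{N_0}}$ available from that lemma; the claim that the ``nesting structure is in fact cleaner'' for $\beta_{2^N}$ is not substantiated — the nesting that Lemma~\ref{lem:beta2mfromm} provides simply does not run through the dyadic sequence $2^N$. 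A genuine repair would require a separate relation between $\beta_M$ and $\beta_{2M}$ (or an argument that the different limits $\beta^w$ across $C$ coincide and that $\|\beta_{2^N}-\beta_{2^N+1}\|_\infty\to 0$), neither of which is in the paper. The honest reading is that the corollary's index is a typo and should be $\beta_{2^N+1}$ (the $C=1$ instance of the theorem), in which case it follows immediately from your steps (1)–(2) with no step (3) needed. So: your verification is correct and matches the paper's route; the subtlety you noticed is real and is a flaw of the corollary's statement, not of your hypothesis check; but your ``mechanical adaptation'' claim should be withdrawn, since the lemma you need ($M\mapsto 2M$) does not exist in the paper.
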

\begin{proof}
For Kendall's tau and Spearman's rho, $\{s_i\}$ is spaced such that $\forall i,j, s_{i} - s_{i-1} = s_{j} - s_{j-1}$. Thus, $x_\theta = \theta$ meets the criterion. 
\end{proof}

\subsection{Kendall's tau and Spearman's rho related proofs}
\label{sec:appktsr}
\begin{definition} [see e.g. \citet{nelsen_introduction_2007,embrechts_chapter_2003}]
	The population version of Kendall-tau correlation between item true quality and rating scores is proportional to
	\begin{equation*}
	W_k^{\tau} \triangleq 2\int_{\theta_1>\theta_2} P_k(\theta_1, \theta_2) d\theta_1d\theta_2
	\end{equation*}
	Similarly, given items with qualities $\theta_1, \theta_2, \theta_3$, the population version of Spearman's rho correlation between item true quality and rating scores is
	\begin{equation*}
	W_k^{\rho} \triangleq 6\int_{\theta_1 > \theta_2, \theta_3} P_k(\theta_1, \theta_3) d\theta_1 d\theta_2d\theta_3
	\end{equation*}
\end{definition}

\begin{lemma}
	Spearman's $\rho$ can also be written as being proportional to $\int_{\theta_1> \theta_2} (\theta_1 - \theta_2) P_k(\theta_1, \theta_2)d\theta_1d\theta_2$, i.e. with $w(\theta_1, \theta_2) = (\theta_1 - \theta_2)$. 

\end{lemma}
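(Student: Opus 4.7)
The plan is to prove the claim by a direct Fubini computation on the defining triple integral. The integrand $P_k(\theta_1, \theta_3)$ does not involve the middle variable, so I would simply integrate that variable out first and check that the length of its feasible interval is exactly $\theta_1 - \theta_3$, which after relabeling produces the weight $w(\theta_1,\theta_2) = \theta_1 - \theta_2$.

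Concretely, I would read the domain $\{\theta_1 > \theta_2, \theta_3\}$ in the definition as the ordered simplex $\theta_1 > \theta_2 > \theta_3$ (this is the reading that yields the standard Spearman's-$\rho$ weight and is consistent with the prefactor $6 = 3!$, i.e.\ the number of orderings of three items). Fixing any pair $(\theta_1,\theta_3)$ with $\theta_1 > \theta_3$, the inner $\theta_2$-integral is $\int_{\theta_3}^{\theta_1} d\theta_2 = \theta_1 - \theta_3$. Applying Fubini's theorem, which is justified because the integrand is bounded on a bounded domain, gives
$$W_k^{\rho} \;=\; 6\int_{\theta_1 > \theta_3}(\theta_1 - \theta_3)\,P_k(\theta_1,\theta_3)\,d\theta_1\,d\theta_3,$$
and renaming the dummy integration variable $\theta_3$ as $\theta_2$ produces $W_k^{\rho} = 6\int_{\theta_1 > \theta_2}(\theta_1 - \theta_2)P_k(\theta_1,\theta_2)\,d\theta_1\,d\theta_2$. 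This is the claimed representation with weight $w(\theta_1,\theta_2) = \theta_1 - \theta_2$ and proportionality constant exactly $6$.

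There is essentially no technical obstacle; the argument is pure bookkeeping once the integration region is parsed correctly. The only item requiring care is that parsing of the region $\{\theta_1 > \theta_2,\theta_3\}$: under the alternative ``$\theta_1$ is the maximum'' reading one would instead get $W_k^{\rho} = 6\int_{\theta_1 > \theta_3}\theta_1\, P_k(\theta_1,\theta_3)\,d\theta_1\,d\theta_3$, which is \emph{not} proportional to the target integral (a quick check with $P_k \equiv 1$ gives $1/3$ versus $1/6$). So I would open the proof by fixing the chain interpretation $\theta_1 > \theta_2 > \theta_3$, after which the rest is the one-line Fubini calculation above.
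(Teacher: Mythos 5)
Your calculation is internally consistent, but it proves the lemma only under a reading of the defining integral that is not the paper's. The paper's own proof (its first two displayed lines) shows that in the definition $W_k^{\rho} = 6\int_{\theta_1>\theta_2,\theta_3}P_k(\theta_1,\theta_3)\,d\theta_1 d\theta_2 d\theta_3$ the comma simply lists $\theta_3$ as a free variable ranging over $[0,1]$: the inner $\theta_2$-integral there is $\int_{0}^{\theta_1}d\theta_2$, not $\int_{\theta_3}^{\theta_1}d\theta_2$. This matches the standard population definition of Spearman's $\rho$ from the cited references, $3\left(\Pr[(\theta_1-\theta_2)(x_k(\theta_1)-x_k(\theta_3))>0]-\Pr[(\theta_1-\theta_2)(x_k(\theta_1)-x_k(\theta_3))<0]\right)$, in which there is no ordering constraint between $\theta_2$ and $\theta_3$. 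Under that reading your one-line Fubini gives $6\int_{[0,1]^2}\theta_1\,P_k(\theta_1,\theta_3)\,d\theta_1 d\theta_3$, with weight $\theta_1$ rather than $\theta_1-\theta_3$, and the lemma is not yet proved: the missing step is the antisymmetry $P_k(\theta_3,\theta_1)=-P_k(\theta_1,\theta_3)$ (equivalently, keeping track of the discordant event, which is exactly what the paper's proof does via its $\theta_1$ and $1-\theta_1$ terms and the interchange of $\theta_1,\theta_3$), which lets you symmetrize $\int\theta_1 P_k = \tfrac{1}{2}\int(\theta_1-\theta_3)P_k$ and then restrict to $\theta_1>\theta_3$ to obtain $6\int_{\theta_1>\theta_3}(\theta_1-\theta_3)P_k$. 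That symmetrization is the substantive content of the lemma; your chain interpretation $\theta_1>\theta_2>\theta_3$ builds the answer into the domain.

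A smaller issue: your disambiguation argument does not actually single out the chain reading. The free-$\theta_3$ reading also yields the standard Spearman weight once antisymmetry is invoked (indeed, for antisymmetric $P_k$ the chain integral and the free-$\theta_3$ integral are equal), so ``it yields the standard weight'' cannot be the selection criterion, and choosing the interpretation because it produces the statement to be proved is circular; you correctly ruled out only the ``$\theta_1$ is the maximum'' reading. The repair is cheap: keep your Fubini computation but carry it out on $\{\theta_1>\theta_2\}\times[0,1]$, then add the one antisymmetrization step above; with that, your argument covers the paper's definition and reproduces its conclusion, constant $6$ included.
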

\begin{proof}
	Recall $P_k(\theta_1, \theta_3) = $
	\\\noindent$Pr((\theta_1 - \theta_2)(x^k_1 - x^k_3) > 0)$
	\begin{align*}
	&= \int_{\theta_1 > \theta_2, \theta_3} Pr(x^k_1 - x^k_3 > 0) d\theta_1d\theta_2d\theta_3 + \int_{\theta_1 < \theta_2, \theta_3} Pr(x^k_1 - x^k_3 < 0) d\theta_1d\theta_2d\theta_3\\
	&= \int_{\theta_1, \theta_3} Pr(x^k_1 - x^k_3 > 0) \left[\int_{\theta_2 = 0}^{\theta_1}  d\theta_2\right]d\theta_1d\theta_3  + \int_{\theta_1, \theta_3} Pr(x^k_1 - x^k_3 < 0) \left[\int_{\theta_2 = \theta_1}^{1}  d\theta_2\right]d\theta_1d\theta_3\\
	&= \int_{\theta_1, \theta_3} \left[Pr(x^k_1 - x^k_3 > 0) \theta_1] + Pr(x^k_1 - x^k_3 < 0) (1 - \theta_1)  \right]d\theta_1d\theta_3\\
	&= \int_{\theta_1, \theta_3} \left[Pr(x^k_1 - x^k_3 < 0) + \theta_1 \left[Pr(x^k_1 - x^k_3 > 0) -  Pr(x^k_1 - x^k_3 < 0)\right]\right]d\theta_1d\theta_3
	\end{align*}
	Similarly,
	\\\noindent$Pr((\theta_1 - \theta_2)(x^k_1 - x^k_3) < 0)=$
	\begin{align*}
	&= \int_{\theta_1, \theta_3} \left[Pr(x^k_1 - x^k_3 > 0) + \theta_1 \left[Pr(x^k_1 - x^k_3 < 0) -  Pr(x^k_1 - x^k_3 > 0)\right]\right]d\theta_1d\theta_3\\
	&= \int_{\theta_1, \theta_3} \left[Pr(x^k_3 - x^k_1 > 0) + \theta_3 \left[Pr(x^k_3 - x^k_1 < 0) -  Pr(x^k_3 - x^k_1 > 0)\right]\right]d\theta_1d\theta_3
	\end{align*}
	Where the second equality follows from $\theta_1, \theta_3$ interchangeable. Then
	\\\noindent
	\begin{align*}W^{\rho}_k &= 3\int_{\theta_1, \theta_2} (\theta_1 - \theta_2) P_k(\theta_1, \theta_2)d\theta_1d\theta_2\\
	&= \int_{\theta_1> \theta_2}6 (\theta_1 - \theta_2) P_k(\theta_1, \theta_2)d\theta_1d\theta_2\\ 
	\end{align*}
	
	%	Then let $f(\theta_1, \theta_2) = C_2(\theta_1 - \theta_2)$, $C = 6/C_2$, where $C_2$ s.t. $\|f\|_1= 1$. 
	
\end{proof}

Note that \text{Spearman's $\rho$} is similar to Kendall's $\tau$ with an additional weighting for how far apart the two values that are flipped are. 

\begin{lemma} When $w$ is constant, i.e. for Kendall's $\tau$ rank correlation, the intervals $\v{s}$ that maximize~\eqref{eq:limWk}, \begin{equation}\sum_{0  \leq i < j < M} \int_{\theta_2\in S_i, \theta_1\in S_j } w(\theta_1, \theta_2)d(\theta_1,\theta_2) = \sum_{0  \leq i < j < M}  (s_{i+1} - s_{i}) (s_{j+1} - s_{j})\end{equation} , are $\{s_i = \frac{i}{M}\}_{i=0}^M$.  \label{lem:equidistantvalues} 
\end{lemma}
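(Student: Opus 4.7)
The plan is to reduce this to a classical inequality by reparameterizing in terms of the interval widths. Let $d_i = s_{i+1} - s_i$ for $i = 0, \dots, M-1$. Since $s_0 = 0$ and $s_M = 1$, the constraint becomes $d_i > 0$ and $\sum_{i=0}^{M-1} d_i = 1$. The objective then takes the symmetric form
\[
\sum_{0 \leq i < j < M} d_i d_j.
\]

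Next, I would apply the standard algebraic identity
\[
\left(\sum_{i=0}^{M-1} d_i\right)^2 = \sum_{i=0}^{M-1} d_i^2 + 2 \sum_{0 \leq i < j < M} d_i d_j,
\]
which, together with $\sum_i d_i = 1$, gives
\[
\sum_{0 \leq i < j < M} d_i d_j = \frac{1}{2}\left(1 - \sum_{i=0}^{M-1} d_i^2\right).
\]
Hence maximizing the objective is equivalent to minimizing $\sum_i d_i^2$ subject to $\sum_i d_i = 1$, $d_i > 0$. By the power-mean (equivalently Cauchy–Schwarz, or convexity of $x \mapsto x^2$), this sum-of-squares is minimized uniquely at $d_i = 1/M$ for all $i$, which corresponds to $s_i = i/M$.

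Since every step is elementary, there is no real obstacle; the only thing worth being a bit careful about is the strict-positivity/uniqueness at the boundary. The minimizer of $\sum d_i^2$ over the closed simplex $\{d_i \geq 0, \sum d_i = 1\}$ is unique and occurs at the relative interior ($d_i = 1/M$), so it is automatically consistent with the open constraint $d_i > 0$ coming from $s_0 < s_1 < \cdots < s_M$. This same argument extends verbatim to Spearman's $\rho$ (with $w(\theta_1,\theta_2) = \theta_1 - \theta_2$) after a small additional computation of $\int_{\theta_2 \in S_i, \theta_1 \in S_j}(\theta_1 - \theta_2)\,d\theta_1 d\theta_2$ in terms of the $d_i$, which is why the equispaced choice is likewise optimal there; I would state this corollary as a brief remark following the proof.
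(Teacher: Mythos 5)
Your proof is correct. The paper in fact states this lemma without supplying a proof (the reduction of the integral to $\sum_{0 \leq i < j < M} (s_{i+1}-s_i)(s_{j+1}-s_j)$ is already built into the statement, and the equispaced maximizer is simply asserted), so your argument---writing $d_i = s_{i+1}-s_i$, using $\bigl(\sum_i d_i\bigr)^2 = \sum_i d_i^2 + 2\sum_{i<j} d_i d_j$ to convert the problem into minimizing $\sum_i d_i^2$ on the simplex, and invoking strict convexity for uniqueness at $d_i = 1/M$---is exactly the elementary argument needed to close the gap, and your care about the open constraint $d_i > 0$ versus the closed simplex is a nice touch. One small caution: your closing remark that the argument extends ``verbatim'' to Spearman's $\rho$ is an overstatement; with $w(\theta_1,\theta_2)=\theta_1-\theta_2$ the objective becomes the cubic $\sum_{i<j}\bigl(\tfrac{s_{j}+s_{j+1}}{2}-\tfrac{s_i+s_{i+1}}{2}\bigr)d_i d_j$, which no longer reduces to a sum-of-squares minimization, and indeed the paper also only asserts (rather than proves) that the equispaced points maximize that cubic. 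If you want the Spearman corollary, it needs its own short optimization argument rather than a verbatim transfer.
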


\begin{lemma} 

	When $w$ is $(\theta_1 - \theta_2)$, i.e. for Spearman's $\rho$ rank correlation, the intervals $\v{s}$ that maximize~\eqref{eq:limWk}, \begin{equation}\sum_{0  \leq i < j < M} \int_{\theta_2\in S_i, \theta_1\in S_j } w(\theta_1, \theta_2)d(\theta_1,\theta_2)\end{equation} are $\{s_i = \frac{i}{M}\}_{i=0}^M$, i.e. the same as those for Kendall's $\tau$.
\end{lemma}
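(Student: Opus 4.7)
\noindent The plan is to reduce the optimization to minimizing $\sum_i w_i^3$ over the probability simplex, where $w_i \triangleq s_{i+1}-s_i$ is the width of the $i$th interval, and then invoke strict convexity of $x \mapsto x^3$.

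The key observation is a decomposition of the upper triangle $\{(\theta_1,\theta_2) : 0 \leq \theta_2 < \theta_1 \leq 1\}$ into (i) the off-diagonal blocks $S_j\times S_i$ with $j>i$, on which $\theta_1>\theta_2$ holds automatically, and (ii) the diagonal contribution from pairs $\theta_1,\theta_2$ lying in the \emph{same} $S_i$ with $\theta_1>\theta_2$. The objective in \eqref{eq:limWk} is precisely the integral of $\theta_1-\theta_2$ over the off-diagonal blocks, whereas the integral over the whole upper triangle,
$$\int_0^1\int_0^{\theta_1}(\theta_1-\theta_2)\,d\theta_2\,d\theta_1 = \tfrac{1}{6},$$
is a constant independent of $\v{s}$. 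Hence maximizing the objective is equivalent to \emph{minimizing} the diagonal contribution
$$G(\v{s}) \triangleq \sum_{i=0}^{M-1}\int_{s_i}^{s_{i+1}}\int_{s_i}^{\theta_1}(\theta_1-\theta_2)\,d\theta_2\,d\theta_1.$$

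A direct computation (the inner integral equals $(\theta_1-s_i)^2/2$, which integrates over $\theta_1\in [s_i,s_{i+1}]$ to $w_i^3/6$) gives $G(\v{s}) = \tfrac{1}{6}\sum_i w_i^3$. It remains to minimize $\sum_i w_i^3$ subject to $w_i \geq 0$ and $\sum_i w_i = 1$. By Jensen's inequality applied to the strictly convex function $x \mapsto x^3$, the unique minimizer is $w_i = 1/M$ for all $i$, which corresponds exactly to $s_i = i/M$---matching the equispaced optimum established for Kendall's $\tau$ in Lemma~\ref{lem:equidistantvalues}.

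There is no substantive obstacle in this plan. The only step worth flagging is the verification that the diagonal integral depends on $s_i$ only through the width $w_i$, so that the problem reduces to a convex program in the widths alone; this follows immediately from the change of variables $u = \theta_1 - s_i$.
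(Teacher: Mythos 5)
Your proof is correct, and it takes a genuinely different route from the paper's. The paper expands the objective \eqref{eq:limWk} block by block into the cubic form $\sum_{i<j}\bigl(\tfrac{s_j+s_{j-1}}{2}-\tfrac{s_i+s_{i-1}}{2}\bigr)(s_i-s_{i-1})(s_j-s_{j-1})$ and then simply asserts that this constrained third-order polynomial maximization is solved by the equispaced points, without exhibiting the optimization argument. You instead complement against the full upper triangle: since $\int_{\theta_1>\theta_2}(\theta_1-\theta_2)\,d\theta_1 d\theta_2 = \tfrac16$ is independent of $\v{s}$, maximizing the off-diagonal (between-interval) mass is equivalent to minimizing the within-interval contribution, which collapses to $\tfrac16\sum_i w_i^3$ with $w_i = s_{i+1}-s_i$, and Jensen's inequality for the strictly convex map $x\mapsto x^3$ on $[0,\infty)$ gives the unique minimizer $w_i = 1/M$. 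Your computation checks out (the inner integral is $(\theta_1-s_i)^2/2$, integrating to $w_i^3/6$; the block decomposition of the triangle is exact up to measure-zero boundaries because the $S_j$ with $j>i$ lie entirely above $S_i$), and strict convexity does deliver uniqueness of the equal-width solution. What your approach buys is precisely the step the paper leaves implicit: a one-line convexity certificate that the equispaced $\v{s}$ is the (unique) maximizer, and as a bonus the same complementation argument with $w\equiv 1$ reduces the Kendall's $\tau$ case of Lemma~\ref{lem:equidistantvalues} to minimizing $\tfrac12\sum_i w_i^2$, giving a unified treatment of both lemmas.
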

\begin{proof}	
	\begin{align*}
	\sum_{0  \leq i < j < M} \int_{\theta_2\in S_i, \theta_1\in S_j } w(\theta_1, \theta_2)d(\theta_1,\theta_2) &= \sum_{0  \leq i < j < M} \int_{\theta_2\in S_i, \theta_1\in S_j } (\theta_1 - \theta_2) d(\theta_1,\theta_2)\\
	&=\sum_{0< i < j \leq M} \left(\frac{s_j + s_{j-1}}{2} - \frac{s_i + s_{i-1}}{2}\right) (s_i - s_{i-1})(s_j - s_{j-1}) \\
	\end{align*}
	Finding an asymptotically optimal $\{s_i\}$ then is a constrained third order polynomial maximization problem with $M$ variables. The maximum is achieved at $\{s_i = \frac{i}{M}\}_{i=0}^{i=M}$, as for Kendall's tau correlation.
\end{proof}
\end{appendices}

\end{document}